\protected\def\abx@missing#1{%
  \mbox{\reset@font\color{blue}#1}}
\algrenewcommand\algorithmicdo{}
\definecolor{stutt.blue}{RGB}{0,81,158}
\definecolor{stutt.lightblue}{RGB}{0,190,255}
\definecolor{stutt.gray}{RGB}{62, 68, 76}
\definecolor{stutt.darkblue}{RGB}{0,50,98}
\definecolor{Set1.red}{rgb}{0.894117647058824,0.101960784313725,0.109803921568627}
\definecolor{Set1.blue}{rgb}{0.215686274509804,0.494117647058824,0.721568627450980}
\definecolor{Set1.green}{rgb}{0.301960784313725,0.686274509803922,0.290196078431373}
\definecolor{Set1.purple}{rgb}{0.596078431372549,0.305882352941177,0.639215686274510}
\definecolor{Set1.orange}{rgb}{1,0.498039215686275,0}
\definecolor{Set1.yellow}{rgb}{1,1,0.2}
\definecolor{Set1.brown}{rgb}{0.650980392156863,0.337254901960784,0.156862745098039}
\definecolor{Set1.pink}{rgb}{0.968627450980392,0.505882352941176,0.749019607843137}
\definecolor{Set1.grey}{rgb}{0.6,0.6,0.6}
\crefname{assumption}{Assumption}{Assumptions}
\theoremstyle{plain}
\newtheorem{theorem}{Theorem}[section]
\newtheorem{proposition}[theorem]{Proposition}
\newtheorem{lemma}[theorem]{Lemma}
\theoremstyle{definition}
\newtheorem{assumption}[theorem]{Assumption}
\theoremstyle{remark}
\let\Pr\relax
\DeclareMathOperator{\Pr}{\mathbb{P}}
\DeclareMathOperator{\E}{\mathbb{E}}
\DeclareMathOperator{\KL}{KL}
\DeclareMathOperator{\diag}{diag}
\newcommand{\norm}[1]{\ensuremath{\left\| #1 \right\|}}
\newcommand{\abs}[1]{\ensuremath{{\left\vert #1 \right\vert}}}
\DeclareMathOperator{\indicator}{\mathbb{I}}
\DeclareMathOperator*{\argmin}{argmin}
\DeclareMathOperator*{\argmax}{argmax}
\DeclareMathOperator*{\minimize}{minimize}
\DeclareMathOperator{\subjectto}{subject\ to}
\newcommand{\calC}{\ensuremath{\mathcal{C}}}
\newcommand{\calD}{\ensuremath{\mathcal{D}}}
\newcommand{\calE}{\ensuremath{\mathcal{E}}}
\newcommand{\calF}{\ensuremath{\mathcal{F}}}
\newcommand{\calG}{\ensuremath{\mathcal{G}}}
\newcommand{\calH}{\ensuremath{\mathcal{H}}}
\newcommand{\calL}{\ensuremath{\mathcal{L}}}
\newcommand{\calN}{\ensuremath{\mathcal{N}}}
\newcommand{\calP}{\ensuremath{\mathcal{P}}}
\newcommand{\calV}{\ensuremath{\mathcal{V}}}
\newcommand{\calX}{\ensuremath{\mathcal{X}}}
\newcommand{\bbR}{\ensuremath{\mathbb{R}}}
\def\st/{\textsuperscript{st}}
\def\nd/{\textsuperscript{nd}}
\def\rd/{\textsuperscript{rd}}
\def\th/{\textsuperscript{th}}
\newcommand{\setR}{\bbR}
\newcommand{\del}{\ensuremath{\partial}}
\newcommand{\spprobmeas}{\mathcal{P}_2(\setR^d)}
\def\nnil{\nil}
\newcounter{prob}
\newenvironment{prob}[1][\nil]{%
	\def\tmp{#1}
	\equation
	\ifx\tmp\nnil
		\refstepcounter{prob}
		\tag{P\Roman{prob}}
	\else
		\tag{\tmp}
	\fi
	\aligned%
}{%
	\endaligned\endequation%
}
\newenvironment{prob*}{%
	\csname equation*\endcsname%
	\aligned%
}{%
	\endaligned%
	\csname endequation*\endcsname%
}
\newcommand{\Id}{\mathrm{I}_d}
\newcommand{\I}{\mathrm{I}}
\newcommand{\mut}{\ensuremath{\tilde{\mu}}}
\newcommand{\pdlmc}{PD\nobreakdash-LMC\xspace}
\definecolor{blush}{rgb}{0.87, 0.36, 0.51}
\title{Constrained Sampling with\\Primal-Dual Langevin Monte Carlo}
\author{%
Luiz F.\ O.\ Chamon\\
University of Stuttgart\\
%Stuttgart, Germany\\
\texttt{luiz.chamon@simtech.uni-stuttgart.de}\\
\And
Mohammad Reza Karimi\\
ETH Z\"{u}rich\\
%Z\"{u}rich, Switzerland\\
\texttt{mkarimi@inf.ethz.ch}
\And
Anna Korba\\
CREST, ENSAE, IP Paris\\
%Paris, France\\
\texttt{anna.korba@ensae.fr}
}
\begin{document}
\maketitle

\begin{abstract}
This work considers the problem of sampling from a probability distribution known up to a normalization constant while satisfying a set of statistical constraints specified by the expected values of general nonlinear functions. This problem finds applications in, e.g., Bayesian inference, where it can constrain moments to evaluate counterfactual scenarios or enforce desiderata such as prediction fairness. Methods developed to handle support constraints, such as those based on mirror maps, barriers, and penalties, are not suited for this task. This work therefore relies on gradient descent-ascent dynamics in Wasserstein space to put forward a discrete-time primal-dual Langevin Monte Carlo algorithm~(PD-LMC) that simultaneously constrains the target distribution and samples from it. We analyze the convergence of PD-LMC under standard assumptions on the target distribution and constraints, namely (strong) convexity and log-Sobolev inequalities. To do so, we bring classical optimization arguments for saddle-point algorithms to the geometry of Wasserstein space. We illustrate the relevance and effectiveness of PD-LMC in several applications.
\end{abstract}

\section{Introduction}

Sampling is a fundamental task in statistics, with applications to estimation and decision making, and of growing interest in machine learning~(ML), motivated by the need for uncertainty quantification and its success in generative tasks~\cite{Faulkner22s, Schoot21b, Song19g}. In these settings, the distribution we wish to sample from~(\emph{target distribution}) is often known only up to its normalization constant. This is the case, for instance, of score functions learned from data or posterior distributions of complex Bayesian models. Markov Chain Monte Carlo~(MCMC) algorithms can be used to tackle this problem~\cite{Roberts04g, Robert04m} and Langevin Monte Carlo~(LMC), in particular, has attracted considerable attention due to its simplicity, theoretical grounding, and effectiveness in practice~\cite{Roberts96e, Wibisono18s, Durmus19a, Song19g, Wang22a}.
These sampling algorithms, however, do not naturally incorporate requirements on the samples they generate. Specifically, standard MCMC methods do not enforce restrictions on the target distributions, such as support~(e.g., truncated Gaussian), conditional probabilities~(e.g., fairness), or moments~(e.g., portfolio return) constraints. This limitation is often addressed by post-processing, transforming variables, or by introducing penalties in the target distribution. Though successful in specific settings, these approaches have considerable downsides. Post-processing techniques such as \emph{rejection sampling}~(see, e.g., \cite{Lang07b, Li15e}) may substantially reduce the effective number of samples~(number of samples generated per iteration of the algorithm). Variable transformations based on link functions, projections, or mirror/proximal maps~(see, e.g., \cite{Hsieh18m, Bubeck18s, Salim20p, Ahn21e, Kook22s, Sharrock23l, Noble23u}) only accommodate (deterministic)~support constraints and are not suited for statistical requirements such as robustness or fairness~\cite{Madry18t, Kearns18p, Cotter19o, Chamon23c}. Though modifying the target distribution directly offers more flexibility~(see, e.g., \cite{Gurbuzbalaban22p}), it does not \emph{guarantee} that constraints are satisfied~(Table~\ref{T:litreview}). We refer the reader to~\Cref{X:related_work} for a more detailed literature review.

This paper overcomes these issues by directly tackling the \emph{constrained sampling} problem. Explicitly, it seeks to sample not from a target distribution~$\pi$ on $\mathbb{R}^d$, but from the distribution~$\mu^\star$ that solves
\begin{prob}\label{P:constrained_sampling}
    P^\star \triangleq \min_{\mu \in \spprobmeas}& &&\KL(\mu \| \pi)
    \\
    \text{subject to}& &&\E_{x\sim \mu} \!\big[ g_i(x) \big] \leq 0
    	\text{,}\ \ i = 1,\dots,I \text{,}
    \\
    & &&\E_{x\sim \mu} \!\big[ h_j(x) \big] = 0
    	\text{,}\ \ j = 1,\dots,J
    \text{,}
\end{prob}
where~$\spprobmeas$ denotes the set of probability measures on~$\setR^d$ with bounded second moments and the functions~$g_i,h_j$ represent the requirements. Note that~\eqref{P:constrained_sampling} only considers measures~$\mu$ against which~$g_i,h_j$ are integrable. Otherwise, the expectations are taken to be~$+\infty$, making the corresponding measure infeasible.
Observe  that~\eqref{P:constrained_sampling} is more general than the support-constrained sampling problem considered in, e.g., \cite{Hsieh18m, Bubeck18s, Salim20p, Ahn21e, Kook22s, Sharrock23l, Noble23u}. Indeed, it constrains the distribution~$\mu$ rather than its samples~$x$~(\Cref{T:litreview}). Algorithms based on projections, barrier, or mirror maps are not suited for this type of constraints~(see~\Cref{S:csmp} for more details). To tackle~\eqref{P:constrained_sampling}, this paper instead derives and analyzes a primal-dual LMC algorithm~(\pdlmc) that is the sampling counterpart of gradient descent-ascent~(GDA) methods from~(Euclidean) optimization.
A dual ascent algorithm was previously proposed to tackle~\eqref{P:constrained_sampling}, but it requires the exact computation of expectations with respect to intractable distributions~\cite{Liu21s}. This paper not only overcomes this limitation, but also provides convergence guarantees for a broader class of constraint functions.

The main contributions of this work include:
\begin{itemize}
	\item a discrete-time constrained sampling algorithm~(\pdlmc, \Cref{L:pdlmc}) for solving~\eqref{P:constrained_sampling} that precludes any explicit integration~(\Cref{S:primal-dual});

	\item an analysis of \pdlmc proving that it converges sublinearly~(in expectation) with respect to the Kullback-Leibler divergence~(convex case) or Wasserstein distance~(strongly convex case). The analysis is performed directly on the discrete-time iterations and requires only local Lipschitz continuity and bounded variance assumptions~(\Cref{S:convex_convergence});

	\item an extension of these result for target distributions satisfying a log-Sobolev inequality~(LSI) for a variant of \pdlmc~(\Cref{L:dlmc}, \Cref{S:lsi_convergence});

	\item numerical examples illustrating applications of~\eqref{P:constrained_sampling} and the effectiveness of~\pdlmc~(\Cref{S:experiments}).
\end{itemize}

\begin{table}[t]
\centering
\setcellgapes{3pt}\makegapedcells
\caption{Type and target of sampling constraints.}\label{T:litreview}
\begin{tabular}{ccc}
\hline
 & Soft constraint & Hard constraint \\
\hline
Sample~($x\sim\mu$) & --- & \makecell{Mirror/proximal LMC~\cite{Hsieh18m, Salim20p, Ahn21e, Sharrock23l},\\projected LMC~\cite{Bubeck18s},
barriers~\cite{Kook22s, Noble23u}}
\\
Distribution~($\mu$) & Penalized LMC~\cite{Gurbuzbalaban22p} & \textbf{\pdlmc} \\
\hline
\end{tabular}
\end{table}

\section{Problem formulation}

\subsection{Background on Langevin Monte Carlo}

Consider a target distribution~$\pi\in \spprobmeas$  absolutely continuous with respect to Lebesgue measure whose density~(also denoted~$\pi$) can be expressed as~$\pi(x) = e^{-f(x)}/Z$ for some normalization constant~$Z$.
Define the Kullback-Leibler (KL) divergence of~$\mu$  with respect to~$\pi$ as
\begin{equation}\label{E:kl}
	\KL(\mu \| \pi) = \int \log\left(\dfrac{d\mu}{d\pi}\right)d\mu
		 = \int f d\mu + \int \log(\mu) d\mu -\log(Z)
		 \triangleq \calV(\mu) + \calH(\mu) -\log(Z)
		\text{,}
\end{equation}
where~$\frac{d\mu}{d\pi}$ is the Radon--Nikodym derivative, $\calV$ is the \emph{potential energy}, and~$\calH$ is the \emph{negative entropy} if $\mu$ is absolutely continuous with respect to $\pi$; and $+\infty$ otherwise. For a wide class of functions~$f$~(e.g., smooth and strongly convex), samples from~$\pi$ can be obtained from the path of the \emph{Langevin diffusion} process, whose instantaneous values~$x(t)$ have distributions~$\mu(t)$ evolving according to the \emph{Fokker--Planck equation}~\cite{Jordan98t}. Explicitly,
\begin{equation}\label{E:langevin_diffusion}
	d x(t) = - \nabla f(x(t)) dt + \sqrt{2} dW(t)
	\quad \text{and} \quad
	\dfrac{\del \mu(t)}{\del t}
		= \nabla \cdot \big[ \mu(t) \nabla_{W_2}\! \KL(\mu(t) \| \pi) \big]
		\text{,}
\end{equation}
for a~$d$-dimensional Brownian motion~$W(t)$, where~$\nabla \cdot q$ denotes the divergence of~$q$ and $\nabla_{W_2}\KL(\mu \| \pi)$ denotes the Wasserstein-2 gradient of~$\KL(\cdot|\pi)$ at~$\mu$ \cite[Theorem 10.4.17]{Ambrosio05g} (see \Cref{X:background_wass} for more details). Indeed, the Langevin diffusion~\eqref{E:langevin_diffusion} brings the distribution~$\mu(t)$ of~$x(t)$ progressively closer to the target~$\pi$. In fact, the Fokker-Planck equation can be interpreted as a gradient flow of the KL divergence with respect to the Wasserstein-2 distance~\cite{Jordan98t, Wibisono18s}.

However, computing the path of the stochastic differential equation in~\eqref{E:langevin_diffusion} is not practical and discretizations are used instead. Chief among them is the (forward)~Euler--Maruyama scheme, which leads to the celebrated Langevin Monte Carlo (LMC) algorithm~\cite{Roberts96e}
\begin{equation}\label{E:lmc}
	x_{k+1} = x_k - \gamma_k \nabla f(x_k) + \sqrt{2 \gamma_k} \beta_k
		\text{,} \quad \beta_k \stackrel{\text{iid}}{\sim} \calN(0, \Id)
		\text{,}
\end{equation}
for a step size~$\gamma_k > 0$, where~$\Id$ denotes the~$d$-dimensional identity matrix.
Notice that it is not necessary to know~$Z$ in order to evaluate~\eqref{E:lmc}.
This has made LMC and its variants widely popular in practice and the subject of extensive research. Despite~\eqref{E:lmc}  being a biased time-discretizations of the Langevin diffusion in~\eqref{E:langevin_diffusion}~\cite{Wibisono18s}, rates of convergence of LMC have been obtained for smooth and strongly convex~\cite{Durmus19a,dalalyan2019user} or convex~\cite{dalalyan2022bounding} potentials or when the target distribution~$\pi$ verifies an LSI~\cite{Vempala19r}.

\subsection{Constrained sampling}
\label{S:csmp}

Our goal, however, is not to sample from~$\pi$ itself, but from a distribution close to~$\pi$ that satisfies a set of statistical requirements. Explicitly, we wish to sample from a distribution~$\mu^\star$ that solves~\eqref{P:constrained_sampling}. Since~\eqref{P:constrained_sampling} constrains the distribution~$\mu$ rather than its samples~$x$, it can accommodate more general requirements than the support constraints typically considered in constrained sampling~(e.g., \cite{Hsieh18m, Bubeck18s, Salim20p, Ahn21e, Kook22s, Sharrock23l, Noble23u}). Next, we illustrate the wide range of practical problems that can be formulated as~\eqref{P:constrained_sampling}. These examples are further explored in~\Cref{S:experiments} and more details on their formulations are provided in~\Cref{X:applications}.
\begin{enumerate}
	\item \textbf{Sampling from convex sets}: though we have stressed that~\eqref{P:constrained_sampling} accommodates other types of requirements, it can also be used to constrain the support of~$\pi$, i.e., to sample from
	\begin{prob}\label{P:support_sampling}
	    \mu^\star \in \argmin_{\mu \in \spprobmeas}& &&\KL(\mu \| \pi)
	    \\
	    \subjectto& &&\Pr_{x \sim \mu} [x \in \calC] = 1
	    	\text{,}
	\end{prob}
	for a closed convex set~$\calC \subset \setR^d$. Indeed, let~$\calC$ be the intersection of the~$0$-sublevel sets of convex functions~$\{s_i\}_{i=1,\dots,I}$. Such a description always exists~(see~\Cref{X:applications}). Then, \eqref{P:support_sampling} can be cast as~\eqref{P:constrained_sampling} using~$g_i(x) = [ s_i(x) ]_+$ for~$[ z ]_+ = \max(0,z)$. Notice that the~$g_i$ are convex and that although they are not everywhere differentiable, $\indicator(s_i(x) > 0) \nabla s_i(x)$ is a \emph{subgradient} of~$g_i$, where~$\indicator(\calE) = 1$ on the event~$\calE$ and~$0$ otherwise. Observe that support constraints can also be imposed using projections, mirror/proximal maps, and barriers as in~\cite{Hsieh18m, Bubeck18s, Salim20p, Ahn21e, Kook22s, Sharrock23l, Noble23u}. These methods, however, constrain the samples~$x$ rather than their distribution~$\mu$ as in~\eqref{P:support_sampling}.

	\item \textbf{Rate-constrained Bayesian models}: rate constraints have garnered attention in ML due to their central role in fairness~\cite{Kearns18p, Cotter19o}. Consider data pairs~$(x,y)$, where~$x \in \calX$ are features and~$y \in \{0,1\}$ labels, and a protected~(measurable) subgroup~$\calG \subset \calX$. Let~$\pi$ be a Bayesian posterior of the parameters~$\theta$ of a model~$q(\cdot;\theta)$ denoting the probability of a positive outcome~(based, e.g., on a binomial model). We wish to enforce statistical parity, i.e., we wish the prevalence of positive outcomes within the protected group~$\calG$ to be close to or higher than in the whole population. We cast this problem as~\eqref{P:constrained_sampling} by constraining the average probability of positive outcome as in
	\begin{prob}\label{P:fairness}
		P^\star = \min_{\mu \in \spprobmeas}& &&\KL(\mu \| \pi)
		\\
		\subjectto& &&\E_{x, \theta \sim \mu}
		\!\big[ q(x ; \theta) \mid \calG \big] \geq
		\E_{x, \theta \sim \mu}
		\!\big[ q(x ; \theta) \big] - \delta
		\text{.}
	\end{prob}
	where~$\delta > 0$ denotes our tolerance~\cite{Chamon23c}. Naturally, multiple protected groups can be accommodated by incorporating additional constraints. Hence, constrained sampling provides a natural way to encode fairness in Bayesian inference.

	\item \textbf{Counterfactual sampling}: rather than imposing requirements on probabilistic models, constrained sampling can also be used to probe them by evaluating \emph{counterfactual} statements. Indeed, let~$\pi$ denote a reference probabilistic model such that sampling from~$\pi$ yields realizations of the ``real world.'' Consider the \emph{counterfactual} statement ``how would the world have been if~$\E[g(x)] \leq 0$?'' Constrained sampling not only gives realizations of this alternative world, but it also indicates its ``compatibility'' with the reference model, namely the value~$P^\star$ of~\eqref{P:constrained_sampling}.

	More concretely, consider a \emph{Bayesian stock market} model. Here, $\pi$ is a posterior model for the (log-)returns of~$I$ assets, e.g., distributed as  Gaussians~$\calN(\rho,\Sigma)$. Here, the vector $\rho$ describes the mean return of each stock and~$\Sigma$ their covariance. We can investigate what the market would look like if, e.g., the mean and variance of each stocks were to change by solving
	\begin{prob}\label{P:porfolio}
	    P^\star = \min_{\mu \in \spprobmeas}& &&\KL(\mu \| \pi)
	    \\
	    \subjectto&
	    	&&\E_{(\rho,\Sigma) \sim \mu} \!\big[ \rho_i \big] = \bar{\rho}_i
	    	\text{,} \quad i = 1,\dots,I
	    \\
	    & &&\E_{(\rho,\Sigma) \sim \mu} \!\big[ \Sigma_{ii} \big] \leq \bar{\sigma}_i^2
	\end{prob}
	Due to correlations in the market, certain choices of~$\bar{\rho}_i$ or~$\bar{\sigma}_i^2$ may be more ``unrealistic'' than others. Additionally, it could be that some of these conditions are vacuous conditioned on the others. As we show next, our approach to tackling~\eqref{P:constrained_sampling} effectively isolates the contribution of each requirement in the solution~$\mu^\star$, thus enabling us to identify which are (conditionally)~vacuous and which are most at odds with the reference model~$\pi$.
\end{enumerate}

\subsection{Lagrangian duality and dual ascent algorithms}
\label{S:lagrangian}

Although directly sampling from~$\mu^\star$ does not appear straightforward, it admits a convenient characterization based on convex duality that is amenable to be sampled using the LMC algorithm~\eqref{E:lmc}. Indeed, let~$g: \setR^d \to \setR^I$ and~$h: \setR^d \to \setR^J$ be vector-valued functions  collecting the constraint functions~$g_i$ and~$h_j$ respectively. The Lagrangian of~\eqref{P:constrained_sampling} is then defined as
\begin{equation}\label{E:lagrangian}
    L(\mu, \lambda, \nu) \triangleq \KL(\mu\|\pi)
    	+ \lambda^\top \E_{\mu}[ g ]
    	+ \nu^\top \E_{\mu}[ h ]
    = \KL(\mu\|\mu_{\lambda\nu}) + \log\left( \frac{Z}{Z_{\lambda\nu}} \right)
  		\text{,}
\end{equation}
for~$\lambda \in \setR^I_+$ and~$\nu \in \setR^J$, where
\begin{equation}\label{E:U_potential}
    \mu_{\lambda\nu}(x) = \dfrac{e^{-U(x,\lambda,\nu)}}{Z_{\lambda\nu}} \quad \text{for} \quad
    	U(x,\lambda,\nu) = f(x) + \lambda^\top g(x) + \nu^\top h(x)
\end{equation}
and a normalization constant~$Z_{\lambda\nu}$. Notice that~$P^\star = \min_{\mu} \max_{\lambda \geq 0,\,\nu} L(\mu,\lambda,\nu)$, which is why~\eqref{P:constrained_sampling} is referred to as the \emph{primal problem}.

To obtain the \emph{dual problem} of~\eqref{P:constrained_sampling}, define the dual function
\begin{equation}\label{E:dual_function}
    d(\lambda,\nu) \triangleq \min_{\mu \in \spprobmeas}\ L(\mu, \lambda, \nu)
    	\text{.}
\end{equation}
Notice from~\eqref{E:lagrangian} that the minimum in~\eqref{E:dual_function} is achieved for~$\mu_{\lambda\nu}$ from~\eqref{E:U_potential}, the \emph{Lagrangian minimizer}, so that~$d(\lambda,\nu) = \log(Z / Z_{\lambda\nu})$. The solution of~\eqref{E:dual_function} is therefore a \emph{tilted} version of~$\pi$, whose tilt is controlled by the \emph{dual variables}~$(\lambda,\nu)$. Since~\eqref{E:dual_function} is a relaxation of~\eqref{P:constrained_sampling}, it yields a lower bound on the primal value, i.e., $d(\lambda,\nu) \leq P^\star$ for all~$(\lambda,\nu) \in \setR^I_+ \times \setR^J$. The dual problem seeks the tilts~$(\lambda^\star,\nu^\star)$ that yield the best lower bound, i.e.,
\begin{prob}[\textup{DI}]\label{P:dual}
    D^\star \triangleq \max_{\lambda \in \setR^I_+,\,\nu \in \setR^J}\ d(\lambda,\nu)
        \text{.}
\end{prob}
The set~$\Phi^\star = \argmax_{\lambda\geq 0,\,\nu}\ d(\lambda,\nu)$ of solutions of~\eqref{P:dual} is called the set of \emph{Lagrange multipliers}. Note from~\eqref{E:dual_function} that~\eqref{P:dual} depends on the distributions~$\mu$ and~$\pi$ through its objective~$d$.

The dual problem~\eqref{P:dual} has several advantageous properties. Indeed, while the primal problem~\eqref{P:constrained_sampling} is an \emph{infinite dimensional, smooth} optimization problem in probability space, the dual problem~\eqref{P:dual} is a \emph{finite dimensional, non-smooth} optimization problem in Euclidean space. What is more, it is a concave problem regardless of the functions~$f,g,h$, since the dual function~\eqref{E:dual_function} is the minimum of a set of affine functions in~$(\lambda,\nu)$~\cite[Prop.~4.1.1]{Bertsekas09c}. These properties are all the more attractive given that, under mild conditions stated below, \eqref{P:dual} can be used to solve~\eqref{P:constrained_sampling}.

\begin{assumption}\label{A:strict_feasibility}
	There exists~$\mu^\dagger \in \spprobmeas$ with~$\KL(\mu^\dagger \| \pi) \leq C < \infty$ such that~$\E_{\mu^\dagger}[g_i] \leq -\delta < 0$ and~$\E_{\mu^\dagger}[h_j] = 0$ for all~$i,j$.
\end{assumption}

\begin{proposition}\label{T:strong_duality}
	Under Assumption~\ref{A:strict_feasibility}, the following holds:
\begin{enumerate}[label=(\roman*), itemsep=-1pt, leftmargin=20pt, labelsep=5pt, topsep=0pt]
    \item ~$P^\star = D^\star$;
    \item there exists a finite pair~$(\lambda^\star,\nu^\star) \in \Phi^\star$;
    \item for any solution~$\mu^\star$ of~\eqref{P:constrained_sampling} and~$(\lambda^\star,\nu^\star)$ of~\eqref{P:dual}, it holds that
    \begin{equation}\label{E:saddle_point}
      L(\mu^\star,\lambda,\nu)
          \leq L(\mu^\star,\lambda^\star,\nu^\star)
          \leq L(\mu,\lambda^\star,\nu^\star)
          \text{,} \quad \text{for all $(\mu,\lambda,\nu) \in \spprobmeas \times \setR^I_+ \times \setR^J$;}
    \end{equation}
    \item the solution of~\eqref{P:constrained_sampling} is~$\mu^\star = \mu_{\lambda^\star\nu^\star}$ for~$(\lambda^\star, \nu^\star) \in \Phi^\star$;
    \item consider the perturbation of~\eqref{P:constrained_sampling}
    \begin{prob}\label{P:perturbed}
      P^\star(u,v) \triangleq \min_{\mu \in \spprobmeas}\ \KL(\mu \| \pi)\ \ %
      \subjectto\ \ \E_{x\sim \mu} \!\big[ g_i(x) \big] \leq u_i
          \text{,}\ \E_{x\sim \mu} \!\big[ h_j(x) \big] = v_j
          \text{.}
    \end{prob}
    Then, $(\lambda^\star,\nu^\star)$ are subgradients of~$P^\star(0,0) = P^\star$, i.e., $P^\star(u,v) \geq P^\star - {\lambda^\star}^\top u - {\nu^\star}^\top v$, and if~$P^\star(u,v)$ is differentiable at~$(0,0)$, then~$\nabla_{u} P^\star(u,v) = -\lambda^\star$ and~$\nabla_{v} P^\star(u,v) = -\nu^\star$ at~$(0,0)$.
\end{enumerate}

\end{proposition}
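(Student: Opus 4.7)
The plan is to treat this as a classical convex-programming duality result, using the fact that, although the primal~\eqref{P:constrained_sampling} is infinite-dimensional, its relevant structure is convex in the ordinary linear sense on measures: $\KL(\cdot\|\pi)$ is convex, and $\mu\mapsto \E_\mu[g_i],\E_\mu[h_j]$ are affine. I would begin by defining the perturbation function $P^\star(u,v)$ from~\eqref{P:perturbed}, observing that its domain $\{(u,v) : P^\star(u,v)<\infty\}$ is a convex subset of $\setR^I\times\setR^J$, that $P^\star$ is convex on this set, and that it is componentwise non-increasing in~$u$ (since enlarging an inequality constraint only enlarges the feasible set). Assumption~\ref{A:strict_feasibility} provides the witness~$\mu^\dagger$, giving $P^\star(u,0)\le C$ for all~$u\ge -\delta\ones$, hence $(0,0)$ lies in the relative interior of $\mathrm{dom}\,P^\star$ along the inequality directions.

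Next, I would prove~(v) first, from which the other items follow cleanly. Apply a convex separation theorem in $\setR^{I+J+1}$ to $\mathrm{epi}(P^\star)$ and the point $(0,0,P^\star-\varepsilon)$ (which lies outside the epigraph) to obtain a supporting hyperplane at $(0,0,P^\star)$. The key step is to show this hyperplane is non-vertical: if its $t$-component vanished, the hyperplane would yield a nonzero vector $(\lambda,\nu)$ with $\lambda^\top u + \nu^\top v \ge 0$ on $\mathrm{dom}\,P^\star$, which is incompatible with Slater because $(-\delta\ones,0)\in\mathrm{dom}\,P^\star$. After normalizing the $t$-coefficient to $1$, we obtain $(\lambda^\star,\nu^\star)$ with $\lambda^\star\ge 0$ (from monotonicity of $P^\star$ in~$u$) satisfying $P^\star(u,v)\ge P^\star - {\lambda^\star}^\top u - {\nu^\star}^\top v$, which is precisely~(v). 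Differentiability then pins $(\lambda^\star,\nu^\star)$ down as the unique subgradient.

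For~(i) and~(ii), I would rewrite the subgradient inequality using the definition of $P^\star(u,v)$: for any $\mu\in\spprobmeas$, setting $u_i=\E_\mu[g_i]$ and $v_j=\E_\mu[h_j]$ gives $\KL(\mu\|\pi)\ge P^\star - {\lambda^\star}^\top \E_\mu[g] - {\nu^\star}^\top \E_\mu[h]$, i.e.\ $L(\mu,\lambda^\star,\nu^\star)\ge P^\star$. Taking the infimum over $\mu$ yields $d(\lambda^\star,\nu^\star)\ge P^\star$, while weak duality (an immediate consequence of $\max\min\le\min\max$ applied to~\eqref{E:lagrangian}) gives $D^\star\le P^\star$. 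Combining these shows $P^\star=D^\star$ and that the maximum in~\eqref{P:dual} is attained at the finite pair $(\lambda^\star,\nu^\star)\in\Phi^\star$.

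For~(iii), with $P^\star=D^\star$ one has $\KL(\mu^\star\|\pi)=L(\mu^\star,\lambda^\star,\nu^\star)=d(\lambda^\star,\nu^\star)\le L(\mu,\lambda^\star,\nu^\star)$ for every admissible $\mu$, giving the right-hand inequality of~\eqref{E:saddle_point}; primal feasibility of $\mu^\star$ together with complementary slackness (${\lambda^\star}^\top\E_{\mu^\star}[g]=0$, which must hold since otherwise $L(\mu^\star,\lambda^\star,\nu^\star)<\KL(\mu^\star\|\pi)=P^\star$) gives the left-hand inequality. Finally, for~(iv), from~\eqref{E:lagrangian} we have $L(\mu,\lambda^\star,\nu^\star)=\KL(\mu\|\mu_{\lambda^\star\nu^\star})+\log(Z/Z_{\lambda^\star\nu^\star})$, and strict convexity of $\KL(\cdot\|\mu_{\lambda^\star\nu^\star})$ forces its unique minimizer to be $\mu_{\lambda^\star\nu^\star}$, which by the right-hand inequality of~\eqref{E:saddle_point} must coincide with $\mu^\star$.

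The main obstacle is the separation step: one must verify that the supporting hyperplane is non-vertical in the infinite-dimensional setting, which is exactly where Assumption~\ref{A:strict_feasibility} enters. All remaining arguments are standard once this is in place, and the analysis never requires any Wasserstein geometry—only the linear convex structure of $\spprobmeas$ and strict convexity of $\KL(\cdot\|\pi)$.
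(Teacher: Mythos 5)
Your proposal is correct in substance, but it takes a genuinely different route from the paper: the paper does not actually carry out the duality argument, it assembles it from citations (Luenberger/Jeyakumar for strong duality and existence of multipliers under Slater, Bonnans--Shapiro for the saddle point and the perturbation result, Kurdila--Zabarankin for uniqueness of the KL minimizer), whereas you reconstruct the underlying proofs from scratch via the classical perturbation-function approach --- supporting hyperplane to $\mathrm{epi}\,P^\star(u,v)$, non-verticality from Slater, then (v) $\Rightarrow$ (i),(ii) $\Rightarrow$ (iii) $\Rightarrow$ (iv). This is essentially the proof of the theorems the paper cites, so your argument is self-contained where the paper's is not, at the cost of length; your ordering (deriving the saddle point and complementary slackness from the subgradient inequality rather than quoting them) is clean and each step checks out: the chain $L(\mu^\star,\lambda^\star,\nu^\star)\le \KL(\mu^\star\|\pi)=P^\star=d(\lambda^\star,\nu^\star)\le L(\mu^\star,\lambda^\star,\nu^\star)$ forces equality throughout and yields both complementary slackness and the right-hand side of~\eqref{E:saddle_point}, and (iv) follows from the exact identity $L(\mu,\lambda^\star,\nu^\star)=\KL(\mu\|\mu_{\lambda^\star\nu^\star})+\log(Z/Z_{\lambda^\star\nu^\star})$ exactly as you say.

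One point deserves care: your non-verticality argument only rules out a vertical hyperplane in the inequality directions. If the hyperplane has normal $(\lambda,\nu,0)$, monotonicity plus $(-\delta\ones,0)\in\mathrm{dom}\,P^\star$ forces $\lambda=0$, but you are then left with $\nu^\top v\ge 0$ on the set of achievable equality perturbations $\{\E_\mu[h]:\KL(\mu\|\pi)<\infty\}$, and \Cref{A:strict_feasibility} only guarantees that $0$ belongs to this set, not that it is interior to it. You therefore need either to restrict $\nu$ to the span of the achievable $v$'s (working with the relative interior) or to assume $0$ lies in its interior. This is not a defect relative to the paper --- the paper's Assumption~\ref{A:strict_feasibility} is equally silent on the equality constraints, and the cited references impose analogous regular-point conditions --- but if you write the separation step out in full you should state this constraint qualification explicitly rather than leave it implicit in the phrase ``along the inequality directions.''
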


\begin{proof}
In finite dimensional settings, (i)--(v) are well-known duality results~(see, e.g., \cite{Bertsekas09c}). While they also hold for infinite dimensional optimization problems, their proofs are slightly more ``scattered.'' We collect their reference below.
The objective of~\eqref{P:constrained_sampling} is a convex function and its constraints are linear functions of~$\mu$. Hence, \eqref{P:constrained_sampling} is a convex program.
Under Slater's condition~(Assumption~\ref{A:strict_feasibility}), it is (i)~strongly dual~($P^\star = D^\star$) and (ii)~there exists at least one solution~$(\lambda^\star,\nu^\star)$ of~\eqref{P:dual}~(see~\cite[Sec.~8.6, Thm.~1]{Luenberger68o} or~\cite[Cor.~4.1]{Jeyakumar92g}).
This implies (iii)~the existence of the saddle-point~\eqref{E:saddle_point}~\cite[Prop.~2.156]{Bonnans00p},
(iv)~that~$\mu^\star \in \argmin_\mu L(\mu,\lambda^\star,\nu^\star) = \{\mu_{\lambda^\star\nu^\star}\}$, since the KL divergence is strongly convex and its minimizer is unique~\cite[Thm.~7.3.7]{Kurdila05c}, and
(v)~that~$(\lambda^\star,\nu^\star)$ are subgradients of the perturbation function~$P^\star(u,v)$~\cite[Prop.~4.27]{Bonnans00p}.
\end{proof}

\Cref{T:strong_duality} shows that given solutions~$(\lambda^\star,\nu^\star)$ of~\eqref{P:dual}, the constrained sampling problem~\eqref{P:constrained_sampling} reduces to sampling from~$\mu_{\lambda^\star\nu^\star} \propto e^{-U(\cdot,\lambda^\star,\nu^\star)}$~(see~\Cref{X:gaussian_duality} for an explicit example of this result). It is important to note that this results only relies on the KL divergence being~(strongly) convex in the standard~$L^2$ geometry, i.e., along mixtures of the form~$t \mu_0 + (1-t) \mu_1$ for $t \in [0,1]$. This does not imply that it is (geodesically)~convex in the Wasserstein sense~\cite[Section 9.1.2]{Villani21t}. This would require~$U$ in~\eqref{E:U_potential} to be convex for all~$\lambda \geq 0$ and~$\nu \in \setR^J$, i.e., for~$f,g$ to be convex and~$h$ to be linear.

Hence, \Cref{T:strong_duality} reduces the constrained sampling problem~\eqref{P:constrained_sampling} to that of finding the Lagrange multipliers~$(\lambda^\star,\nu^\star)$. Despite their finite dimensionality, however, computing these parameters is intricate. Indeed, since~\eqref{P:dual} is a concave program, we could obtain~$(\lambda^\star,\nu^\star) \in \Phi^\star$ using
\begin{equation}\label{E:dual_ascent}
	\lambda_{k+1} = \big[ \lambda_{k} + \eta_k \E_{\mu_{\lambda_k\nu_k}} [g] \big]_+
	\quad \text{and} \quad
	\nu_{k+1} = \nu_{k} + \eta_k \E_{\mu_{\lambda_k\nu_k}} [h]
		\text{,}
\end{equation}
for~$\eta_k > 0$, where we used the fact that~$\E_{\mu_{\lambda\nu}}[g]$ and~$\E_{\mu_{\lambda\nu}}[h]$ are (sub)gradients of the dual function~\eqref{E:dual_function} at~$(\lambda,\nu)$~\cite[Thm.~2.87]{Ruszczynski06n}. This procedure is known in optimization and game theory as \emph{dual ascent} or \emph{best response}~\cite{Nisan07a, Bertsekas96c}. Notice, however, that~\eqref{E:dual_ascent} is not a practical algorithm as it requires explicit integration with respect to the intractable distribution~$\mu_{\lambda\nu}$ from~\eqref{E:U_potential}.

This issue was partially addressed in~\cite{Liu21s}~(in continuous time and without equality constraints, i.e., $J = 0$) by replacing the Lagrangian minimizer~$\mu_{\lambda_k\nu_k}$ in~\eqref{E:dual_ascent} by the distribution of LMC samples, as in
\begin{equation}\label{E:simultaneous_dual_ascent}
\begin{aligned}
	x_{k+1} &= x_k - \gamma_k \nabla U(x_k,\lambda_k) + \sqrt{2 \gamma_k} \beta_k
	\text{,}\quad \beta_k \stackrel{\text{iid}}{\sim} \calN(0, \Id)
	\\
	\lambda_{k+1} &= \Big[ \lambda_{k} + \eta_k \E_{\mu_{k}} \![g] \Big]_+
		\text{,} \quad \mu_k = \text{Law}(x_k)
		\text{.}
\end{aligned}
\end{equation}
Note that since~$J = 0$, we omit the argument~$\nu$ of~$U$ for clarity. Nevertheless, the updates in~\eqref{E:simultaneous_dual_ascent} still require an explicit integration. While it is now possible to sample from~$\mu_k$~(namely, using the~$x_k$), empirical approximations of~$\E_{\mu_k} \![g]$ may not only require an exponential~(in the dimension~$d$) number of samples~(e.g.,~\cite[Thm.~1.2]{Kloeckner12a}), but it introduces errors that are not taken into account in the analysis of~\cite{Liu21s}. In the sequel, we address these drawbacks by replacing these dual ascent algorithms by a saddle-point one.

\section{Primal-dual Langevin Monte Carlo}
\label{S:primal-dual}

Consider the GDA dynamics for the saddle-point problem~\eqref{P:dual} in Wasserstein space. Explicitly,
\begin{subequations}\label{E:wasserstein_gda}
\begin{align}
    \dfrac{\del \mu(t)}{\del t}
    	&= \nabla \cdot \big[ \mu(t) \nabla_{W_2} L\big(\mu(t),\lambda(t),\nu(t)\big) \big]
    	\label{E:wasserstein_gda_mu}
    \\
    \dfrac{\del \lambda_{i}(t)}{\del t} &=
    	\big[ \nabla_{\lambda_i} L\big(\mu(t),\lambda(t),\nu(t)\big) \big]_{\lambda_i(t),+}
    	\label{E:wasserstein_gda_lambda}
    \\
    \dfrac{\del \nu_{j}(t)}{\del t} &= \nabla_{\nu_j} L\big(\mu(t),\lambda(t),\nu(t)\big)
    	\label{E:wasserstein_gda_nu}
\end{align}
\end{subequations}
for the Lagrangian~$L$ defined in~\eqref{E:lagrangian}, where~$[z]_{\lambda,+} = z$ for~$\lambda > 0$ and~$[z]_{\lambda,+} = \max(a, 0)$ otherwise~\cite[Sec.~2.2]{Nagurney96p}. Observe that~$\nabla_{\lambda_i} L(\mu, \lambda, \nu) = \mathbb{E}_{\mu} [g_i]$ and~$\nabla_{\nu_j} L(\mu, \lambda, \nu) = \mathbb{E}_{\mu} [h_j]$. Hence, the algorithm from~\cite{Liu21s} described in~\eqref{E:simultaneous_dual_ascent} involves a \emph{deterministic} implementation of~\eqref{E:wasserstein_gda_lambda} that fully integrates over~$\mu(t)$. In contrast, we consider a \emph{stochastic}, \emph{single-particle} implementation of~\eqref{E:wasserstein_gda} that leads to the practical procedure in~\Cref{L:pdlmc}.

\begin{algorithm}[t]
\caption{Primal-dual LMC}
	\label{L:pdlmc}
\begin{algorithmic}[1]
	\State \textbf{Inputs:} $\eta_k > 0$~(step size), $x_{0} \sim \mu_0$, and~$(\lambda_{0},\nu_{0}) = (0,0)$.

	\For{$k = 0,\dots,K-1$}
		\State $\displaystyle x_{k+1} = x_k - \eta_k \nabla_{x} U(x_k, \lambda_k, \nu_k) + \sqrt{2\eta_k}\,\beta_k$, \quad for $\beta_k \sim \calN(0, \Id)$
			\label{E:pdlmc_x}

		\State $\displaystyle \lambda_{k+1} = \big[ \lambda_{k} + \eta_k g(x_k) \big]_+$
			\label{E:pdlmc_lambda}

		\State $\displaystyle \nu_{k+1} = \nu_{k} + \eta_k h(x_k)$
			\label{E:pdlmc_nu}
	\EndFor
\end{algorithmic}
\end{algorithm}

Explicitly, we also use an Euler-Maruyama time-discretization of the Langevin diffusion associated to~\eqref{E:wasserstein_gda_mu}~(step~\ref{E:pdlmc_x}), but replace the expectations in~\eqref{E:wasserstein_gda_lambda}--\eqref{E:wasserstein_gda_nu} by single-sample approximations~(steps~\ref{E:pdlmc_lambda}--\ref{E:pdlmc_nu}). \Cref{L:pdlmc} can therefore be seen as a particle implementation of the deterministic Wasserstein GDA algorithm~\eqref{E:wasserstein_gda}. As such, it resembles a primal-dual counterpart of the LMC algorithm in~\eqref{E:lmc}, which is why we dub it \emph{primal-dual LMC}~(\pdlmc). Alternatively, \Cref{L:pdlmc} can be interpreted as a stochastic approximation of the dual ascent method in~\eqref{E:simultaneous_dual_ascent}. This suggests that the gradient approximations in steps~\ref{E:pdlmc_lambda}--\ref{E:pdlmc_nu} could be improved using mini-batches, which is in fact how~\cite{Liu21s} approximates the expectation in~\eqref{E:simultaneous_dual_ascent}. Our theoretical analysis and experiments show that these mini-batches are neither necessary nor always worth the additional computational cost~(see~\Cref{S:convex_convergence} and~\Cref{S:experiments}). Note that the ``stochastic approximations'' in Algorithm~\ref{L:pdlmc} refer to the dual updates~(steps~\ref{E:pdlmc_lambda}--\ref{E:pdlmc_nu}) rather than the LMC update~(step~\ref{E:pdlmc_x}) as in stochastic gradient Langevin~\cite{Welling11b}. Though these methods could be combined, it is beyond the scope of this work.

The remainder of this section is dedicated to analyzing the convergence properties of \pdlmc for both stochastic dual gradients~(as in Algorithm~\ref{L:pdlmc}) and exact dual gradient~(as in~\eqref{E:simultaneous_dual_ascent}). For the latter, we obtain guarantees for the discrete implementation~\eqref{E:simultaneous_dual_ascent} under weaker assumptions than the continuous-time analysis of~\cite{Liu21s}. We consider strongly log-concave target distributions in Section~\ref{S:convex_convergence} and those satisfying an LSI in~\Cref{S:lsi_convergence}.

\subsection{PD-LMC with~(strongly) convex potentials}
\label{S:convex_convergence}

As opposed to the traditional LMC algorithm~\eqref{E:lmc} or the deterministic updates in~\eqref{E:simultaneous_dual_ascent}, Algorithm~\ref{L:pdlmc} involves three coupled random variables, namely, $(x_k,\lambda_k,\nu_k)$. Hence, the LMC update~(step~\ref{E:pdlmc_x}) is based on a \emph{stochastic} potential~$U$ and the distribution~$\mu_k$ of~$x_k$ is now a \emph{random measure}. Our analysis sidesteps this obstacle by using techniques from stochastic optimization. We also leverage techniques from primal-dual algorithms in the Wasserstein space, in the spirit of works such as~\cite{Durmus19a, Wibisono18s,Salim20t} that studied the LMC~\eqref{E:lmc} or alternative time-discretizations of gradient flows of the KL divergence as splitting schemes.

First, define the potential energy~$\calE$ and the (negative) entropy~$\calH$ for~$(\mu,\lambda,\nu) \in \spprobmeas \times \setR^I_+ \times \setR^J$~as
\begin{equation}\label{E:potential_entropy}
	\calE(\mu,\lambda,\nu) = \int U(x,\lambda,\nu) d\mu(x)
	\quad\text{and}\quad
	\calH(\mu) = \int \log(\mu) d\mu
		\text{,}
\end{equation}
for~$U$ as in~\eqref{E:U_potential}. Notice from~\eqref{E:lagrangian} that~$L(\mu,\lambda,\nu) = \mathcal{E}(\mu, \lambda, \nu) + \mathcal{H}(\mu) - \log(Z)$, where we used the KL divergence decomposition in~\eqref{E:kl}. To proceed, consider the following assumptions:

\begin{assumption}\label{A:convexity}
	The potential energy~$\calE(\mu,\lambda,\nu)$ in~\eqref{E:potential_entropy} is $m$-strongly convex with respect to~$\mu$ along Wasserstein-2 geodesics for~$m \geq 0$ and all~$(\lambda,\nu) \in \setR^I_+ \times \setR^J$. Explicitly,
	\begin{equation*}
	    \calE(\mu,\lambda,\nu) \geq \calE(\mu_0,\lambda,\nu)
		    + \int \langle \nabla_{W_2} \calE(\mu_0,\lambda,\nu), x-y \rangle ds(x,y)
		    + \dfrac{m}{2} W_2^2(\mu,\mu_0)
	    	\text{,}
	\end{equation*}
	where~$s$ is an optimal coupling achieving~$W_2^2(\mu,\mu_0)$~(see \Cref{X:background_wass}).
\end{assumption}

\begin{assumption}\label{A:bounded_gradients}
    The gradients and variances of~$f,g,h$ are bounded along iterations~$\{\mu_k\}_{k \geq 0}$, where~$\mu_k$ is the distribution of~$x_k$, i.e., there exists~$G^2$ such that
    \begin{equation*}
        \max \big( \| \nabla f \|_{L^2(\mu_k)}^2,
        	\| \nabla g_i \|_{L^2(\mu_k)}^2,
        	\| \nabla h_j \|_{L^2(\mu_k)}^2 \big) \leq G^2
        \ \ \text{and} \ \ %
        \max \big( \E_{\mu_k}[\norm{g}^2], \E_{\mu_k}[\norm{h}^2] \big) \leq G^2
        \text{.}
    \end{equation*}
\end{assumption}

Assumption~\ref{A:convexity} holds with~$m = 0$ if~$f,g$ are convex and~$h$ is linear. If~$f$ is additionally strongly convex, then it holds with~$m > 0$~\cite[Prop.~9.3.2]{Ambrosio05g}. Assumption~\ref{A:bounded_gradients} is typical in~(stochastic) non-smooth optimization analyses~(see, e.g., \cite{Nedic09a, cherukuri2016asymptotic, Ruszczynski06n}). Notice, however, that gradients are only required to be bounded along trajectories of Algorithm~\ref{L:pdlmc}, a crucial distinction in the case of strongly convex functions whose gradients can only be bounded locally. Assumption~\ref{A:bounded_gradients} can be satisfied under mild conditions on~$f,g,h$, such as local Lipschitz continuity or linear growth.

The following theorem provides the first convergence analysis of the discrete-time \pdlmc.

\begin{theorem}\label{T:main_convex}
	Denote by~$\mu_k$ the distribution of~$x_k$ in Algorithm~\ref{L:pdlmc}. Under Assumptions~\ref{A:strict_feasibility}, \ref{A:convexity}, and~\ref{A:bounded_gradients}, there exists~$R_0^2$ such that, for~$\eta_k \leq \eta$,
	\begin{equation}\label{E:main_convex_fix}
		\dfrac{1}{K} \sum_{k = 1}^K \Big[
			\KL(\mu_{k} \,\|\, \mu^\star)
			+ \dfrac{m}{2} W_2^2(\mu_k,\mu^\star)
		\Big] \leq
		3\eta G^2 + \dfrac{R_0^2}{\eta K}
		+ \dfrac{\eta G^2}{K}\sum_{k = 1}^K
			\big( \E[\norm{\lambda_k}^2] + \E[\norm{\nu_k}^2] \big)
			\text{.}
	\end{equation}
	For~$\eta_k \leq R_0/(G\sqrt{k})$ and~$\bar{\eta}_k = \eta_k/\sum_{k = 1}^K \eta_k$, we obtain
	\begin{equation}\label{E:main_convex_dec}
		\sum_{k = 1}^K \bar{\eta}_k \Big[
			\KL(\mu_{k} \,\|\, \mu^\star)
			+ \dfrac{m}{2} W_2^2(\mu_k,\mu^\star)
		\Big]\leq
		\dfrac{R_0 G (1 + \log(K))}{\sqrt{K}} \Big(
			3 + \max_k\ \big\{ \E[ \norm{\lambda_k}^2 ] + \E[ \norm{\nu_k}^2 ] \big\}
		\Big)
			\text{.}
	\end{equation}
	Additionally, there exists a sequence of step sizes~$\eta_k > 0$ such that~$W_2^2(\mu_k,\mu^\star) \leq R_0^2$ and~$\E[\norm{\lambda_k}^2] + \E[\norm{\nu_k}^2] < \infty$ for all~$k$. The same results hold~(without expectations) when using exact dual gradients, i.e., if the updates in steps~\ref{E:pdlmc_lambda}--\ref{E:pdlmc_nu} are replaced by~$\lambda_{k+1} = \big[ \lambda_{k} + \eta_k \E_{\mu_k}[g] \big]_+$ and~$\nu_{k+1} = \nu_{k} + \eta_k \E_{\mu_k}[h]$.

\end{theorem}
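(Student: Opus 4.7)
The plan is to adapt the classical primal-dual analysis of saddle-point subgradient methods to the mixed Wasserstein-Euclidean geometry of PD-LMC. The central object is the composite Lyapunov function
$$V_k \triangleq W_2^2(\mu_k,\mu^\star) + \norm{\lambda_k - \lambda^\star}^2 + \norm{\nu_k - \nu^\star}^2,$$
where $(\lambda^\star,\nu^\star) \in \Phi^\star$ is a finite Lagrange multiplier pair, whose existence is guaranteed by \Cref{T:strong_duality}(ii) under \Cref{A:strict_feasibility}. The goal is a one-step recursion of the form
$$\E[V_{k+1}] \leq \E[V_k] - 2\eta_k\,\E\!\big[\KL(\mu_k\|\mu^\star) + \tfrac{m}{2}\,W_2^2(\mu_k,\mu^\star)\big] + C\eta_k^2 G^2 \big(1 + \E\norm{\lambda_k}^2 + \E\norm{\nu_k}^2\big),$$
from which~\eqref{E:main_convex_fix} and~\eqref{E:main_convex_dec} follow by telescoping.

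The primal one-step inequality comes from Wasserstein gradient-flow analysis of LMC in the spirit of~\cite{Durmus19a, Wibisono18s, Salim20t}. Since $\calE(\cdot,\lambda_k,\nu_k)$ is $m$-displacement convex by \Cref{A:convexity} and $\calH$ is displacement convex on $\spprobmeas$, and since step~\ref{E:pdlmc_x} is a forward Euler step of $\calE(\cdot,\lambda_k,\nu_k)$ composed with an exact heat-flow step of $\calH$, one obtains, conditionally on $(\lambda_k,\nu_k)$,
$$W_2^2(\mu_{k+1},\mu^\star) + 2\eta_k\big[L(\mu_k,\lambda_k,\nu_k) - L(\mu^\star,\lambda_k,\nu_k)\big] + m\eta_k W_2^2(\mu_k,\mu^\star) \leq W_2^2(\mu_k,\mu^\star) + C\eta_k^2 G^2\big(1 + \norm{\lambda_k}^2 + \norm{\nu_k}^2\big),$$
the last term bounding $\norm{\nabla_x U}_{L^2(\mu_k)}^2$ via \Cref{A:bounded_gradients}. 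For the dual step, non-expansiveness of projection onto $\setR^I_+$ gives $\norm{\lambda_{k+1} - \lambda^\star}^2 \leq \norm{\lambda_k - \lambda^\star}^2 + 2\eta_k\langle g(x_k), \lambda_k - \lambda^\star\rangle + \eta_k^2\norm{g(x_k)}^2$, and similarly for $\nu$. Taking conditional expectation given $\mu_k$ replaces $g(x_k)$ by $\E_{\mu_k}[g] = \nabla_\lambda L$ and bounds the squared term by $\eta_k^2 G^2$, which is precisely what legitimizes the single-sample stochastic dual updates without mini-batching.

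Adding the primal and dual inequalities, the key algebraic step uses linearity of $L$ in $(\lambda,\nu)$ to identify $\langle \E_{\mu_k}[g],\lambda_k - \lambda^\star\rangle + \langle \E_{\mu_k}[h],\nu_k - \nu^\star\rangle$ with $L(\mu_k,\lambda_k,\nu_k) - L(\mu_k,\lambda^\star,\nu^\star)$, so the primal and dual progress terms combine into $-2\eta_k[L(\mu_k,\lambda^\star,\nu^\star) - L(\mu^\star,\lambda_k,\nu_k)]$. The saddle-point inequality~\eqref{E:saddle_point} bounds this bracket below by $L(\mu_k,\lambda^\star,\nu^\star) - L(\mu^\star,\lambda^\star,\nu^\star)$, which in turn equals $\KL(\mu_k\|\mu^\star)$ thanks to $\mu^\star = \mu_{\lambda^\star\nu^\star}$ and identity~\eqref{E:lagrangian}. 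This yields the master recursion. Telescoping from $k = 1$ to $K$, bounding $V_0$ by $R_0^2$, and dropping the non-negative $V_{K+1}$ term gives~\eqref{E:main_convex_fix} in the constant step-size regime and~\eqref{E:main_convex_dec} after dividing by $\sum_k \eta_k \asymp \sqrt{K}$ and bounding $\sum_k \eta_k^2 \lesssim R_0^2(1 + \log K)/G^2$.

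The hardest part is showing that step sizes exist for which $\E\norm{\lambda_k}^2 + \E\norm{\nu_k}^2$ stays uniformly bounded, since the error term in the master recursion involves these very norms. The plan is to exploit Slater's \Cref{A:strict_feasibility} via the feasible reference $\mu^\dagger$: testing the Lagrangian at $\mu^\dagger$ and using $\E_{\mu^\dagger}[g_i] \leq -\delta$, $\E_{\mu^\dagger}[h_j] = 0$ induces a strictly negative drift in the dual update whenever $\norm{\lambda_k} + \norm{\nu_k}$ is large; combined with the $\eta_k^2 G^2$ noise term, this confines the dual iterates, in expectation, to a region whose radius is controlled by $\KL(\mu^\dagger\|\pi)/\delta$, mirroring the standard dual norm bound for subgradient saddle-point methods. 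The deterministic-gradient variant (final sentence of the theorem) proceeds by the same argument without conditional expectations, since $\E_{\mu_k}[g]$ and $\E_{\mu_k}[h]$ replace $g(x_k)$ and $h(x_k)$ as exact (sub)gradients of the dual function.
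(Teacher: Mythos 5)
Your derivation of the master recursion---the composite Lyapunov function, the forward-Euler-plus-exact-heat-flow analysis of the primal step, the non-expansive projection bound for the dual step, the use of linearity of $L$ in $(\lambda,\nu)$ to merge the cross terms into $L(\mu_k,\lambda_k,\nu_k)-L(\mu_k,\lambda^\star,\nu^\star)$, and the saddle-point inequality~\eqref{E:saddle_point} to extract $\KL(\mu_k\|\mu^\star)+\tfrac{m}{2}W_2^2(\mu_k,\mu^\star)$---is essentially the paper's argument (Lemmas~\ref{T:lyapunov} and~\ref{T:R_0}), up to the bookkeeping of half-iterations and the fact that the paper minimizes over $(\lambda^\star,\nu^\star)\in\Phi^\star$ inside the Lyapunov function. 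The telescoping and the two step-size regimes then give \eqref{E:main_convex_fix}--\eqref{E:main_convex_dec} exactly as you describe.

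The gap is in your plan for the final claim, that some step-size sequence keeps $\E[\norm{\lambda_k}^2]+\E[\norm{\nu_k}^2]$ bounded. You propose the standard Slater-drift argument: test the Lagrangian at $\mu^\dagger$ and conclude that the dual update has strictly negative drift once the multipliers are large. That argument requires the dual increment to be an (approximate) supergradient of the dual function $d$ at $(\lambda_k,\nu_k)$, i.e., it requires $\E_{\mu_k}[g]\approx\E_{\mu_{\lambda_k\nu_k}}[g]$. In \Cref{L:pdlmc} the primal and dual variables move on the same time scale, so $\mu_k$ is in general far from the Lagrangian minimizer $\mu_{\lambda_k\nu_k}$; the inner product $(\lambda_k-\lambda^\star)^\top\E_{\mu_k}[g]$ equals $\calE(\mu_k,\lambda_k,\nu_k)-\calE(\mu_k,\lambda^\star,\nu^\star)$ evaluated at the \emph{current} iterate $\mu_k$, which can be positive and large precisely when $\mu_k$ is infeasible---the regime in which $\lambda_k$ grows. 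This is why the paper reserves the Slater-drift bound for the two-time-scale \Cref{L:dlmc} (Lemma~\ref{T:norm_bound}, which relies on the inner loop enforcing $\KL(\bar{\mu}_k\|\mu_{\lambda_k})\leq\epsilon$ so that $\E_{\bar{\mu}_k}[g]$ is an approximate subgradient of $d$). For \Cref{T:main_convex} the paper instead argues by induction: if $\E[V_k]\leq R_0^2$, the multipliers are bounded at step $k$, so one can choose $\eta_k>0$ small enough that the $\eta_k^2$ noise term is dominated by the $2\eta_k$ descent term in the master recursion, forcing $\E[V_{k+1}]\leq R_0^2$. You would need to replace your drift argument with an adaptive step-size induction of this kind.
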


\Cref{T:main_convex}, whose proof is deferred to~\Cref{X:proof_convex}, implies rates similar to those for GDA schemes in finite-dimensional Euclidean optimization~(see, e.g.,~\cite{Nedic09a}). To recover those rates, however, we must bound the magnitudes of~$\lambda_k,\nu_k$. In~\cite{Nedic09a}, this is done by bounding the iterates in the algorithm itself, i.e., by projecting them onto the set~$\calD_r = \{(\lambda,\nu) \in \setR^I_+ \times \setR^J \mid \max(\norm{\lambda}^2,\norm{\nu}^2) \leq r\}$ and choosing~$r$ such that~$\Phi^\star \subseteq \calD_r$~(\Cref{T:strong_duality}(ii) ensures this is possible). We then incur a bias on the order of~$\eta$ in~\eqref{E:main_convex_fix} that vanishes in the decreasing step size setting of~\eqref{E:main_convex_dec}. Though convenient, this is not \emph{necessary} since there exists a sequence of step sizes such that both~$\E[\norm{\lambda_k}^2]$ and~$\E[\norm{\nu_k}^2]$ are bounded for all~$k \geq 0$. In the interest of generality, Theorem~\ref{T:main_convex} holds without these hypotheses. It is worth noting that though faster rates and last iterates guarantees can be obtained for Euclidean saddle-point problems, they rely on more complex schemes than the GDA in~\Cref{L:pdlmc} involving acceleration or proximal methods~\cite{nemirovski2004prox, nesterov2007dual, lin2020near, mokhtari2020unified}.

The results in~\Cref{T:main_convex} are stated for the stochastic scheme in~\Cref{L:pdlmc}. However, \Cref{T:main_convex} yields the same rates~(without expectations) for exact dual gradients, i.e., for the dual ascent scheme~\eqref{E:simultaneous_dual_ascent}. In this case, the second condition in Assumption~\ref{A:bounded_gradients} simplifies to~$\max_k ( \| \E_{\mu_k}[g] \|^2, \| \E_{\mu_k}[h] \|^2 ) \leq G^2$. Not only are these milder assumptions than~\cite[Eq.~(16)]{Liu21s}, but the guarantees hold for discrete- rather than continuous-time dynamics. Finally, \eqref{E:main_convex_fix}--\eqref{E:main_convex_dec} imply convergence with respect to the KL divergence for convex potentials~($m = 0$) with stronger guarantees in Wasserstein metric for strongly convex ones~($m > 0$).

The convergence rates for distributions~$\mu_k$ from~\Cref{T:main_convex} also imply convergence rates for empirical averages across iterates~$x_k$ of~\Cref{L:pdlmc}. This corollary is obtained by combining~\eqref{E:main_convex_fix}--\eqref{E:main_convex_dec} with the following proposition. By taking~$\varphi$ to be the constraint functions~$g$ or~$h$ from~\eqref{P:constrained_sampling} yields feasibility guarantees for~\pdlmc~.

\begin{proposition}\label{T:ergodic_feasibility}
Consider samples~$x_k$ distributed according to~$\mu_k$ and~$c_k \geq 0$ with~$\sum_{k=1}^K c_k = 1$ such that
$
	\sum_{k = 1}^K c_k \Big[ \KL(\mu_{k} \,\|\, \mu^\star)
		+ \frac{m}{2} W_2^2(\mu_k,\mu^\star)
	\Big] \leq \Delta_K
$ for~$m \geq 0$.
Then, it holds that
\begin{equation*}
	\abs{\E \!\Bigg[ \sum_{k=1}^K c_k \varphi(x_k) \Bigg]
			- \E_{\mu^\star}[\varphi]}
	\leq
	\begin{cases}
		\sqrt{2\Delta_K} \text{,} &\text{if $\varphi$ is bounded by $1$,}
		\\
		\sqrt{\dfrac{2 \Delta_K}{m}} \text{,} &\text{if $\varphi$ is $1$-Lipschitz and $m > 0$.}
	\end{cases}
\end{equation*}

\proof
See~\Cref{X:proof_convex}.
\end{proposition}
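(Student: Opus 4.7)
My plan is to reduce the claim to bounding a weighted sum of single-iterate integral probability metrics, apply Pinsker's inequality (for the bounded case) or Kantorovich-Rubinstein duality (for the Lipschitz case), and finish with Jensen's inequality applied to the concave square root to pull the averaging inside.

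Since $x_k \sim \mu_k$ and $\sum_{k=1}^K c_k = 1$, linearity of expectation gives
\[
    \E\!\left[\sum_{k=1}^K c_k \varphi(x_k)\right] - \E_{\mu^\star}[\varphi]
    = \sum_{k=1}^K c_k \left( \E_{\mu_k}[\varphi] - \E_{\mu^\star}[\varphi] \right),
\]
so by the triangle inequality it suffices to bound $\sum_{k=1}^K c_k \left| \E_{\mu_k}[\varphi] - \E_{\mu^\star}[\varphi] \right|$.

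For the bounded case ($|\varphi| \leq 1$), I would dominate each summand by the total variation distance, $\left| \E_{\mu_k}[\varphi] - \E_{\mu^\star}[\varphi] \right| \leq \norm{\mu_k - \mu^\star}_{\text{TV}}$, and invoke Pinsker's inequality $\norm{\mu - \nu}_{\text{TV}} \leq \sqrt{2\,\KL(\mu \| \nu)}$. For the Lipschitz case ($\varphi$ is $1$-Lipschitz and $m > 0$), I would invoke Kantorovich-Rubinstein duality, which yields $\left| \E_{\mu_k}[\varphi] - \E_{\mu^\star}[\varphi] \right| \leq W_1(\mu_k, \mu^\star) \leq W_2(\mu_k, \mu^\star)$, the last step being Cauchy-Schwarz.

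The final step in both cases is Jensen's inequality on the concave map $\sqrt{\cdot}$ with weights $\{c_k\}$ summing to one:
\[
    \sum_{k=1}^K c_k \sqrt{a_k} \leq \sqrt{\sum_{k=1}^K c_k\, a_k}.
\]
Choosing $a_k = 2\,\KL(\mu_k \| \mu^\star)$ for the bounded case and $a_k = (2/m) \cdot \tfrac{m}{2} W_2^2(\mu_k, \mu^\star)$ for the Lipschitz case, then dropping the other (nonnegative) summand on the left-hand side of the hypothesis, yields the claimed $\sqrt{2\Delta_K}$ and $\sqrt{2\Delta_K/m}$. If the $\mu_k$ are themselves random conditional on previous iterates, the argument goes through unchanged via the tower property and one further application of Jensen's inequality to absorb the outer expectation. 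I anticipate no genuine obstacle beyond convention bookkeeping for the Pinsker constant; the proof is essentially standard once the correct metric is matched to the regularity class of $\varphi$.
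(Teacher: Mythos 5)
Your proposal is correct and follows essentially the same route as the paper's proof: reduce to $\sum_k c_k\,\lvert \E_{\mu_k}[\varphi]-\E_{\mu^\star}[\varphi]\rvert$, control each term via total variation plus Pinsker (bounded case) or $W_1\le W_2$ (Lipschitz case), and pull the weighted average inside the square root by Jensen (which is exactly the paper's ``$\ell_1$--$\ell_2$'' step). The constants work out as you anticipate, so there is nothing to add.
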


\subsection{PD-LMC with LSI potentials}
\label{S:lsi_convergence}

In this section, we replace Assumption~\ref{A:convexity} on the convexity of the potential by an LSI common in the sampling literature. We consider only inequality constraints~($J = 0$) here and omit the function arguments~$\nu$, since accounting for equality constraints requires significant additional assumptions.

\begin{assumption}\label{A:lsi}
	The distribution~$\mu_{\lambda}$ satisfies the LSI for bounded~$\lambda$, i.e.,  there exists~$\sigma > 0$ such~that $2 \sigma \KL(\zeta\|\mu_{\lambda}) \leq	    	\left\Vert \nabla \log \left( d\zeta/d\mu_{\lambda}\right) \right \Vert^2_{L^2(\zeta)}$ 	    	for all $\zeta \in \spprobmeas$.
\end{assumption}

The LSI in Assumption~\ref{A:lsi} is often used in the analysis of the standard LMC algorithm~\cite{Vempala19r, Chewi22a}. It holds, e.g., when~$f$ is strongly convex and~$g$ is a~(possibly non-convex) bounded function due to the Holley-Stroock perturbation theorem~\cite{holley1986logarithmic}. In fact, if~$f$ is $1$-strongly convex and~$\abs{g}$ is bounded by~$1$, then Assumption~\ref{A:lsi} holds for~$\sigma \geq e^{-2\lambda}$~(see, e.g.,~\cite[Prop. 5.1.6]{Bakry14a} or~\cite[Thm~1.1]{Cattiaux22f}). The LSI is akin to the Polyak-{\L}ojasiewicz~(PL) condition from Euclidean optimization~\cite{Karimi16l}, which supposes issues with GDA methods such as~\Cref{L:pdlmc}. Indeed, it is not enough for the Lagrangian~\eqref{E:lagrangian} to satisfy the PL~condition in the primal variable to guarantee the convergence of GDA in Euclidean spaces. We must either modify~\Cref{L:pdlmc} using acceleration or proximal methods~\cite{lin2020near, yang2022nest, boroun2023accelerated, sanjabi2018convergence} or impose the PL~condition also on~$\lambda$~\cite{yang2020global, fiez2021global}. Since the Lagrangian~\ref{E:lagrangian} is linear in~$\lambda$, it is clear that Algorithm~\ref{L:pdlmc} will not suffice to provide theoretical guarantees in the LSI case.

We therefore consider the variant in~\Cref{L:dlmc}, where~$N^0_k$ LMC iterations~(step~\ref{E:pdlmc_x}) are executed before updating the dual variables~(step~\ref{E:pdlmc_lambda}). This is akin to using different time-scales in continuous-time, a common technique for solving saddle-point problems~\cite{fiez2021global, yang2022nest}. Since it resembles a dual ascent counterpart of the LMC algorithm~\eqref{E:lmc}, we refer to it as \emph{(stochastic)~dual~LMC}~(DLMC). As opposed to the dual ascent algorithm from~\cite{Liu21s} in~\eqref{E:simultaneous_dual_ascent}, however, Algorithm~\ref{L:dlmc} does not require any explicit evaluation of expected values. The following theorem provides an analysis of its convergence.

\begin{algorithm}[t]
\caption{(Stochastic) dual LMC}
	\label{L:dlmc}
\begin{algorithmic}[1]
	\State \textbf{Inputs:} $N^0_k > 0$~(burn-in), $\gamma_k,\eta_k > 0$~(step sizes), and~$\lambda_{0} = 0$.

	\For{$k = 0,\dots,K-1$}
		\State $x_0 \sim \mu_0$
			\State $\displaystyle x_{n+1} = x_n - \gamma_k \nabla_{x} U(x_n, \lambda_k) + \sqrt{2\gamma_k}\,\beta_n$, \ \ $\beta_n \sim \calN(0, \Id)$,
			\quad for $n = 0,\dots,N^0_k-1$
			\label{E:dlmc_x}

		\smallskip
		\State $\displaystyle \lambda_{k+1} = \big[ \lambda_{k} + \eta_k g(x_{N^0_k}) \big]_+$
			\label{E:dlmc_lambda}
	\EndFor
\end{algorithmic}
\end{algorithm}

\begin{theorem}\label{T:main_lsi}
	Assume that the functions~$f,g$ are~$M$-smooth, i.e., have $M$-Lipschitz continuous gradients, satisfy Assumption~\ref{A:lsi}, and that~$\E_{\mu}[\norm{g}^2] \leq G^2$ for all~$\mu \in \spprobmeas$. Let~$0 < \eta_k \leq \eta$, $0 < \epsilon \leq \eta G^2 < 1$,
	\begin{equation*}
		\gamma_k = \gamma \leq \dfrac{\sigma \epsilon}{16 d M^2}
		\text{,} \quad \text{and} \quad
	    N^0_k \geq \dfrac{1}{\gamma \sigma} \log \left(
	    	\dfrac{2 \KL(\mu_0 \| \mu_{\lambda_k})}{\epsilon}
	    \right)
	    	\text{.}
	\end{equation*}
	Under Assumption~\ref{A:strict_feasibility}, there exists~$B < \infty$ such that the distributions~$\{\mu_k\}$ of the samples~$\{x_{N^0_k}\}$ generated by~\Cref{L:dlmc} satisfy
	\begin{equation}\label{E:main_lsi}
	    \frac{1}{K} \sum_{k = 0}^{K-1} \KL(\mu_k \| \mu^\star)
	    	\leq \epsilon + \dfrac{\eta G^2}{2} + \dfrac{2 I B^2}{\eta K}
	    	\text{.}
	\end{equation}
	Recall from~\eqref{P:constrained_sampling} that~$I$ is the number of inequality constraints. Additionally, $\E[\norm{\lambda_k}_1]$ is bounded for all~$k$.
\end{theorem}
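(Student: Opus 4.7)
The strategy is to mirror the finite-dimensional analysis of stochastic projected ascent on the concave dual $d(\lambda)$ while absorbing the bias between the exact Lagrangian minimizer $\mu_{\lambda_k}$ and its LMC approximation $\mu_k$. Two standard ingredients combine: (i) the LSI-based convergence rate of LMC controls $\KL(\mu_k\|\mu_{\lambda_k})$ for each fixed dual iterate $\lambda_k$, and (ii) the identity $\E_{\mu_\lambda}[g]=\nabla d(\lambda)$ makes $g(x_{N^0_k})$ a (nearly) unbiased stochastic gradient of $d$. The novel ingredient is a direct algebraic decomposition of $\KL(\mu_k\|\mu^\star)$ that converts a dual descent inequality into a bound on the primal objective.

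For the inner loop I would invoke the standard LMC-under-LSI analysis: since $U(\cdot,\lambda_k)=f+\lambda_k^\top g$ is $M$-smooth and $\mu_{\lambda_k}$ satisfies the $\sigma$-LSI uniformly in $k$, the inner iterates of step~\ref{E:dlmc_x} satisfy a decay of the form $\KL(\zeta_n\|\mu_{\lambda_k})\le e^{-\sigma\gamma n}\KL(\mu_0\|\mu_{\lambda_k})+C\gamma dM^2/\sigma$. The prescribed $\gamma\le\sigma\epsilon/(16dM^2)$ makes the bias term at most $\epsilon/2$ and the chosen $N^0_k$ makes the contraction term at most $\epsilon/2$, yielding $\KL(\mu_k\|\mu_{\lambda_k})\le\epsilon$ deterministically along the trajectory.

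The analytical bridge comes from splitting $\log(\mu_k/\mu^\star)=\log(\mu_k/\mu_{\lambda_k})+\log(\mu_{\lambda_k}/\mu^\star)$ and integrating against $\mu_k$, using $\mu_\lambda\propto e^{-f-\lambda^\top g}$ and $d(\lambda)=\log(Z/Z_\lambda)$, to obtain
\[
\KL(\mu_k\|\mu^\star)=\KL(\mu_k\|\mu_{\lambda_k})+(\lambda^\star-\lambda_k)^\top\E_{\mu_k}[g]+d(\lambda_k)-d(\lambda^\star)\text{.}
\]
Since $d(\lambda_k)\le d(\lambda^\star)$, rearranging gives $(\lambda_k-\lambda^\star)^\top\E_{\mu_k}[g]\le -\KL(\mu_k\|\mu^\star)+\KL(\mu_k\|\mu_{\lambda_k})$. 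Non-expansiveness of $[\cdot]_+$ together with $\lambda^\star\in\setR^I_+$ yields
\[
\|\lambda_{k+1}-\lambda^\star\|^2\le\|\lambda_k-\lambda^\star\|^2+2\eta_k(\lambda_k-\lambda^\star)^\top g(x_{N^0_k})+\eta_k^2\|g(x_{N^0_k})\|^2\text{.}
\]
Conditioning on $\lambda_k$, plugging in the identity, and using $\E_{\mu_k}[\|g\|^2]\le G^2$ and $\KL(\mu_k\|\mu_{\lambda_k})\le\epsilon$ produces
\[
\E\!\big[\|\lambda_{k+1}-\lambda^\star\|^2\,\big|\,\lambda_k\big]\le \|\lambda_k-\lambda^\star\|^2-2\eta_k\E\!\big[\KL(\mu_k\|\mu^\star)\,\big|\,\lambda_k\big]+2\eta_k\epsilon+\eta_k^2G^2\text{.}
\]
Taking total expectation, telescoping over $k=0,\dots,K-1$, dividing by $2\eta K$, and bounding $\|\lambda^\star\|^2$ by a constant proportional to $IB^2$ (which exists by Proposition~\ref{T:strong_duality}(ii)) delivers the claimed primal inequality \eqref{E:main_lsi}. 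Dropping the non-negative $\KL$ term in the same recursion yields $\E[\|\lambda_k-\lambda^\star\|^2]\le\|\lambda^\star\|^2+k(2\eta\epsilon+\eta^2G^2)$, from which $\E[\|\lambda_k\|_1]\le\sqrt{I}\,\E[\|\lambda_k\|]<\infty$ follows at every fixed $k$.

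The main obstacle is that the \emph{uniform-in-$k$} validity of the LSI constant $\sigma$ and of the smoothness constant $M$ are implicit above, while Assumption~\ref{A:lsi} is only posited for \emph{bounded} $\lambda$; both depend on the trajectory of $\lambda_k$. A clean fix is to project the dual iterates onto a compact set $\calD_r\supseteq\Phi^\star$ as in the proof of Theorem~\ref{T:main_convex}, which preserves every step of the argument above and forces $\sigma,M$ uniform (via, e.g., Holley--Stroock). Alternatively, one proceeds inductively, using the a-priori bound on $\E[\|\lambda_k\|^2]$ derived in the last step to verify that $\lambda_k$ remains in a set on which the stated constants apply.
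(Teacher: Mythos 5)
Your proposal is correct for the main inequality~\eqref{E:main_lsi} and uses the same two pillars as the paper: the Vempala--Wibisono inner-loop bound to guarantee $\KL(\mu_k\|\mu_{\lambda_k})\le\epsilon$, and the exact decomposition $\KL(\mu_k\|\mu^\star)=\KL(\mu_k\|\mu_{\lambda_k})+(\lambda^\star-\lambda_k)^\top\E_{\mu_k}[g]+d(\lambda_k)-d(\lambda^\star)$ (which is the paper's Lemma~\ref{T:subgrad_approx} in disguise). Where you genuinely diverge is in how the cross term is disposed of. The paper averages the decomposition over $k$ and then bounds the two pieces of $(\lambda^\star-\lambda_k)^\top\E_{\mu_k}[g]$ separately: the $\lambda^\star$ piece via H\"{o}lder plus an ergodic feasibility bound, and the $\lambda_k$ piece via an approximate complementary slackness lemma (Lemma~\ref{T:complementary_slackness}); both of these in turn lean on the uniform bound $\E[\|\lambda_k-\lambda^\star\|^2]\le B^2$ from Lemma~\ref{T:norm_bound}. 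You instead substitute the rearranged decomposition directly into the non-expansiveness recursion for $\|\lambda_{k+1}-\lambda^\star\|^2$ and telescope --- the classical regret analysis of projected stochastic subgradient ascent on the concave dual. This is cleaner: it collapses Lemmas~\ref{T:complementary_slackness} and most of~\ref{T:norm_bound} into one step, and for~\eqref{E:main_lsi} it only requires $\|\lambda^\star\|<\infty$ (Proposition~\ref{T:strong_duality}(ii)), producing the constant $\|\lambda^\star\|^2/(2\eta K)$, which is dominated by the paper's $2IB^2/(\eta K)$. Your remark about the uniformity of $\sigma$ and of the effective smoothness of $U(\cdot,\lambda_k)$ along the random trajectory is apt; the paper quietly absorbs this into Assumption~\ref{A:lsi}.

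The one place your argument delivers less than the paper is the final claim that $\E[\|\lambda_k\|_1]$ is bounded for all $k$. Your telescoped recursion, after dropping the KL term, gives $\E[\|\lambda_k-\lambda^\star\|^2]\le\|\lambda^\star\|^2+k(2\eta\epsilon+\eta^2G^2)$, i.e., finiteness at each fixed $k$ but a bound growing linearly in $k$. The paper's Lemma~\ref{T:norm_bound} establishes a bound \emph{uniform in $k$}: it uses Slater's condition to show that the approximate dual level set $\calD=\{\lambda\ge 0: d(\lambda)\ge D^\star-\eta G^2-\epsilon\}$ is contained in an $\ell_1$-ball of radius $B_0=(C-D^\star+\eta G^2+\epsilon)/\delta$, and then argues by cases ($\lambda_k\in\calD$ versus $\lambda_k\notin\calD$, where the drift is strictly negative) that $\E[\|\lambda_k-\lambda^\star\|^2]\le B^2$ for every $k$. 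If you want the uniform statement --- which is also what makes $B$ in~\eqref{E:main_lsi} a fixed constant rather than a per-$k$ quantity in the paper's own route --- you need this level-set/drift argument (or the dual projection onto $\calD_r$ that you mention as an alternative fix); your recursion alone does not supply it.
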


\Cref{T:main_lsi}, whose proof is deferred to~\Cref{X:proof_lsi}, provides similar guarantees as (approximate)~subgradient methods in finite-dimensional optimization~(see, e.g., \cite{sanjabi2018convergence, Chamon23c}). This is not surprising seen as~$\gamma_k,N^0_k$ in~\Cref{T:main_lsi} are chosen to ensure that step~\ref{E:dlmc_x} yields a sample~$x_{k} \sim \bar{\mu}_{k}$ such that~$\KL(\bar{\mu}_{k} \| \mu_{\lambda_k}) \leq \epsilon$ using~\cite[Theorem~1]{Vempala19r}. At this point, $g(x_{N^0_k})$ in step~\ref{E:dlmc_lambda} is an approximate, stochastic subgradient of the dual function~\eqref{E:dual_function}. Though it may appear from~\eqref{E:main_convex_fix} and~\eqref{E:main_lsi} that Algorithms~\ref{L:pdlmc} and~\ref{L:dlmc} have the same convergence rates, an informal computation shows that the latter evaluates on the order of~$d\kappa^2/\eta$ as many gradient per iteration, where~$\kappa = M/\sigma$. Note that we can once again apply~\Cref{T:main_lsi} to derive ergodic average and feasibility guarantees for Algorithm~\ref{L:dlmc}.

\section{Experiments}\label{S:experiments}

\begin{figure}
	\includegraphics[width=\linewidth]{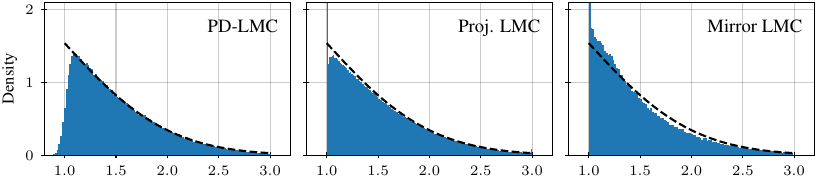}

	\caption{Sampling from a 1D truncated Gaussian~(ground truth displayed as dashed lines).}
	\label{F:1d_truncated_gaussian}
\end{figure}

\begin{figure}
	\centering
	\includegraphics{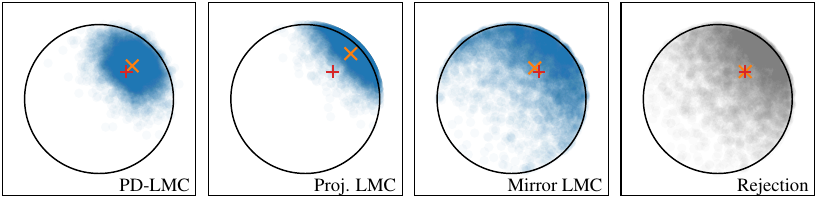}

	\caption{Sampling from a 2D truncated Gaussian (true mean in red and sample mean in orange).}
	\label{F:2d_truncated_gaussian}
\end{figure}

We now return to the applications described in~\Cref{S:csmp} to showcase the behavior of \pdlmc. We defer implementation details and additional results to~\Cref{X:applications}. Code for these examples is publicly available at~\url{https://www.github.com/lfochamon/pdlmc}.

\textbf{1. Sampling from convex sets.} We cast the problem of sampling from a Gaussian distribution~$\calN(0,1)$ truncated to~$\calC = [1,3]$ as~\eqref{P:constrained_sampling} by taking~$f(x) = x^2/2$ and~$g(x) = [(x-1)(x-3)]_+$~(see~\Cref{S:csmp}). Fig.~\ref{F:1d_truncated_gaussian} shows histograms for the samples obtained using~\pdlmc, the projected LMC~(Proj.\ LMC) from~\cite{Bubeck18s}, and the mirror LMC from~\cite{Ahn20e}, all with the same step size. Both Proj.\ LMC and Mirror LMC generate an excess of samples close to the boundary~(between~$1.5$ and~$3$ times more samples than expected). This leads to an underestimation of the mean~(Proj.\ LMC:~$1.488$ / Mirror LMC:~$1.470$ vs.\ true mean:~$1.510$). In contrast, \pdlmc provides a more accurate estimate~($1.508$). Yet, since it constrains the distribution~$\mu$ rather than its samples, it is not an \emph{interior-point method} and can produce samples outside of~$\calC$. Theorems~\ref{T:main_convex}--\ref{T:main_lsi} show that this becomes less frequent as the algorithm progresses~(in Fig.~\ref{F:1d_truncated_gaussian}, only~$2\%$ of the samples are not in~$\calC$). This occurs even without using \emph{mini-batches} in steps~\ref{E:pdlmc_lambda}--\ref{E:pdlmc_nu} of~\Cref{L:pdlmc} as in~\cite{Liu21s}. In fact, our experiments show that \emph{mini-batches} increase the computational complexity with no performance benefit~(\Cref{X:applications}). These issues are exacerbated in more challenging problems, such as sampling from a two-dimensional standard Gaussian centered at~$[2,2]$ restricted to an unit~$\ell_2$-norm ball~(Fig.~\ref{F:2d_truncated_gaussian}). In this case, Proj.\ LMC places almost~$25\%$ of its samples on the boundary~(where only~$0.14\%$ of samples should be), while \pdlmc only places~$1.8\%$ of its samples outside of the support. Mirror LMC provides a better mean estimation in this setting, although a bit more asymmetric than \pdlmc~[Mirror LMC: $(0.312,0.418)$ vs.\ \pdlmc: $(0.446,0.444)$ vs.\ true mean: $(0.368,0.368)$].

\textbf{2. Rate-constrained Bayesian models.} Here, we consider~$\pi$ to be the posterior of a Bayesian logistic regression model for the Adult dataset from~\cite{Dua17u}, where the goal is to predict whether an individual makes more than~$\$50$k based on socioeconomic information~(details on data pre-processing can be found in~\cite{Chamon20p}). We consider a standard Gaussian prior on the parameters~$\theta \in \setR^{d+1}$ of the model, where~$d$ is the number of features. Using the LMC algorithm to sample from the posterior~(i.e., no constraints), we find that while the average probability of positive predictions is~$19.1\%$ over the whole test set, it is~$26.2\%$ among males and~$5\%$ among females~(``Unconstrained'' in Fig.~\ref{F:fairness_main}).
To overcome this disparity, we take \emph{gender} to be the protected class in~\eqref{P:fairness}, constraining both~$\calG_\text{male}$ and~$\calG_\text{female}$ with~$\delta = 0.01$. Using \pdlmc, we obtain a Bayesian model that leads to an average probability of positive outcomes of~$18.1\%$ and~$15.1\%$ for males and females respectively. In fact, we now observe a substantial overlap of the distributions of positive predictions across genders for the constrained posterior~$\mu^\star$~(``Constrained ($\delta=0.01$)'' in Fig.~\ref{F:fairness_main}). This substantial reduction of prediction disparities comes at only a minor decline in accuracy~(unconstrained: $84\%$ vs constrained: $82\%$).

\begin{figure}[t]
	\begin{minipage}[b]{0.49\columnwidth}
		\includegraphics[width=\linewidth]{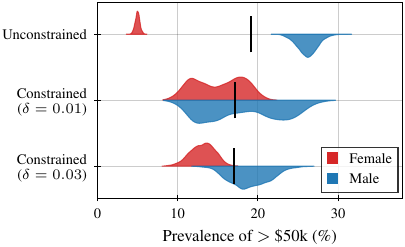}
		\caption{Distribution of the probability of~predicting $>\,$50k under the Bayesian logistic model (black lines indicate the mean across genders).}
		\label{F:fairness_main}
	\end{minipage}
	\hfill
	\begin{minipage}[b]{0.49\columnwidth}
		\centering
		\includegraphics[width=\linewidth]{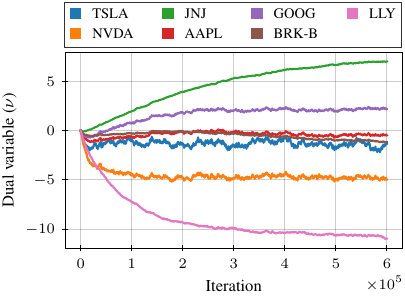}

		\caption{Counterfactual sampling of the stock market: dual variables.\\}
		\label{F:stock_return}
	\end{minipage}
\end{figure}

\textbf{3. Counterfactual sampling.} Though the distribution of positive predictions changes considerably for both male and female individuals, the final dual variables~($\lambda_\text{male} = 0$ and~$\lambda_\text{female} \approx 160$) show that these changes are due uniquely to the \emph{female} group~[as per Prop.~\ref{T:strong_duality}(iv)]. This implies that the reference model~$\pi$ is itself compatible with the requirement for the male group, but that reducing the disparity for females requires considerable deviations from it. By examining~$\lambda_\text{female}$, we conclude \emph{without recalculating~$\mu^\star$} that even small changes in the tolerance~$\delta$ for the female constraint would substantially change the distribution of outcomes~[Prop.~\ref{T:strong_duality}(v)]. This is confirmed by ``Constrained ($\delta=0.03$)'' in Fig.~\ref{F:fairness_main}. Notice that this is only possible due to the primal-dual nature of \pdlmc. This type of counterfactual analysis is even more beneficial in the presence of multiple requirements. Indeed, let~$\pi$ be the posterior of a Bayesian model for the daily (log-)return of a set of assets~(see Appendix~\ref{X:applications} for more details). Using~\eqref{P:porfolio}, we consider how the market would look like if the average (log-)return of each asset were to have been~(exactly)~$20\%$ higher. Inspecting the dual variables~(Fig.~\ref{F:stock_return}), we notice that this increased market return is essentially driven by two stocks: \texttt{NVDA} and~\texttt{LLY}~($\nu < 0$). In fact, the reference model~$\pi$ would be consistent with an even higher increase for~\texttt{JNJ} and~\texttt{GOOG}~($\nu > 0$). We confirm these observations by constraining only~\texttt{NVDA} and~\texttt{LLY}, which yields essentially the same (log-)return distribution for all assets.

\section{Conclusion}

We tackled the problem of sampling from a target distribution while satisfying a set of statistical constraints. Based on a GDA method in Wasserstein space, we put forward a fully stochastic, discrete-time primal-dual LMC algorithm~(\pdlmc) that precludes any explicit integration in its updates. We analyze the behavior of \pdlmc for (strongly)~convex and log-Sobolev potentials, proving that the distribution of its samples converges to the optimal constrained distribution. We illustrated the use of \pdlmc for different constrained sampling applications. Future work include strengthening the convergence results to almost sure guarantees and improving the rates obtained using proximal and extra gradient methods, particularly in the LSI setting.

\begin{ack}
	The work of L.F.O. Chamon is funded by the Deutsche Forschungsgemeinschaft~(DFG, German Research Foundation) under Germany’s Excellence Strategy~(EXC 2075-390740016).
\end{ack}

\bibliographystyle{IEEEtran}
\bibliography{aux_files/biblio,aux_files/mrk,aux_files/ml,aux_files/control,aux_files/stat}

\newpage
\appendix

\section{Related work}
	\label{X:related_work}
% !TEX root=2024_camera_ready_neurips.tex

In constrained sampling, it is important to distinguish between two types of constraints: support constraints and statistical constraints. The former deals with sampling from a target distribution~$\pi$ that is supported on a proper subset~$\calX \subset \setR^d$, which arises in applications such as latent Dirichlet allocation~\cite{blei2003latent} and regularized regression \cite{celeux2012regularization}. The latter is the problem tackled in the current work.

A first family of constrained sampling methods relies on rejection sampling: it obtains samples via any~(unconstrained) method, rejecting those that violate the constraint~(see, e.g., \cite{Lang07b, Li15e}). Though this approach can handle constraints of any nature, it is often inefficient in terms of number of samples generated per iteration of the method~(effective number of samples), especially when confronted with intricate constraints and high dimensional problems.

These drawbacks can be addressed for support constraints using techniques inspired by finite-dimensional constrained optimization. Projected LMC, for instance, deals with the problem of sampling from a target distribution restricted to a convex set~\cite{Bubeck18s, Lamperski21p}. Barrier methods have also been used to tackle the same problem~\cite{Li22s, Noble23u}. Similarly, mirror and proximal descent versions of popular sampling algorithms such as LMC~\cite{Hsieh18m, Zhang20w, Salim20p, Ahn21e, Jiang21m, Srinivasan23f} and Stein Variational Gradient Descent~(SVGD)~\cite{Shi22s} have been proposed. Mirror descent algorithms enforce constraints by mapping~(mirroring) the primal variables to a space with a different geometry~(induced by a Bregman divergence) over which the optimization is carried out. Alternatively, methods adapted to target distributions support on manifolds have also been put forward~\cite{Girolami11r, Brubaker12a, Noble23u}. In practice, these methods require explicit expressions for the projections, barriers, and mirror maps describing the constraints and are therefore not adapted to handle statistical requirements such as those considered in~\eqref{P:constrained_sampling}. Langevin dynamics with constraint violation penalties were considered in~\cite{Gurbuzbalaban22p}, although they cannot enforce exact constraint satisfaction.

Statistical~(moment) constraints such as those considered in~\eqref{P:constrained_sampling} were investigated in~\cite{Liu21s}. As we discussed at the end of Section~\ref{S:lagrangian}, this paper considers the combination of LMC and approximate dual ascent shown in~\eqref{E:simultaneous_dual_ascent}. It also introduces a similar version of SVGD as well as algorithms based on barriers. Aside from requiring exact integration against intractable measures~(namely,~$\mu_k$), convergence guarantees for these methods hold under restrictive assumptions on the constraints~$g_i$. Additionally, guarantees are derived only for continuous-time~(gradient flows) dynamics.

This work is also closely related to saddle-point methods in finite-dimensional optimization. For the general problem of~$\max_x\ \min_y\ f(x,y)$, the behavior of descent-ascent methods have been investigated under a myriad of scenarios, including for functions~$f$ that are (strongly)~convex/(strongly)~concave~\cite{tseng1995linear, nemirovski2004prox, nesterov2007dual, Nedic09a, lin2020near, mokhtari2020unified, golowich2020last} as well as non-convex/non-concave under, e.g., PL conditions~\cite{lin2020near, yang2020global, fiez2021global, yang2022nest, boroun2023accelerated}. In general, convergence holds for the ergodic average of iterates~\cite{Nedic09a, yang2020global, golowich2020last}. Last-iterate results often require different algorithms, involving proximal point or extra gradient methods and time scale separation~\cite{lin2020near, golowich2020last, mokhtari2020unified}. In particular, guarantees for the GDA method used in~\Cref{L:pdlmc} often requires stringent conditions that are hard to enforce for dual problems such as~\eqref{P:dual}.

\newpage
\section{Wasserstein space and discrete-time flows}
	\label{X:background_wass}
% !TEX root=2024_camera_ready_neurips.tex

In this subsection we give some background on the Wasserstein spaces and sampling as optimization. For $\mu,\nu \in \spprobmeas$, we define the 2-Wasserstein distance as
\begin{equation*}
	W_2^2(\mu,\nu)= \inf_{s \in \mathcal{S}(\mu,\nu)} \int \|x-y\|^2 ds(x,y),
\end{equation*}
where $\mathcal{S}(\mu,\nu)$ is the set of couplings between $\mu$ and $\nu$. The metric space~$(\spprobmeas,W_2)$ is referred to as the \emph{Wasserstein space}~\cite{panaretos2020invitation}. It can be equipped with a Riemannian structure~\cite{Otto01t}. In this geometric interpretation, the tangent space to~$\spprobmeas$ at~$\mu$ is included in~$L^2(\mu)$ and is equipped with a scalar product defined for~$f,g \in L^2(\mu)$ as
\begin{equation*}
\langle f,g \rangle_{L^2(\mu)}=\int f(x)g(x)d\mu(x).
\end{equation*}
We use its differential structure as introduced in \cite{Otto01t} and \cite[Chapter 10]{Ambrosio05g}. For the functionals at stake in this paper~(e.g., potential energies and negative entropy), the set of subgradients of a functional~$\mathcal{F}:\mathcal{P}_2(\setR^d)\to \setR$ at~$\mu\in \mathcal{P}(\setR^d)$ is non-empty if~$\mu$ satisfies some Sobolev regularity~\cite[Theorem 10.4.13]{Ambrosio05g}.

A \emph{Wasserstein gradient flow} of $\mathcal{F}$ is a solution $(\mu(t))_{t \in (0, T)}$, $T > 0$, of the continuity equation
\begin{equation*}
\frac{\partial \mu(t)}{\partial t} + \nabla \cdot (\mu(t) v(t)) = 0
\end{equation*}
that holds in the distributional sense, where $v(t)$ is a subgradient of $\mathcal{F}$ at $\mu(t)$. Among the possible processes $ (v(t))_t $, one has a minimal $ L^2(\mu(t)) $ norm and is called the velocity field of $ (\mu(t))_t $.
In a Riemannian interpretation of the Wasserstein space \cite{Otto01t},
this minimality condition can be characterized by $v(t)$ belonging to the tangent space to $\spprobmeas$ at $\mu(t)$ denoted $ T_{\mu(t)} \spprobmeas $, which is a subset of $ L^2(\mu(t))$. The Wasserstein gradient is defined as this unique element, and is denoted  $\nabla_{W_2}\mathcal{F}(\mu(t))$. In particular, if $\mu\in \spprobmeas$ is absolutely continuous with respect to the Lebesgue measure, with density in $ C^1(\setR^d)$ and such that $\mathcal{F}(\mu)<\infty$, $
\nabla_{W_2}\mathcal{F}(\mu)(x)=\nabla\mathcal{F}'(\mu)(x)$ for $\mu$-a.e.  $x \in \setR^d$, where $\mathcal{F}'(\mu)$ denotes
the first variation of $\mathcal{F}$ evaluated at $\mu$, i.e. (if it exists) the unique function $\mathcal{F}'(\mu):\setR^d \rightarrow \setR$ s.t.
\begin{equation}\label{eq:first_var}
\lim_{\epsilon \rightarrow 0}\frac{1}{\epsilon}(\mathcal{F}(\mu+\epsilon  \xi) -\mathcal{F}(\mu))=\int
\mathcal{F}'(\mu)(x)d \xi(x)
\end{equation}
for all $\xi=\nu-\mu$, where $\nu \in \spprobmeas.$

Now, we denote by~$ T_{\#}\mu$ the pushforward measure of~$\mu\in \spprobmeas$ by the measurable map~$T$. We recall that the KL divergence of $\mu$ relative to $\pi$ can be decomposed as~\eqref{E:kl}.
The distribution~$\mu_k$ of~$x_k$ in~\eqref{E:lmc} is known to follow a ``forward-flow'' splitting scheme~\citep{Wibisono18s} of the Fokker-Planck equation in~\eqref{E:langevin_diffusion}, namely
\begin{equation}\label{E:splitting}\begin{aligned}	\mu_{k+1/2} &= \big[\I - \gamma_k \nabla_{W_2} \calV(\mu_k) \big]_{\#}\mu_k	&&\text{(forward discretization for $\calV$)}	\\	\mu_{k+1} &= \mu_{k+1/2} \star \calN(0, 2 \gamma_k \Id)	&&\text{(exact flow for $\calH$)}		\text{,}
\end{aligned}\end{equation}
where~$\I$ denotes the identity map in~$L^2(\mu_k)$ and~$\nabla_{W_2} \calV(\mu_k) = \nabla_x f(x)$.

\newpage
\section{Proofs from Section~\ref{S:convex_convergence}}
	\label{X:proof_convex}
% !TEX root=2024_camera_ready_neurips.tex

\begin{proof}[Proof of Theorem~\ref{T:main_convex}]

Consider the potential
\begin{equation}\label{E:lyapunov}
    V(\mu,\lambda,\nu) \triangleq W_2^2(\mu, \mu^\star)
    	+ \min_{(\lambda^\star,\nu^\star) \in \Phi^\star} \| \lambda - \lambda^\star \|^2
    	+ \| \nu - \nu^\star \|^2
    	\text{,}
\end{equation}
where~$\mu^\star$ is the solution of the constrained sampling problem~\eqref{P:constrained_sampling} and~$\Phi^\star$ is the set of solutions of the dual problem~\eqref{P:dual}. Our goal is to show that~$V$ decreases~(in some sense) along trajectories of Algorithm~\ref{L:pdlmc}. We say ``in some sense'' because contrary to the standard LMC algorithm, the distribution~$\mu_k$ of~$x_k$ is now a random measure that depends on the random variables~$\{\lambda_k,\nu_k\}$. Explicitly, we consider the filtration~$\calF_k = \sigma(\mu_0, \{\lambda_\ell,\nu_\ell\}_{\ell \leq k})$ and show that~$V$ decreases on average when conditioned on~$\calF_k$. This turns out to be enough to prove Theorem~\ref{T:main_convex}.

Indeed, notice that~$\mu_{k} \in \calF_k$. Hence, the potential energy $\calE(\mu_k,\lambda_k,\nu_k) \in \calF_{k}$ and the conditional law~$\mut_k = \calL(x_k \vert \calF_{k-1})$ evolves as in the regular LMC algorithm~\eqref{E:lmc}. That is to say, conditioned on the event~$\calF_k$, step~5 of Algorithm~\ref{L:pdlmc} follows a splitting scheme as ~\eqref{E:splitting}, i.e.,
\begin{subequations}\label{E:pdlmc}
\begin{align}
	\mut_{k+1/2} &= \big[\I - \eta_k \nabla_{W_2} \calE(\mu_k,\lambda_k,\nu_k) \big]_{\#}\mu_k
		\label{E:pdlmc_mu_grad}
	\\
	\mut_{k+1} &= \mut_{k+1/2} \star \calN(0, 2 \gamma \Id)
		\label{E:pdlmc_mu_flow}
\end{align}
\end{subequations}
Notice that all distributions in~\eqref{E:pdlmc} are now deterministic. The core of the proof is collected in the following lemma that shows that~$V$ is a non-negative supermartingale. Note that Lemma~\ref{T:lyapunov} describes the gap between ``half-iterations.'' This will be inconsequential for our purposes.

\begin{lemma}\label{T:lyapunov}
	Under the conditions of Theorem~\ref{T:main_convex}, we have
	\begin{multline}\label{E:supermartingale}
		\E\! \big[ V(\mut_{k+1/2},\lambda_{k+1},\nu_{k+1}) \vert \calF_k \big]
			\leq V(\mut_{k-1/2},\lambda_{k},\nu_{k})
		- 2\eta_k \bigg[
			\KL(\mut_{k} \,\|\, \mu^\star) + \dfrac{m}{2} W_2^2(\mut_k,\mu^\star)
		\bigg]
		\\
		{}+ \eta_k^2 \Big[
			\| \nabla_{W_2} \mathcal{E}(\mut_k,\lambda_k,\nu_k) \|^2_{L^2(\mut_k)}
			+ \E_{x \sim \mut_k} \norm{g(x)}^2 + \E_{x \sim \mut_k} \norm{h(x)}^2
		\Big]
		\text{.}
	\end{multline}
\end{lemma}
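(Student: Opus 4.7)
The plan is to transpose the classical primal--dual analysis of Euclidean GDA to Wasserstein space by controlling the drift of the Lyapunov functional~$V$ from~\eqref{E:lyapunov} in conditional expectation. The whole argument takes place under $\E[\,\cdot\mid\calF_k]$, where $(\lambda_k,\nu_k,\mut_{k-1/2},\mut_k)$ are deterministic and only $x_k$, with conditional law $\mut_k$, remains random. First fix any pair $(\lambda^\star,\nu^\star)\in\Phi^\star$ (nonempty by~\Cref{T:strong_duality}(ii) under~\Cref{A:strict_feasibility}); since $V$ is defined via a minimum over $\Phi^\star$,
\begin{equation*}
V(\mut_{k+1/2},\lambda_{k+1},\nu_{k+1}) \leq W_2^2(\mut_{k+1/2},\mu^\star) + \|\lambda_{k+1}-\lambda^\star\|^2 + \|\nu_{k+1}-\nu^\star\|^2,
\end{equation*}
and the three terms can be controlled independently. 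At the end, $(\lambda^\star,\nu^\star)$ is chosen to realise the minimum in $V(\mut_{k-1/2},\lambda_k,\nu_k)$ on the RHS.

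For the dual terms, I would use the non-expansiveness of the projection onto~$\setR^I_+$ (valid because $\lambda^\star\geq 0$) to bound $\|\lambda_{k+1}-\lambda^\star\|^2 \leq \|\lambda_k-\lambda^\star+\eta_k g(x_k)\|^2$, and analogously for $\|\nu_{k+1}-\nu^\star\|^2$. Expanding and taking $\E[\,\cdot\mid\calF_k]$, linearity of~$L$ in the dual variables identifies the cross-term as $2\eta_k\bigl[L(\mut_k,\lambda_k,\nu_k)-L(\mut_k,\lambda^\star,\nu^\star)\bigr]$, with a quadratic remainder $\eta_k^2(\E_{\mut_k}\|g\|^2+\E_{\mut_k}\|h\|^2)$. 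For the primal term, the pushforward identity~\eqref{E:pdlmc_mu_grad} gives $W_2^2(\mut_{k+1/2},\mu^\star) \leq \int \|T_k(x)-y\|^2\,ds(x,y)$ along the optimal coupling~$s$ of $(\mut_k,\mu^\star)$, with $T_k(x)=x-\eta_k\nabla_x U(x,\lambda_k,\nu_k)$. Expanding and invoking the $m$-strong Wasserstein convexity of~$\calE$ (\Cref{A:convexity}) converts the linear term into $-2\eta_k[\calE(\mut_k,\lambda_k,\nu_k)-\calE(\mu^\star,\lambda_k,\nu_k)]-m\eta_k W_2^2(\mut_k,\mu^\star)$, plus the remainder $\eta_k^2\|\nabla_{W_2}\calE\|^2_{L^2(\mut_k)}$.

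Summing the primal and dual bounds and substituting $\calE=L-\calH+\log Z$, the coefficient of $L(\mut_k,\lambda_k,\nu_k)$ cancels and the mixed terms collapse to $2\eta_k[L(\mu^\star,\lambda_k,\nu_k)-L(\mut_k,\lambda^\star,\nu^\star)] + 2\eta_k[\calH(\mut_k)-\calH(\mu^\star)]$. The saddle-point inequality~\Cref{T:strong_duality}(iii), together with the identity $L(\mut_k,\lambda^\star,\nu^\star) - L(\mu^\star,\lambda^\star,\nu^\star) = \KL(\mut_k\|\mu^\star)$ obtained from $\mu^\star=\mu_{\lambda^\star\nu^\star}$ (\Cref{T:strong_duality}(iv)), bounds the first bracket by $-\KL(\mut_k\|\mu^\star)$. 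Combined with the $-m\eta_k W_2^2(\mut_k,\mu^\star)$ coming from the primal analysis, this delivers the advertised $-2\eta_k[\KL(\mut_k\|\mu^\star)+(m/2)W_2^2(\mut_k,\mu^\star)]$ drift, up to the entropy residual $2\eta_k[\calH(\mut_k)-\calH(\mu^\star)]$ and the mismatch between $W_2^2(\mut_k,\mu^\star)$ and $W_2^2(\mut_{k-1/2},\mu^\star)$ appearing on the LHS versus the RHS.

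The main obstacle, and the only place where the argument departs from the Euclidean GDA template, is reconciling these last two items. I would resolve them simultaneously via the evolution variational inequality (EVI) for the heat flow $\mut_{k-1/2}\mapsto\mut_k$, which is the Wasserstein gradient flow of the $0$-geodesically convex functional~$\calH$. Integrating the EVI over the duration~$\eta_{k-1}$ of that sub-step and using the monotone decrease of~$\calH$ along the heat flow yields $W_2^2(\mut_k,\mu^\star) - W_2^2(\mut_{k-1/2},\mu^\star) \leq 2\eta_{k-1}[\calH(\mu^\star)-\calH(\mut_k)]$, which cancels the entropy residual exactly for constant step sizes; for decreasing~$\eta_k$, the mismatch is of lower order and telescopes when summing over~$k$ in the proof of~\Cref{T:main_convex}. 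The exact-dual-gradient variant stated at the end of the theorem follows from the same argument simply by replacing the conditional expectations in the dual step with deterministic integrals against~$\mut_k$.
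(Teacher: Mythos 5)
Your proposal is correct and follows essentially the same route as the paper's proof: a coupling argument plus $m$-strong geodesic convexity for the primal term, non-expansiveness of the projection and linearity of $L$ in $(\lambda,\nu)$ for the dual terms, the saddle-point property combined with $L(\mut_k,\lambda^\star,\nu^\star)-L(\mu^\star,\lambda^\star,\nu^\star)=\KL(\mut_k\|\mu^\star)$, and the EVI for the heat flow (invoked in the paper as \cite[Lemma~5]{Durmus19a}) to absorb the entropy residual into the shift from $W_2^2(\mut_k,\mu^\star)$ to $W_2^2(\mut_{k-1/2},\mu^\star)$. Your remark that this heat-flow sub-step actually has duration $\eta_{k-1}$, so the cancellation is exact only for constant step sizes, is a subtlety the paper's \eqref{E:flow_bound} silently writes with coefficient $\eta_k$; otherwise the two arguments coincide.
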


We defer the proof of Lemma~\ref{T:lyapunov} to show how it implies the bounds in Theorem~\ref{T:main_convex}. To do so, take the expectation of~\eqref{E:supermartingale} with respect to~$\{\lambda_\ell,\nu_\ell\}_{\ell \leq k}$ to obtain
\begin{multline}\label{E:avg_bound0}
	\E[ \Delta_k ] \triangleq
		\E\! \big[ V(\mut_{k+1/2},\lambda_{k+1},\nu_{k+1}) - V(\mut_{k-1/2},\lambda_{k},\nu_{k}) \big] \leq
	\\
	{}- 2\eta_k \E\! \bigg[
		\KL(\mut_{k} \,\|\, \mu^\star) + \dfrac{m}{2} W_2^2(\mut_k,\mu^\star)
	\bigg]
	\\
	{}+ \eta_k^2 \Big[
		\| \nabla_{W_2} \mathcal{E}(\mut_k,\lambda_k,\nu_k) \|^2_{L^2(\mu_k)}
		+ \E_{x \sim \mu_k} \norm{g(x)}^2 + \E_{x \sim \mu_k} \norm{h(x)}^2
	\Big]
	\text{.}
\end{multline}
Using the bounds in Assumption~\ref{A:bounded_gradients} then yields
\begin{multline}\label{E:avg_bound1}
	\E[ \Delta_k ] \triangleq
		\E\! \big[ V(\mut_{k+1/2},\lambda_{k+1},\nu_{k+1}) - V(\mut_{k-1/2},\lambda_{k},\nu_{k}) \big] \leq
	\\
	{}- 2\eta_k \E\! \bigg[
		\KL(\mut_{k} \,\|\, \mu^\star) + \dfrac{m}{2} W_2^2(\mut_k,\mu^\star)
	\bigg]
	+ \eta_k^2 \big( 3 + \E [\norm{\lambda_k}^2] + \E[\norm{\nu_k}^2] \big) G^2
	\text{.}
\end{multline}
Then, summing the LHS of~\eqref{E:avg_bound1} over~$k$ and using the fact that~$V$ is non-negative yields
\begin{equation}\label{E:delta_sum}
    \sum_{k = 1}^K \E[ \Delta_k ] =
    	\E\! \big[ V(\mut_{K+1/2},\lambda_{k+1},\nu_{k+1})  \big] - V(\mut_{1/2}, \lambda_{1}, \nu_{1})
    		\geq - \E [V(\mut_{1/2}, \lambda_{1}, \nu_{1})]
  		\text{.}
\end{equation}
Notice that the expectation here is taken only over~$\mu_0$ given that~$(\lambda_0,\nu_0) = (0,0)$ are deterministic. To proceed, we use the following proposition, whose proof we defer.
\begin{lemma}\label{T:R_0}
	Under the hypothesis of Theorem~\ref{T:main_convex} it holds that~$\E [V(\mut_{1/2}, \lambda_{1}, \nu_{1})] \leq R_0^2$ for
	\begin{equation*}
	    R_0^2 = W_2^2(\mu_{0},\mu^\star)
	    + \eta_0^2 \Big[
	    	\| \nabla f \|_{L^2(\mu_0)}^2 + \E_{\mu_0} [\| g \|^2] + \E_{\mu_0} [\| h \|^2]
	    \Big]
	    + \norm{\lambda^\star}^2 + \norm{\nu^\star}^2
	    	\text{.}
	\end{equation*}
\end{lemma}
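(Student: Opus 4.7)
The plan is to bound each of the three components of
\begin{equation*}
V(\mut_{1/2}, \lambda_1, \nu_1) = W_2^2(\mut_{1/2}, \mu^\star) + \min_{(\lambda^\star,\nu^\star)\in\Phi^\star}\!\big[\|\lambda_1-\lambda^\star\|^2 + \|\nu_1-\nu^\star\|^2\big]
\end{equation*}
directly from the updates in Algorithm~\ref{L:pdlmc}, exploiting the trivial initialization $\lambda_0=\nu_0=0$. First I would fix any multiplier pair $(\lambda^\star,\nu^\star)\in\Phi^\star$, whose existence is ensured by Proposition~\ref{T:strong_duality}(ii), so that the minimum in $V$ is upper-bounded by its value at this particular pair.

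Next I would derive each piece separately. For the inequality multipliers, step~\ref{E:pdlmc_lambda} gives $\lambda_1=[\eta_0 g(x_0)]_+$, and a per-coordinate case analysis exploiting that both $\lambda^\star_i\geq 0$ and $[\eta_0 g_i(x_0)]_+\geq 0$ shows that the cross term has the right sign, yielding the clean pointwise bound $\|\lambda_1-\lambda^\star\|^2 \leq \eta_0^2\|g(x_0)\|^2 + \|\lambda^\star\|^2$. For the equality multipliers, $\nu_1=\eta_0 h(x_0)$ and direct expansion gives $\|\nu_1-\nu^\star\|^2 = \eta_0^2\|h(x_0)\|^2 - 2\eta_0\langle h(x_0),\nu^\star\rangle + \|\nu^\star\|^2$. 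For the Wasserstein distance, since $\lambda_0=\nu_0=0$ implies $\nabla_x U(x,0,0)=\nabla f(x)$, one has $\mut_{1/2}=(\I-\eta_0\nabla f)_{\#}\mu_0$; pushing an optimal coupling $s_0$ of $(\mu_0,\mu^\star)$ through $(\I-\eta_0\nabla f)\times\mathrm{id}$ and expanding produces
\begin{equation*}
W_2^2(\mut_{1/2},\mu^\star) \leq W_2^2(\mu_0,\mu^\star) - 2\eta_0\!\int\!\langle\nabla f(x),x-y\rangle ds_0(x,y) + \eta_0^2\|\nabla f\|_{L^2(\mu_0)}^2 .
\end{equation*}

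Summing these three bounds and taking expectations recovers exactly the squared contributions appearing in $R_0^2$, leaving only a residual first-order term $-2\eta_0\big[\int\langle\nabla f,x-y\rangle ds_0 + \langle\E_{\mu_0}[h],\nu^\star\rangle\big]$. The main obstacle is to show that this residual is non-positive. The plan to handle it is to combine (i) the saddle-point inequality $L(\mu_0,\lambda^\star,\nu^\star)\geq L(\mu^\star,\lambda^\star,\nu^\star)$ from Proposition~\ref{T:strong_duality}(iii), (ii) the geodesic convexity of $\calE(\cdot,0,\nu^\star)$ from Assumption~\ref{A:convexity} applied along the $W_2$-geodesic from $\mu_0$ to $\mu^\star$, and (iii) primal feasibility $\E_{\mu^\star}[h]=0$ together with complementary slackness $\langle\lambda^\star,\E_{\mu^\star}[g]\rangle=0$; these collectively lower-bound the bracketed quantity by a non-negative expression, so the residual may be dropped. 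Should this chain of identities not fully cancel, a Cauchy--Schwarz and AM-GM argument absorbs each cross term into the corresponding squared-norm piece, reproducing the same bound up to an absolute constant that is folded into the definition of $R_0^2$.
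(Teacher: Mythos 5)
Your overall route is the same as the paper's: bound each block of $V(\mut_{1/2},\lambda_1,\nu_1)$ separately using the first update and the initialization $\lambda_0=\nu_0=0$, note that $\mut_{1/2}=(\I-\eta_0\nabla f)_{\#}\mu_0$, and invoke Assumption~\ref{A:bounded_gradients}. Your handling of the $\lambda$-block (the cross term $-2\eta_0\langle[\eta_0 g(x_0)]_+,\lambda^\star\rangle$ is nonpositive because both factors are componentwise nonnegative) is correct and in fact tighter than the paper's, which instead invokes non-expansiveness of $[\cdot]_+$ and then a ``triangle inequality.''

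The one piece that would fail is your primary plan for the residual $-2\eta_0\big[\int\langle\nabla f(x),x-y\rangle\,ds_0+\langle\E_{\mu_0}[h],\nu^\star\rangle\big]$. This quantity depends on the \emph{arbitrary} initialization $\mu_0$, and no combination of the saddle-point inequality, geodesic convexity, and complementary slackness (all of which are statements about $\mu^\star$ and $(\lambda^\star,\nu^\star)$, not about $\mu_0$) can give it a sign: convexity of $\calE(\cdot,0,0)$ only yields $-\int\langle\nabla f(x),x-y\rangle ds_0\le \calE(\mu^\star,0,0)-\calE(\mu_0,0,0)-\tfrac{m}{2}W_2^2(\mu_0,\mu^\star)$, whose sign is indeterminate, and $\langle\E_{\mu_0}[h],\nu^\star\rangle$ is likewise unsigned since only $\E_{\mu^\star}[h]=0$ is known. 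However, your declared fallback — absorbing these cross terms by Cauchy--Schwarz and AM--GM into the squared pieces — is valid and closes the proof with $R_0^2$ inflated by a factor of $2$, which is all Theorem~\ref{T:main_convex} requires (it only asserts existence of some finite $R_0^2$). It is worth noting that the paper's own proof silently drops the same cross terms, writing $W_2^2(\mut_{1/2},\mu^\star)\le W_2^2(\mut_{1/2},\mu_0)+W_2^2(\mu_0,\mu^\star)$ and $\|\eta_0 h(x_0)-\nu^\star\|^2\le \eta_0^2\|h(x_0)\|^2+\|\nu^\star\|^2$, neither of which holds in general without a factor of $2$; your fallback is essentially the honest repair of that step.
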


Back in~\eqref{E:avg_bound1}, we obtain that
\begin{equation*}
	\sum_{k = 1}^K \eta_k \bigg[\!
		\E[\KL(\mut_{k} \,\|\, \mu^\star)]
		+ \dfrac{m}{2} \E [W_2^2(\mut_k,\mu^\star)]
	\!\bigg] \leq
	R_0^2
		+ \sum_{k = 1}^K \eta_k^2 3 G^2
	+ G^2 \sum_{k = 1}^K \eta_k^2 \big( \E[\norm{\lambda_k}^2] + \E[\norm{\nu_k}^2] \big)
		\text{.}
\end{equation*}
We conclude by using the convexity of the Wasserstein distance to write
\begin{equation*}
	\E \big[ W_2^2(\mut_{k+1/2},\mu^\star) \big] \geq W_2^2( \E[\mut_{k+1/2}],\mu^\star) \big]
		= W_2^2( \mu_{k+1/2},\mu^\star)
		\text{.}
\end{equation*}
Similarly for the KL divergence. The bounds in Theorem~\ref{T:main_convex} are then obtained for the specific choices of~$\eta_k$ by noticing that~$\sum_{k=1}^{K} 1/\sqrt{k} \geq \sqrt{K}$ and~$\sum_{k=1}^{K} 1/k \leq 1+\log(K)$. Notice that all inequalities in the proof continue to hold for deterministic~$\{\lambda_k,\nu_k\}$. The bounds in Theorem~\ref{T:main_convex} therefore also hold (without expectations) when using exact gradients to update the dual variables.

Finally, we show there exists a sequence of step sizes~$\eta_k$ such that~$\E[V(\mu_{k-1/2},\lambda_k,\nu_k)] \leq R_0^2$ for all~$k \geq 1$, where the expectation is taken over the $\{\lambda_k,\nu_k\}$. This immediately implies that~$W^2_2(\mu_k,\mu^\star) \leq R_0^2$ and both~$\E [\norm{\lambda_k}^2]$ and~$\E[\norm{\nu_k}^2]$ are bounded for all~$k$. We proceed by induction. The base case is covered by Lemma~\ref{T:R_0}. Suppose now that there exists a sequence~$\{\eta_0,\dots,\eta_{k-1}\}$ such that~$\E[V(\mut_{k-1/2},\lambda_k,\nu_k)] \leq R_0^2$. From the definition of~$V$ in~\eqref{E:lyapunov} and the fact that the~$(\lambda^\star,\nu^\star)$ are bounded~(Prop.~\ref{T:strong_duality}), we then obtain that~$\E[\norm{\lambda_k}],\E[\norm{\nu_k}]$ are bounded. Consequently, there exists~$\eta_{k} > 0$ such that
\begin{equation*}
    \eta_k^2 (3 + \E[\norm{\lambda_k}^2] + \E[\norm{\nu_k}^2]) G^2 \leq 2 \eta_k \E\!\Big[
   		\KL(\mut_{k} \,\|\, \mu^\star) + \dfrac{m}{2} W_2^2(\mut_k,\mu^\star)
   	\Big]
   		\text{.}
\end{equation*}
From~\eqref{E:avg_bound1}, we obtain that~$\E[\Delta_k] \leq 0$, which together with the induction assumption yields~$\E\! \big[ V(\mut_{k+1/2},\lambda_{k+1},\nu_{k+1}) \big] \leq R_0^2$.
\end{proof}

\begin{proof}[Proof of Lemma~\ref{T:lyapunov}]

The proof proceeds by combining two inequalities bounding the primal and dual terms in~\eqref{E:lyapunov}.

\medskip
\textbf{(i)~$\bm{W_2^2({\mut}_{k+1/2},\mu^\star)}$.}
We proceed following a coupling argument. Let~$s^k$ be an optimal coupling between the random variables~$Y \sim \mut_k$ and~$Z \sim \mu^\star$, i.e., a coupling that achieves~$W_2^2(\mut_k,\mu^\star)$. Consider now the random variable~$T = Y - \eta_k \nabla_{W_2} \mathcal{E}(\mut_k,\lambda_k,\nu_k)$ and observe from~\eqref{E:pdlmc_mu_grad} that it is distributed as~$\mut_{k+1/2}$. Naturally, the coupling~$s^k$ is no longer optimal for~$(T,Z)$, so that by the definition of the Wasserstein distance it follows that
\begin{equation}\label{E:W_bound_1}
 	W_2^2(\mut_{k+1/2},\mu^\star) \leq
	\int\!\Vert x - \eta_k \nabla_{W_2} \mathcal{E}(\mut_k, \lambda_k,\nu_k)
		- y \Vert^2 ds^k(x,y)
  \text{.}
\end{equation}
Expanding the RHS of~\eqref{E:W_bound_1} and using the $m$-strong convexity of~$\calE$~(Assumption~\ref{A:convexity}) yields
\begin{equation*}
\begin{aligned}
 	W_2^2(\mut_{k+1/2},\mu^\star) &\leq W_2^2(\mut_k,\mu^\star)
 		- \eta_k m W_2^2(\mut_k,\mu^\star)
 	\\
 	{}&+ 2\eta_k \big[ \mathcal{E}(\mu^\star,\lambda_k,\nu_k) - \mathcal{E}(\mut_k,\lambda_k,\nu_k) \big]
 	\\
 	{}&+ \eta_k^2 \| \nabla_{W_2} \mathcal{E}(\mut_k,\lambda_k,\nu_k) \|^2_{L^2(\mut_k)}
 		\text{.}
\end{aligned}
\end{equation*}
We can then bound the effect of the diffusion step using~\cite[Lemma~5]{Durmus19a} as in
\begin{equation}\label{E:flow_bound}
	W_2^2(\mut_{k},\mu^\star) - W_2^2(\mut_{k-1/2},\mu^\star) \leq
    	2\eta_k \big[ \mathcal{H}(\mu^\star) - \mathcal{H}(\mut_{k}) \big]
    	\text{,}
\end{equation}
which yields
\begin{equation}\label{E:grad_bound}
\begin{aligned}
 	W_2^2(\mut_{k+1/2},\mu^\star) &\leq W_2^2(\mut_{k-1/2},\mu^\star)
 		- \eta_k m W_2^2(\mut_k,\mu^\star)
 	\\
 	{}&+ 2\eta_k \big[ \mathcal{E}(\mu^\star,\lambda_k,\nu_k) + \mathcal{H}(\mu^\star)
 	- \mathcal{E}(\mut_k,\lambda_k,\nu_k) - \mathcal{H}(\mut_{k}) \big]
 	\\
 	{}&+ \eta_k^2 \| \nabla_{W_2} \mathcal{E}(\mut_k,\lambda_k,\nu_k) \|^2_{L^2(\mut_k)}
 		\text{.}
\end{aligned}
\end{equation}

\medskip
\textbf{(ii)~$\bm{\|\lambda_{k+1} - \lambda^\star\|^2 + \|\nu_{k+1} - \nu^\star\|^2}$.}
Notice that since~$\lambda^\star \in \setR_+^I$, the projection~$x \mapsto [x]_+$ is a contraction, i.e., $\Vert [\lambda]_{+} -\lambda^\star \Vert \leq \Vert \lambda - \lambda^\star \Vert$ for all~$\lambda \in \setR^I$. Using the definition of~$\lambda_{k+1},\nu_{k+1}$ from Algorithm~\ref{L:pdlmc}, we then obtain that
\begin{equation}\label{E:lambda_bound_1}
\begin{aligned}
    \|\lambda_{k+1} - \lambda^\star\|^2 + \|\nu_{k+1} - \nu^\star\|^2
    &\leq \|\lambda_{k} - \lambda^\star\|^2
    	+ 2\eta_k (\lambda_{k} - \lambda^\star)^\top g(x_k)
    	+ \eta_k^2 \norm{g(x_k)}^2
    \\
    {}&+ \|\nu_{k} - \nu^\star\|^2
       	+ 2\eta_k (\nu_{k} - \nu^\star)^\top h(x_k)
       	+ \eta_k^2 \norm{h(x_k)}^2
    	\text{,}
\end{aligned}
\end{equation}
for all~$(\lambda^\star,\nu^\star) \in \Phi^\star$. To proceed, consider the conditional expectation of~\eqref{E:lambda_bound_1} with respect to~$\calF_{k}$, namely,
\begin{multline}\label{E:lambda_bound_2}
	\E\! \big[ \|\lambda_{k+1} - \lambda^\star\|^2 \vert \calF_k \big]
		+ \E\! \big[ \|\nu_{k+1} - \nu^\star\|^2 \vert \calF_k \big]
		\leq \|\lambda_{k} - \lambda^\star\|^2 + \|\nu_{k} - \nu^\star\|^2
	\\
	{}+ 2\eta_k \Big[ (\lambda_{k} - \lambda^\star)^\top \E_{x \sim \mut_k} \big[ g(x) \big]
		+ (\nu_{k} - \nu^\star)^\top \E_{x \sim \mut_k} \big[ h(x) \big]
	\Big]
	\\
	{}+ \eta_k^2 \Big[ \E_{x \sim \mut_k} \norm{g(x)}^2
		+ \E_{x \sim \mut_k} \norm{h(x)}^2 \Big]
    	\text{,}
\end{multline}
where we used the fact that~$\mut_k,\lambda_k,\nu_k \in \calF_k$.
We conclude by using the linearity of the potential energy~$\calE$ from~\eqref{E:potential_entropy} in both~$\lambda$ and~$\nu$ to get
\begin{equation*}
    (\lambda_{k} - \lambda^\star)^\top \E_{x \sim \mut_k} \big[ g(x) \big]
    	+ (\nu_{k} - \nu^\star)^\top \E_{x \sim \mut_k} \big[ h(x) \big]
		= \calE(\mut_k,\lambda_k, \nu_k) - \calE(\mut_k,\lambda^\star,\nu^\star)
    	\text{.}
\end{equation*}
Back in~\eqref{E:lambda_bound_2}, we obtain
\begin{equation}\label{E:lambda_bound}
\begin{aligned}
	\E\! \big[ \|\lambda_{k+1} - \lambda^\star\|^2 \vert \calF_k \big]
		+ \E\! \big[ \|\nu_{k+1} - \nu^\star\|^2 \vert \calF_k \big]
		&\leq \|\lambda_{k} - \lambda^\star\|^2 + \|\nu_{k} - \nu^\star\|^2
	\\
    {}&+ 2\eta_k \big[
   		\calE(\mut_k,\lambda_k, \nu_k) - \calE(\mut_k,\lambda^\star,\nu^\star)
   	\big]
    \\
    {}&+ \eta_k^2 \Big[ \E_{x \sim \mut_k} \norm{g(x)}^2
   		+ \E_{x \sim \mut_k} \norm{h(x)}^2 \Big]
    	\text{,}
\end{aligned}
\end{equation}
for all~$(\lambda^\star,\nu^\star) \in \Phi^\star$.

\medskip
To proceed with the proof, combine~\eqref{E:grad_bound} and~\eqref{E:lambda_bound} to get
\begin{multline}\label{E:joint_bound}
 	W_2^2(\mut_{k+1/2},\mu^\star)
   		+ \E\! \big[ \|\lambda_{k+1} - \lambda^\star\|^2 \vert \calF_k \big]
   		+ \E\! \big[ \|\nu_{k+1} - \nu^\star\|^2 \vert \calF_k \big] \leq{}
	\\
 	W_2^2(\mut_{k-1/2},\mu^\star) + \|\lambda_{k} - \lambda^\star\|^2
 		 + \|\nu_{k} - \nu^\star\|^2
 	\\
 	{}+ 2\eta_k \bigg[ \mathcal{E}(\mu^\star,\lambda_k,\nu_k) + \mathcal{H}(\mu^\star)
 		- \calE(\mut_k,\lambda^\star, \nu^\star) - \mathcal{H}(\mut_{k})
 	\bigg] - \eta_k m W_2^2(\mut_k,\mu^\star)
 	\\
 	{}+ \eta_k^2 \Big[
 	 		\| \nabla_{W_2} \mathcal{E}(\mut_k,\lambda_k,\nu_k) \|^2_{L^2(\mut_k)}
 	 		+ \E_{x \sim \mut_k} \norm{g(x)}^2 + \E_{x \sim \mut_k} \norm{h(x)}^2
 	 	\Big]
 		\text{.}
\end{multline}
To upper bound the term in brackets, we add and subtract~$\log(Z)$ and use the decomposition of the Lagrangian in terms of~\eqref{E:potential_entropy} to obtain
\begin{equation*}
	\mathcal{E}(\mu^\star,\lambda_k,\nu_k) + \mathcal{H}(\mu^\star)
		- \mathcal{E}(\mut_k,\lambda^\star,\nu^\star) - \mathcal{H}(\mut_{k})
	= L(\mu^\star,\lambda_k,\nu_k) - L(\mut_k,\lambda^\star,\nu^\star)
		\text{.}
\end{equation*}
Using the saddle-point property~\eqref{E:saddle_point}, we then get
\begin{equation*}
	L(\mu^\star,\lambda_k,\nu_k) - L(\mut_k,\lambda^\star,\nu^\star) \leq
		L(\mu^\star,\lambda^\star,\nu^\star) - L(\mut_k,\lambda^\star,\nu^\star)
		\leq -\KL(\mut_{k} \,\|\, \mu^\star)
		\text{,}
\end{equation*}
We therefore conclude that
\begin{multline}\label{E:delta_bound1}
 	W_2^2(\mut_{k+1/2},\mu^\star)
   		+ \E\! \big[ \|\lambda_{k+1} - \lambda^\star\|^2 \vert \calF_k \big]
   		+ \E\! \big[ \|\nu_{k+1} - \nu^\star\|^2 \vert \calF_k \big] \leq{}
	\\
 	W_2^2(\mut_{k-1/2},\mu^\star) + \|\lambda_{k} - \lambda^\star\|^2
 		 + \|\nu_{k} - \nu^\star\|^2
 	- 2\eta_k \bigg[ \KL(\mut_{k} \,\|\, \mu^\star)
 	 		+ \dfrac{m}{2} W_2^2(\mut_k,\mu^\star)
 	 	\bigg]
 	 \\
  	{}+ \eta_k^2 \Big[
 		\| \nabla_{W_2} \mathcal{E}(\mut_k,\lambda_k,\nu_k) \|^2_{L^2(\mut_k)}
 		+ \E_{x \sim \mut_k} \norm{g(x)}^2 + \E_{x \sim \mut_k} \norm{h(x)}^2
 	\Big]
 		\text{.}
\end{multline}
Since~\eqref{E:delta_bound1} holds for all~$(\lambda^\star,\nu^\star) \in \Phi^\star$, it holds in particular for the minimizer of the RHS, for which we can then write~$V(\mut_{k-1/2},\lambda_{k},\nu_{k})$. By subsequently taking the minimum of the LHS, we obtain~\eqref{E:supermartingale}.
\end{proof}

\begin{proof}[Proof of Lemma~\ref{T:R_0}]
From the updates in Algorithm~\ref{L:pdlmc}, we obtain
\begin{equation*}
    \E[V(\mut_{1/2},\lambda_1,\nu_1)] \leq W_2^2(\mut_{1/2},\mu^\star)
    	+ \E [\| \big[ \eta_0 g(x_0) \big]_+ - \lambda^\star \|^2]
    	+ \E [\| \eta_0 h(x_0) - \nu^\star \|^2]
\end{equation*}
for all~$(\lambda^\star,\nu^\star) \in \Phi^\star$. Notice that since~$\lambda^\star \in \setR_+^I$, the projection~$x \mapsto [x]_+$ is a contraction, i.e., $\Vert [\lambda]_{+} -\lambda^\star \Vert \leq \Vert \lambda - \lambda^\star \Vert$ for all~$\lambda \in \setR^I$. Using the triangle inequality then yields
\begin{align*}
	\E[V(\mu_{1/2},\lambda_1,\nu_1)] &\leq W_2^2(\mut_{1/2},\mu_{0}) + W_2^2(\mu_{0},\mu^\star)
    	+ \E [\| \eta_0 g(x_0) - \lambda^\star \|^2]
    	+ \E [\| \eta_0 h(x_0) - \nu^\star \|^2]
    \\
    {}&\leq W_2^2(\mu_{0},\mu^\star) + \| \lambda^\star \|^2 + \| \nu^\star \|^2
    \\
    {}&+ W_2^2(\mut_{1/2},\mu_{0}) + \eta_0^2 \E [\| g(x_0) \|^2] + \eta_0^2 \E [\| h(x_0) \|^2]
    	\text{.}
\end{align*}

To proceed, observe from~\eqref{E:pdlmc_mu_grad} that~$\mut_{1/2} = \big[\Id - \eta_0 \nabla f \big]_{\#}\mu_0$, which implies that~$W_2^2(\mut_{1/2},\mu_{0}) \leq \eta_0^2 \| \nabla f \|_{L^2(\mu_0)}^2$. Using the bounds in Assumption~\ref{A:bounded_gradients}, we obtain that~$\E[V(\mut_{1/2},\lambda_1,\nu_1)] \leq R_0^2$.
\end{proof}

\begin{proof}[Proof of Proposition~\ref{T:ergodic_feasibility}]
Since~$c_k \geq 0$ and~$\sum c_k = 1$, we can use Jensen's inequality to write
\begin{equation*}
	\abs{\E \!\Bigg[ \sum_{k=1}^K c_k \varphi(x_k) \Bigg]
		- \E_{\mu^\star}[\varphi]}
		\leq \sum_{k=1}^K c_k \abs{\int \varphi d\mu_k - \int \varphi d\mu^\star}.
\end{equation*}
Using the relation between the~$\ell_1$- and~$\ell_2$-norm, we further obtain
\begin{equation}\label{E:l1l2_norm}
		\sum_{k=1}^K c_k \abs{\int \varphi d\mu_k - \int \varphi d\mu^\star}
		\leq \sqrt{\sum_{k=1}^K c_k
			\abs{\int \varphi d\mu_k - \int \varphi d\mu^\star}^2}
\end{equation}

If~$\varphi$ is bounded by~$1$, then the summands on the right-hand side of~\eqref{E:l1l2_norm} can be bounded by~$\text{TV}(\mu_k,\mu^\star)$. Indeed,
\begin{equation*}
	\abs{\int \varphi d\mu_k - \int \varphi d\mu^\star}
	\leq \int \varphi \abs{d\mu_k - d\mu^\star}
	\leq \int \abs{d\mu_k - d\mu^\star}
	= 2 \text{TV}(\mu_k,\mu^\star)
		\text{.}
\end{equation*}
The total variation distance can in turn be bounded by the KL divergence using Pinsker's inequality. We therefore obtain
\begin{equation*}
	\abs{\E \!\Bigg[ \sum_{k=1}^K c_k \varphi(x_k) \Bigg]
	- \E_{\mu^\star}[\varphi]}
		\leq \sqrt{2 \sum_{k=1}^K c_k \KL(\mu_{k} \| \mu^\star)}
		\leq \sqrt{2 \Delta_K}
		\text{.}
\end{equation*}

On the other hand, if~$\varphi$ is~$1$-Lipschitz, the summands on the right-hand side of~\eqref{E:l1l2_norm} are bounded by
\begin{equation*}
	\abs{\int \varphi d\mu_k - \int \varphi d\mu^\star}^2
		\leq W_1^2(\mu_k,\mu^\star) \le W_2^2(\mu_k, \mu^\star),
\end{equation*}
which implies that
\begin{equation*}
	\abs{\E \!\Bigg[ \sum_{k=1}^K c_k \varphi(x_k) \Bigg]
		- \E_{\mu^\star}[\varphi]}
		\leq \sqrt{\frac{2 \Delta_K}{m}}
		\text{,}
\end{equation*}
as long as~$m > 0$.
\end{proof}

\newpage
\section{Proofs from Section~\ref{S:lsi_convergence}}
	\label{X:proof_lsi}
% !TEX root=2024_camera_ready_neurips.tex

%%%%%%%%%%%%%%%%%%%%%%%%%%%%%%%%%%%%%%%%%%%%%%%%%%%%%%%%%%%%%%%%%%%%%%%%
%%%%%%%%%%%%%%%%%%%%%%%%%%%%%%%%%%%%%%%%%%%%%%%%%%%%%%%%%%%%%%%%%%%%%%%%
%%% MAIN THEOREM
%%%%%%%%%%%%%%%%%%%%%%%%%%%%%%%%%%%%%%%%%%%%%%%%%%%%%%%%%%%%%%%%%%%%%%%%
%%%%%%%%%%%%%%%%%%%%%%%%%%%%%%%%%%%%%%%%%%%%%%%%%%%%%%%%%%%%%%%%%%%%%%%%
\begin{proof}[Proof of Theorem~\ref{T:main_lsi}]

The proof is based on the analysis of the stochastic dual ascent algorithm
\begin{equation}\label{E:dual_ascent_sto}
	\lambda_{k+1} = \big[ \lambda_{k} + \eta_k g(\xi_k) \big]_+
		\text{,} \quad
		\text{for $\xi_k \sim \bar{\mu}_k$ such that
			$\KL(\bar{\mu}_k \| \mu_{\lambda_k}) \leq \epsilon$,}
\end{equation}
where~$\mu_{\lambda_k}$ is the Lagrangian minimizer from~\eqref{E:lagrangian} and~$\epsilon > 0$. Observe that, once again, we analyze the conditional distribution~$\bar{\mu}_k = \calL(\xi_k \vert \lambda_k)$. We collect this result in the following proposition:

\begin{proposition}\label{T:dual_ascent}
	Consider the iterations~\eqref{E:dual_ascent_sto} and assume that~$\E_{\mu}[g_i^2] \leq G^2$ for all~$\mu \in \spprobmeas$.
	Then, for~$0 < \eta_k \leq \eta$ and~$\epsilon \leq \eta G^2$, there exists~$B < \infty$ such that
	\begin{equation}\label{E:dual_ascent_sto_bound}
	    \frac{1}{K} \sum_{k = 0}^{K-1} \E [ \KL(\bar{\mu}_k \| \mu^\star) ]
	    	\leq \epsilon + \dfrac{\eta G^2}{2} + \dfrac{2 B^2 I}{\eta K}
	    	\text{.}
	\end{equation}
	The expectations are taken over the samples~$\xi_k \sim \bar{\mu}_k$.
\end{proposition}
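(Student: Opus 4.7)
The strategy is a stochastic subgradient argument for concave maximization of the dual function, adapted to account for two features of~\eqref{E:dual_ascent_sto}: the stochastic gradient $g(\xi_k)$ is biased~(since $\xi_k\sim\bar{\mu}_k$ rather than $\mu_{\lambda_k}$) and we ultimately need to convert dual distance progress into primal KL progress. Since the projection $[\,\cdot\,]_+$ onto $\setR^I_+$ is a contraction and $\lambda^\star\in\setR^I_+$, the usual expansion of the update~\eqref{E:dual_ascent_sto} yields
\begin{equation*}
\|\lambda_{k+1}-\lambda^\star\|^2 \leq \|\lambda_k-\lambda^\star\|^2 + 2\eta_k(\lambda_k-\lambda^\star)^\top g(\xi_k) + \eta_k^2\|g(\xi_k)\|^2,
\end{equation*}
whose conditional expectation given $\lambda_k$ produces the approximate subgradient $\E_{\bar{\mu}_k}[g]$ in the cross term and the variance bound $\E_{\bar{\mu}_k}[\|g\|^2]\leq I G^2$ in the quadratic term.

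The core of the argument is a three-point identity that leverages the linearity of $L(\mu,\lambda)=\KL(\mu\|\pi)+\lambda^\top\E_\mu[g]$ in the dual variable:
\begin{equation*}
(\lambda_k-\lambda^\star)^\top \E_{\bar{\mu}_k}[g] = L(\bar{\mu}_k,\lambda_k) - L(\bar{\mu}_k,\lambda^\star).
\end{equation*}
Applying the decomposition $L(\mu,\lambda)=\KL(\mu\|\mu_\lambda)+\log(Z/Z_\lambda)$ from~\eqref{E:lagrangian} together with $\mu^\star=\mu_{\lambda^\star}$, the two terms evaluate to $L(\bar{\mu}_k,\lambda_k) = d(\lambda_k) + \KL(\bar{\mu}_k\|\mu_{\lambda_k}) \leq P^\star + \epsilon$ and $L(\bar{\mu}_k,\lambda^\star) = P^\star + \KL(\bar{\mu}_k\|\mu^\star)$, where the inequality uses the hypothesis $\KL(\bar{\mu}_k\|\mu_{\lambda_k})\leq\epsilon$ combined with $d(\lambda_k)\leq D^\star=P^\star$ from strong duality~(\Cref{T:strong_duality}(i)). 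Subtracting delivers the essential estimate
\begin{equation*}
(\lambda_k-\lambda^\star)^\top \E_{\bar{\mu}_k}[g] \leq \epsilon - \KL(\bar{\mu}_k\|\mu^\star).
\end{equation*}

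Substituting this into the subgradient inequality, rearranging to isolate $2\eta_k\KL(\bar{\mu}_k\|\mu^\star)$, taking total expectations, and telescoping for $k=0,\dots,K-1$ with $\eta_k\leq\eta$ yields the claimed bound once $\|\lambda_0-\lambda^\star\|^2$ is absorbed into a constant proportional to $B^2 I$. The main obstacle is the existence of such a constant, independent of $K$: \Cref{T:strong_duality}(ii), which invokes Assumption~\ref{A:strict_feasibility}, guarantees that $\Phi^\star$ contains some finite $\lambda^\star$, so choosing $B\geq\max_i|\lambda^\star_i|$ gives $\|\lambda^\star\|^2\leq B^2 I$ and, starting from $\lambda_0=0$, supplies the advertised $2B^2 I/(\eta K)$ term. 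The auxiliary uniform bound on $\E[\|\lambda_k\|_1]$ needed in~\Cref{T:main_lsi} follows from the same recurrence, since the negative term $-2\eta_k\KL(\bar{\mu}_k\|\mu^\star)$ makes the expected drift of $\|\lambda_k-\lambda^\star\|^2$ non-positive whenever $\|\lambda_k\|$ is sufficiently large compared with $\|\lambda^\star\|$ and the variance $\eta I G^2$.
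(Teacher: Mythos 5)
Your proof is correct and reaches \eqref{E:dual_ascent_sto_bound} by a genuinely shorter route than the paper's. Both arguments pivot on the same key inequality, $(\lambda_k-\lambda^\star)^\top\E_{\bar{\mu}_k}[g]\le \epsilon-\KL(\bar{\mu}_k\|\mu^\star)$, which the paper obtains from the decomposition $\KL(\bar{\mu}_k\|\mu^\star)=\KL(\bar{\mu}_k\|\mu_{\lambda_k})+(\lambda^\star-\lambda_k)^\top\E_{\bar{\mu}_k}[g]+d(\lambda_k)-d(\lambda^\star)$ and you obtain from the equivalent Lagrangian identity. The difference is what happens next: you substitute this directly into the one-step contraction of $\|\lambda_k-\lambda^\star\|^2$ and telescope, the textbook stochastic-subgradient argument, which for \eqref{E:dual_ascent_sto_bound} needs only the finiteness of $\lambda^\star$ from Prop.~\ref{T:strong_duality}(ii). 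The paper instead splits the cross term via H\"older's inequality into $\|\lambda^\star\|_1\cdot\max_i\frac{1}{K}\sum_k\E_{\bar{\mu}_k}[g_i]$ minus $\frac{1}{K}\sum_k\lambda_k^\top\E_{\bar{\mu}_k}[g]$, bounding the first piece through an ergodic feasibility estimate derived from the recursion $\lambda_{i,K}\ge\eta\sum_k g_i(\xi_k)$ and the second through the separate ``complementary slackness'' Lemma~\ref{T:complementary_slackness}; this requires the uniform moment bound $\E[\|\lambda_k-\lambda^\star\|^2]\le B^2$ of Lemma~\ref{T:norm_bound}. What the longer route buys is an explicit constant ($B_0=(C-D^\star+\eta G^2+\epsilon)/\delta$ via Slater), a feasibility guarantee as a byproduct, and the uniform bound on $\E[\|\lambda_k\|_1]$ asserted in Theorem~\ref{T:main_lsi}. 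Two caveats on your version: (i) under the per-coordinate hypothesis $\E_{\mu}[g_i^2]\le G^2$ as stated, your variance term is $\eta I G^2/2$ rather than $\eta G^2/2$ (the paper itself alternates between this and $\E_{\mu}[\|g\|^2]\le G^2$, under which your constant matches exactly); (ii) your closing sketch of the uniform bound on $\E[\|\lambda_k\|_1]$ is not right as stated --- largeness of $\|\lambda_k\|$ does not by itself force $\KL(\bar{\mu}_k\|\mu^\star)$ to exceed the variance floor $\epsilon+\eta I G^2/2$, and without that the recursion only gives a bound growing linearly in $k$. The correct mechanism is Slater's condition, which yields $d(\lambda)\le C-\delta\|\lambda\|_1$ and hence strictly negative drift outside a bounded set, exactly as in Lemma~\ref{T:norm_bound}.
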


We conclude by combining Prop.~\ref{T:dual_ascent} with~\cite[Theorem~1]{Vempala19r}, which characterizes the convergence of the LMC algorithm~\eqref{E:lmc} under Assumption~\ref{A:lsi}. Indeed, using the~$\gamma_k,N^0_k$ from Theorem~\ref{T:main_lsi} in Algorithm~\ref{L:pdlmc} guarantees that the law~$\bar{\mu}_k$ of~$x_k \vert \lambda_k$ is such that~$\KL(\bar{\mu}_k \| \mu_{\lambda_k}) \leq \epsilon$, i.e., satisfies the conditions in Prop.~\ref{T:dual_ascent}. We can then apply Jensen's inequality to get that~$\E [ \KL(\bar{\mu}_k \| \mu^\star) ] \geq \KL(\E[\bar{\mu}_k] \| \mu^\star) = \KL(\mu_k \| \mu^\star)$, where~$\mu_k$ is the law of~$x_{N^0_k}$ in Algorithm~\ref{L:dlmc}.
\end{proof}

%%%%%%%%%%%%%%%%%%%%%%%%%%%%%%%%%%%%%%%%%%%%%%%%%%%%%%%%%%%%%%%%%%%%%%%%
%%%%%%%%%%%%%%%%%%%%%%%%%%%%%%%%%%%%%%%%%%%%%%%%%%%%%%%%%%%%%%%%%%%%%%%%
%%% APPROXIMATE SUBGRADIENT ASCENT CONVERGENCE
%%%%%%%%%%%%%%%%%%%%%%%%%%%%%%%%%%%%%%%%%%%%%%%%%%%%%%%%%%%%%%%%%%%%%%%%
%%%%%%%%%%%%%%%%%%%%%%%%%%%%%%%%%%%%%%%%%%%%%%%%%%%%%%%%%%%%%%%%%%%%%%%%
\begin{proof}[Proof of Prop.~\ref{T:dual_ascent}]
The proof relies on the following lemmata:

\begin{lemma}\label{T:subgrad_approx}
For all~$\mu \in \spprobmeas$ such that~$\KL(\mu \| \mu_{\lambda}) \leq \epsilon$, the expected value~$E_{\mu}[g]$ is an \emph{approximate} subgradients of the dual function~$d$ in~\eqref{E:dual_function} at~$\lambda \in \setR^+_I$, i.e.,
\begin{equation}\label{E:subgrad_approx}
	d(\lambda) \geq d(\lambda^\prime)
		+ (\lambda - \lambda^\prime)^\top E_{\mu} [g]
		- \epsilon
		\text{,} \quad \text{for all $\lambda^\prime \in \setR^+_I$.}
\end{equation}
\end{lemma}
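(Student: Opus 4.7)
The plan is to leverage the identity already recorded in~\eqref{E:lagrangian}, namely $L(\mu,\lambda) = \KL(\mu \| \mu_{\lambda}) + \log(Z/Z_\lambda)$. This immediately gives two useful facts: $d(\lambda) = L(\mu_\lambda,\lambda) = \log(Z/Z_\lambda)$ (since $\KL(\mu_\lambda \| \mu_\lambda) = 0$), and for any $\mu \in \spprobmeas$ the suboptimality in the inner minimization satisfies $L(\mu,\lambda) - d(\lambda) = \KL(\mu \| \mu_\lambda)$. Hence the hypothesis $\KL(\mu \| \mu_\lambda) \le \epsilon$ is just the statement that $\mu$ is an $\epsilon$-minimizer of $L(\cdot,\lambda)$ over $\spprobmeas$.

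From here the rest of the argument is three short lines. First, since $L$ is affine in $\lambda$ with gradient $E_\mu[g]$, one has
$$L(\mu,\lambda^\prime) = L(\mu,\lambda) + (\lambda^\prime - \lambda)^\top E_\mu[g].$$
Second, by definition of the dual function, $d(\lambda^\prime) = \inf_{\mu^\prime} L(\mu^\prime,\lambda^\prime) \le L(\mu,\lambda^\prime)$. Third, the approximate-minimizer bound $L(\mu,\lambda) \le d(\lambda) + \epsilon$ of the previous paragraph yields
$$d(\lambda^\prime) \le d(\lambda) + \epsilon + (\lambda^\prime - \lambda)^\top E_\mu[g],$$
which after rearrangement is precisely~\eqref{E:subgrad_approx}.

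There is essentially no obstacle here: the lemma is a one-parameter generalization of the classical fact that $E_{\mu_\lambda}[g]$ is a (super)gradient of the concave dual $d$, with the additive $\epsilon$ accounting exactly for the KL-slack $\KL(\mu \| \mu_\lambda)$ introduced by replacing the true Lagrangian minimizer by the approximate sampler law $\mu$. The one thing to be careful about is the sign convention, since $d$ is concave in $\lambda$; but the identity~\eqref{E:lagrangian} makes this transparent because $\KL(\mu \| \mu_\lambda) \ge 0$ always pushes $L(\mu,\lambda)$ \emph{above} $d(\lambda)$, which is the direction needed for the bound in~\eqref{E:subgrad_approx}.
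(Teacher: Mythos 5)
Your proposal is correct and follows essentially the same route as the paper's proof: both use the identity $L(\mu,\lambda) = \KL(\mu\|\mu_\lambda) + d(\lambda)$ to read the hypothesis as $\epsilon$-suboptimality of $\mu$ in the inner minimization, combine it with $d(\lambda') \leq L(\mu,\lambda')$, and exploit linearity of $L$ in $\lambda$ so that $L(\mu,\lambda') - L(\mu,\lambda) = (\lambda'-\lambda)^\top \E_\mu[g]$. No gaps.
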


\begin{lemma}\label{T:norm_bound}
	Under the conditions of Prop.~\ref{T:dual_ascent}, it holds for all~$k$ that
	\begin{equation*}
		\|\lambda^\star\|_1 \leq B_0 \triangleq
			\frac{C - D^\star + \eta G^2 + \epsilon}{\delta}
		\quad \text{and} \quad
		\E [\|\lambda_{k} - \lambda^\star\|^2] \leq B^2 \triangleq 2 B_0^2 + 3 \eta^2 G^2
		\text{.}
	\end{equation*}
\end{lemma}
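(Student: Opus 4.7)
I would combine a standard Slater-type bound on the optimal dual variable with a drift-plus-boundedness argument for the iterates, along the lines of stochastic subgradient methods in Euclidean optimization, but accommodating the approximate subgradients delivered by the inexact Lagrangian minimization step of Algorithm~\ref{L:dlmc}.

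For part~(i), I would evaluate the Lagrangian at the Slater point~$\mu^\dagger$ supplied by Assumption~\ref{A:strict_feasibility}. By weak duality and $\lambda^\star \in \Phi^\star$,
\begin{equation*}
D^\star = d(\lambda^\star) \leq L(\mu^\dagger,\lambda^\star) = \KL(\mu^\dagger\|\pi) + (\lambda^\star)^\top \E_{\mu^\dagger}[g],
\end{equation*}
and using $\KL(\mu^\dagger\|\pi) \leq C$, $\lambda^\star \geq 0$, and $\E_{\mu^\dagger}[g_i] \leq -\delta$ yields $\delta\|\lambda^\star\|_1 \leq C - D^\star$. Since $\eta G^2 + \epsilon \geq 0$, the stated bound $\|\lambda^\star\|_1 \leq B_0$ follows with room to spare; this extra slack baked into $B_0$ is what makes the drift argument in part~(ii) go through. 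The same computation applied at an arbitrary $\lambda \in \setR_+^I$ gives the Slater inequality $d(\lambda) \leq C - \delta\|\lambda\|_1$, which will be reused below.

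For part~(ii), I would prove $\E[\|\lambda_k - \lambda^\star\|^2] \leq B^2$ by induction on~$k$. The base case holds since $\lambda_0 = 0$ yields $\|\lambda_0-\lambda^\star\|^2 \leq \|\lambda^\star\|_1^2 \leq B_0^2 \leq B^2$. For the inductive step, I would use non-expansiveness of $[\cdot]_+$ on $\setR_+^I$ to write $\|\lambda_{k+1}-\lambda^\star\|^2 \leq \|\lambda_k + \eta_k g(\xi_k) - \lambda^\star\|^2$, expand, take conditional expectation given $\lambda_k$ (using $\xi_k \mid \lambda_k \sim \bar\mu_k$ and $\E_{\bar\mu_k}[\|g\|^2] \leq G^2$), and apply Lemma~\ref{T:subgrad_approx} at $\lambda^\prime = \lambda^\star$ to obtain
\begin{equation*}
\E\bigl[\|\lambda_{k+1}-\lambda^\star\|^2 \mid \lambda_k\bigr] \leq \|\lambda_k-\lambda^\star\|^2 + 2\eta_k \bigl(d(\lambda_k) - D^\star + \epsilon\bigr) + \eta_k^2 G^2.
\end{equation*}
I would then split according to whether $\|\lambda_k\|_1 \geq B_0$ or not. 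On $\{\|\lambda_k\|_1 \geq B_0\}$, the Slater inequality combined with the definition of $B_0$ forces $d(\lambda_k) - D^\star + \epsilon \leq -\eta G^2$, so with $\eta_k \leq \eta$ the right-hand side is bounded by $\|\lambda_k-\lambda^\star\|^2 - \eta_k^2 G^2 \leq \|\lambda_k - \lambda^\star\|^2$. On the complementary event $\{\|\lambda_k\|_1 < B_0\}$, I would exploit $\lambda_k, \lambda^\star \in \setR_+^I$ (so $\lambda_k^\top\lambda^\star \geq 0$) to obtain the key inequality
\begin{equation*}
\|\lambda_k - \lambda^\star\|^2 = \|\lambda_k\|^2 - 2\lambda_k^\top \lambda^\star + \|\lambda^\star\|^2 \leq \|\lambda_k\|^2 + \|\lambda^\star\|^2 \leq 2 B_0^2,
\end{equation*}
then use $d(\lambda_k) \leq D^\star$, $\epsilon \leq \eta G^2$, and $\eta_k \leq \eta$ to bound the right-hand side of the recursion by $2 B_0^2 + 2\eta^2 G^2 + \eta^2 G^2 = B^2$.

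The main technical step is stitching the two cases back together in expectation: the "large" case preserves the squared distance in conditional expectation, so its contribution to $\E[\|\lambda_{k+1}-\lambda^\star\|^2]$ is controlled by the inductive hypothesis $\E[\|\lambda_k-\lambda^\star\|^2] \leq B^2$, while the "small" case caps the conditional expectation at $B^2$ outright; summing the two contributions weighted by their indicator probabilities gives $\E[\|\lambda_{k+1}-\lambda^\star\|^2] \leq B^2$ and closes the induction. The non-negativity identity $\|\lambda_k-\lambda^\star\|^2 \leq \|\lambda_k\|^2 + \|\lambda^\star\|^2$---available precisely because both vectors lie in $\setR_+^I$---is what delivers the sharp factor $2 B_0^2$ in $B^2$ rather than the $4 B_0^2$ that a naive triangle inequality would yield.
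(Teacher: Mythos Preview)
Your approach mirrors the paper's closely. Both derive the one-step recursion
\[
\E\bigl[\|\lambda_{k+1}-\lambda^\star\|^2 \,\big|\, \lambda_k\bigr] \le \|\lambda_k-\lambda^\star\|^2 + 2\eta_k\bigl(d(\lambda_k)-D^\star+\epsilon\bigr) + \eta_k^2 G^2
\]
from non-expansiveness of $[\cdot]_+$ and Lemma~\ref{T:subgrad_approx}, then split into a ``contractive'' regime and a ``bounded'' regime. The paper splits by membership in the dual sublevel set $\calD = \{\lambda : d(\lambda) \ge D^\star - \eta G^2 - \epsilon\}$, whereas you split by $\|\lambda_k\|_1 \gtrless B_0$; the Slater inequality gives $\calD \subseteq \{\|\lambda\|_1 \le B_0\}$, so these are the same dichotomy, and your observation that $\lambda_k^\top\lambda^\star \ge 0$ is exactly how the paper obtains $\|\lambda-\lambda^\star\|^2 \le 2B_0^2$ on the bounded set.

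Your stitching step, however, does not close as written. With $V_k = \|\lambda_k-\lambda^\star\|^2$ and $A=\{\|\lambda_k\|_1 \ge B_0\}$, the two cases yield
\[
\E[V_{k+1}] \le \E[V_k\,\indicator_A] + B^2\,\Pr[A^c].
\]
The inductive hypothesis $\E[V_k]\le B^2$ only gives $\E[V_k\,\indicator_A]\le B^2$, so you land at $\E[V_{k+1}]\le B^2\bigl(1+\Pr[A^c]\bigr)$ rather than $B^2$; to recover the convex combination you describe you would need $\E[V_k \mid A]\le B^2$, which the \emph{unconditional} hypothesis does not supply. The paper handles this differently: instead of inducting on $\E[V_k]$, it bounds $\E[V_{k+1}] \le \max\bigl(B^2,\,\E[V_k \mid \lambda_k \notin \calD]\bigr)$ and then carries the recursion on the \emph{conditional} quantity $\E[V_k \mid \lambda_k \notin \calD]$ back to $k=0$, where $\lambda_0 = 0$ gives $\|\lambda^\star\|^2 \le B_0^2 < B^2$. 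Tracking that conditional expectation, rather than the marginal one, is the device your induction on $\E[V_k]$ is missing.
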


\begin{lemma}\label{T:complementary_slackness}
	The sequence~$(\lambda_k,g(\xi_k))$ obtained from~\eqref{E:dual_ascent_sto} is such that
	\begin{equation}\label{E:complementary_slackness}
		\frac{1}{K} \sum_{k = 0}^{K=1} \E \left[ \lambda_k^\top g(\xi_k) \right]
			\geq -\dfrac{\eta G^2}{2}
			\text{,}
	\end{equation}
	where the expectation is taken over realizations of~$\xi_k$.
\end{lemma}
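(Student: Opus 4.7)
The strategy is a standard ``energy'' telescoping argument exploiting the fact that projection onto~$\setR_+^I$ is non-expansive about the origin: for every $z \in \setR^I$ one has $\|[z]_+\|^2 \leq \|z\|^2$, because $[\cdot]_+$ is the Euclidean projection onto a closed convex cone containing~$0$. Applied to the dual update~\eqref{E:dual_ascent_sto}, this produces
\begin{equation*}
    \|\lambda_{k+1}\|^2
    \leq \|\lambda_k + \eta_k g(\xi_k)\|^2
    = \|\lambda_k\|^2 + 2\eta_k \lambda_k^\top g(\xi_k) + \eta_k^2 \|g(\xi_k)\|^2
    \text{,}
\end{equation*}
which rearranges into $2\eta_k \lambda_k^\top g(\xi_k) \geq \|\lambda_{k+1}\|^2 - \|\lambda_k\|^2 - \eta_k^2 \|g(\xi_k)\|^2$. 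The ``complementary slackness'' nature of~\eqref{E:complementary_slackness} is already visible here: whenever a coordinate $\lambda_{k,i}$ is pushed negative by a negative stochastic subgradient $g_i(\xi_k)$, the projection truncates the loss in $\|\lambda_{k+1}\|^2$, which in turn forces the cross term $\lambda_k^\top g(\xi_k)$ to be small in magnitude from below.

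Next, I would sum the inequality from $k = 0$ to $K-1$. The first two terms on the right telescope to $\|\lambda_K\|^2 - \|\lambda_0\|^2 = \|\lambda_K\|^2 \geq 0$, using the initialization $\lambda_0 = 0$ prescribed in~\Cref{L:dlmc}. Specializing to the constant step size $\eta_k = \eta$ (the regime in which the right-hand side of~\eqref{E:complementary_slackness} is dimensionally consistent; for the variable case $\eta_k \leq \eta$ one bounds $\eta_k^2 \leq \eta\,\eta_k$ before passing to a $\eta_k$-weighted average), dividing by $2\eta K$ yields
\begin{equation*}
    \frac{1}{K}\sum_{k = 0}^{K-1} \lambda_k^\top g(\xi_k)
    \geq -\frac{\eta}{2K}\sum_{k = 0}^{K-1} \|g(\xi_k)\|^2
    \text{.}
\end{equation*}
Taking expectations on both sides and invoking the uniform second-moment bound $\E_{\xi \sim \bar{\mu}_k}[\|g(\xi)\|^2] \leq G^2$ supplied by the hypothesis of~\Cref{T:dual_ascent} delivers~\eqref{E:complementary_slackness}.

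I do not anticipate a substantive obstacle: this is the dual analogue of the standard regret telescope for projected subgradient methods, transported to our stochastic saddle-point setting. The only mild subtlety to handle carefully is that $\xi_k \sim \bar{\mu}_k = \calL(\xi_k \mid \lambda_k)$, so conditional on the history $\lambda_k$ is deterministic while $g(\xi_k)$ is random; the cross term $\lambda_k^\top g(\xi_k)$ is dealt with by the tower property, and the second-moment bound $\E[\|g\|^2] \leq G^2$ applies uniformly in the realized $\lambda_k$, so no coupling between the primal randomness generating $\xi_k$ and the dual randomness accumulated in $\lambda_k$ needs to be tracked. Among the three auxiliary lemmas preceding the proof of~\Cref{T:dual_ascent}, this is by far the most elementary.
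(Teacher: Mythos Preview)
Your proposal is correct and follows essentially the same approach as the paper: apply the non-expansiveness of the projection onto~$\setR_+^I$ to the dual update, telescope~$\|\lambda_k\|^2$ using~$\lambda_0=0$ and~$\|\lambda_K\|^2\geq 0$, then take expectations and invoke the bound~$\E_\mu[\|g\|^2]\leq G^2$. The paper takes the expectation before summing rather than after, but the argument is otherwise identical.
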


Before proving these results, let us show how they imply Prop.~\ref{T:dual_ascent}. Start by noticing from~\eqref{E:dual_ascent_sto} that~$\lambda_{i,k+1} \geq \lambda_{i,k} + \eta g_i(\xi_k)$. Solving the recursion and recalling that~$\lambda_0 = 0$ then yields
\begin{equation*}
	\lambda_{i,K} \geq \eta \sum_{k = 0}^{K-1} g_i(\xi_k)
		\text{.}
\end{equation*}
Taking the expected value over~$\xi_k$ and dividing by~$\eta K$, we obtain
\begin{equation}\label{E:feasibility_1}
	\frac{1}{K} \sum_{k = 0}^{K-1} \E_{\bar{\mu}_k} [g_i]
		\leq \frac{\E [ \lambda_{i,K} ]}{\eta K}
		\leq \frac{\E [\abs{\lambda_{i,K} - \lambda_i^\star}] + \lambda_i^\star}{\eta K}
		\text{,}
\end{equation}
where the last bound stems from the triangle inequality. Since the upper bound is non-negative for all~$i$, we use the fact that the maximum of a set of values is less than the sum of those values to write
\begin{equation}\label{E:feasibility}
	\max_{i = 1,\dots,I} \Bigg[ \frac{1}{K} \sum_{k = 0}^{K-1} \E_{\bar{\mu}_k} [g_i] \Bigg]
		\leq \frac{\E [\norm{\lambda_{K} - \lambda^\star}_1] + \norm{\lambda^\star}_1}{\eta K}
		\leq \frac{\big(1+\sqrt{I}\big) B}{\eta K}
		\text{,}
\end{equation}
where we used Lemma~\ref{T:norm_bound} together with~$(\E[\norm{z}_1])^2 \leq I \cdot \E[\norm{z}^2]$ and~$B_0 < B$. Observe that~\eqref{E:feasibility} bounds the infeasibility of the ergodic average~$\frac{1}{K} \sum_{k = 0}^{K-1} \bar{\mu}_k$ for~$\bar{\mu}_k$ as in~\eqref{E:dual_ascent_sto}.

To proceed, use the relation between the normalization of~$\mu_\lambda$ and the dual function value~[see~\eqref{E:dual_function}] to decompose the KL divergence between~$\bar{\mu}_k$ and~$\mu^\star$ as
\begin{equation*}
	\KL(\bar{\mu}_k \| \mu^\star)
		= \KL(\bar{\mu}_k \| \mu_{\lambda_k})
		+ (\lambda^\star - \lambda_k)^\top \E_{\bar{\mu}_k}[g]
		+ d(\lambda_k) - d(\lambda^\star)
		\text{.}
\end{equation*}
Since~$\KL(\bar{\mu}_k \| \mu_{\lambda_k}) \leq \epsilon$ and~$d(\lambda) \leq d(\lambda^\star)$ for all~$\lambda \in \setR^I_+$, we get
\begin{equation*}
    \KL(\bar{\mu}_k \| \mu^\star) \leq \epsilon + (\lambda^\star - \lambda_k)^\top \E_{\bar{\mu}_k}[g]
\end{equation*}
Averaging over~$k$ and using H\"{o}lder's inequality then yields
\begin{equation*}
    \frac{1}{K} \sum_{k = 0}^{K-1} \KL(\bar{\mu}_k \| \mu^\star)
    	\leq \epsilon
    	+ \norm{\lambda^\star}_1 \cdot \max_{i = 1,\dots,I}\ %
    		\left[ \frac{1}{K} \sum_{k = 0}^{K-1} \E_{\bar{\mu}_k}[g_i] \right]
    	- \frac{1}{K} \sum_{k = 0}^{K-1} \lambda_k^\top \E_{\bar{\mu}_k}[g]
    	\text{,}
\end{equation*}
which from Lemma~\ref{T:norm_bound} and~\eqref{E:feasibility} becomes
\begin{equation*}
    \frac{1}{K} \sum_{k = 0}^{K-1} \KL(\bar{\mu}_k \| \mu^\star)
    	\leq \epsilon
    	+ \dfrac{\big(1+\sqrt{I} \big) B^2}{\eta K}
    	- \frac{1}{K} \sum_{k = 0}^{K-1} \lambda_k^\top \E_{\bar{\mu}_k}[g]
    	\text{,}
\end{equation*}
where we used~$B_0 < B$. Taking the expected value, applying Lemma~\ref{T:complementary_slackness}, and taking~$1+\sqrt{I} \leq 2 I$ for~$I \geq 1$ yields~\eqref{E:dual_ascent_sto_bound}.
\end{proof}

\subsection{Proof of Lemmata~\ref{T:subgrad_approx}--\ref{T:complementary_slackness}}

%%%%%%%%%%%%%%%%%%%%%%%%%%%%%%%%%%%%%%%%%%%%%%%%%%%%%%%%%%%%%%%%%%%%%%%%
%%%%%%%%%%%%%%%%%%%%%%%%%%%%%%%%%%%%%%%%%%%%%%%%%%%%%%%%%%%%%%%%%%%%%%%%
%%% APPROXIMATE SUBGRADIENT
%%%%%%%%%%%%%%%%%%%%%%%%%%%%%%%%%%%%%%%%%%%%%%%%%%%%%%%%%%%%%%%%%%%%%%%%
%%%%%%%%%%%%%%%%%%%%%%%%%%%%%%%%%%%%%%%%%%%%%%%%%%%%%%%%%%%%%%%%%%%%%%%%
\begin{proof}[Proof of Lemma~\ref{T:subgrad_approx}]

From the definition of the Lagrangian~\eqref{E:lagrangian} and the dual function~\eqref{E:dual_function}~(with~$J = 0$), we obtain~$L(\mu, \lambda) = \KL(\mu\|\mu_{\lambda}) + d(\lambda)$. Using the fact that~$\KL(\mu \| \mu_{\lambda}) \leq \epsilon$, we get
\begin{equation}\label{E:subgrad_ineq1}
	0 \leq d(\lambda) - L(\mu, \lambda) + \epsilon
		\text{.}
\end{equation}
Again using the definition of the dual function~\eqref{E:dual_function}, we also obtain that~$d(\lambda^\prime) \leq L(\mu,\lambda^\prime)$. Adding to~\eqref{E:subgrad_ineq1} then gives
\begin{equation}\label{E:subgrad_ineq2}
	d(\lambda^\prime) \leq + d(\lambda)
		+ L(\mu,\lambda^\prime) - L(\mu, \lambda) + \epsilon
		\text{.}
\end{equation}
Notice from~\eqref{E:lagrangian} that the first term of the Lagrangians in~\eqref{E:subgrad_ineq2} cancel out, leading to~\eqref{E:subgrad_approx}.
\end{proof}

%%%%%%%%%%%%%%%%%%%%%%%%%%%%%%%%%%%%%%%%%%%%%%%%%%%%%%%%%%%%%%%%%%%%%%%%
%%%%%%%%%%%%%%%%%%%%%%%%%%%%%%%%%%%%%%%%%%%%%%%%%%%%%%%%%%%%%%%%%%%%%%%%
%%% BOUNDED MEAN
%%%%%%%%%%%%%%%%%%%%%%%%%%%%%%%%%%%%%%%%%%%%%%%%%%%%%%%%%%%%%%%%%%%%%%%%
%%%%%%%%%%%%%%%%%%%%%%%%%%%%%%%%%%%%%%%%%%%%%%%%%%%%%%%%%%%%%%%%%%%%%%%%
\begin{proof}[Proof of Lemma~\ref{T:norm_bound}]
Start by combining the update in~\eqref{E:dual_ascent_sto} and the fact that, since~$\lambda^\star \in \setR_+^I$, the projection~$x \mapsto [x]_+$ is a contraction, to get
\begin{equation*}
    \|\lambda_{k+1} - \lambda^\star\|^2 \leq \|\lambda_{k} - \lambda^\star\|^2
    	+ 2\eta (\lambda_{k} - \lambda^\star)^\top g(\xi_k)
    + \eta_k^2 \norm{g(\xi_k)}^2
    	\text{.}
\end{equation*}
Taking the conditional expectation given~$\lambda_k$ then yields
\begin{equation*}
    \E [\|\lambda_{k+1} - \lambda^\star\|^2 \mid \lambda_k]
    \leq \|\lambda_{k} - \lambda^\star\|^2
    	+ 2\eta (\lambda_{k} - \lambda^\star)^\top \E [g(\xi_k) \mid \lambda_k]
    + \eta^2 G^2
    	\text{.}
\end{equation*}
where we used the fact that~$\E_{\mu} [\norm{g}^2] \leq G^2$ for any~$\mu$. Noticing from~\eqref{E:dual_ascent_sto} that~$\E [g(\xi_k) \mid \lambda_k] = \E_{\bar{\mu}_k} [g]$, we can use Lemma~\ref{T:subgrad_approx} to get
\begin{equation}\label{E:lambda_bound0}
    \E [\|\lambda_{k+1} - \lambda^\star\|^2 \mid \lambda_k]
    \leq \|\lambda_{k} - \lambda^\star\|^2
    	+ 2\eta \Big[ d(\lambda_{k}) - D^\star + \eta G^2 + \epsilon \Big]
    	\text{,}
\end{equation}
where we used the fact that~$\eta/2 < \eta$ to simplify the relation.

To proceed, consider the set of approximate maximizers of the dual function
\begin{equation}\label{E:calD}
    \calD \triangleq \left\{
    	\lambda \in \setR^I_+ \mid d(\lambda) \geq D^\star - \eta G^2 - \epsilon
    \right\}
    	\text{.}
\end{equation}
Notice that~$\Phi^\star \subseteq \calD$. Since there exists at least one~$\lambda^\star$ that achieves~$D^\star$~(Prop.~\ref{T:strong_duality}), $\calD$ is not empty. Notice that for~$\lambda_k \notin \calD$, we have that~$d(\lambda_{k}) - D^\star + \eta G^2 + \epsilon < 0$. By the towering property, we therefore obtain from~\eqref{E:lambda_bound0} that
\begin{equation}\label{E:lambda_not_d}
    \E [\|\lambda_{k+1} - \lambda^\star\|^2 \mid \lambda_k \notin \calD] <
    	\E [\|\lambda_{k} - \lambda^\star\|^2 \mid \lambda_k \notin \calD]
    	\text{,}
\end{equation}
since~$\eta > 0$.

To bound the case when~$\lambda_k \in \calD$ we use the strictly feasible candidate~$\mu^\dagger$ from Assumption~\ref{A:strict_feasibility}. Indeed, recall that~$\KL(\mu^\dagger \| \pi) \leq C$ and~$E_{\mu^\dagger} [g_i] \leq -\delta < 0$ for all~$i$. From the definition of the dual function~\eqref{E:dual_function}, we obtain that
\begin{equation}
	d(\lambda) \leq L(\mu^\dagger,\lambda) \leq C - \norm{\lambda}_1 \delta
		\text{,}
\end{equation}
where we used the fact that~$\lambda \in \setR^I_+$ to write~$\sum_i \lambda_i = \norm{\lambda}_1$. Hence, it follows that
\begin{equation}\label{E:lambda_d_bound}
    \norm{\lambda}_1 \leq B_0 \triangleq \frac{C - D^\star + \eta G^2 + \epsilon}{\delta}
    	\text{,} \quad \text{for all } \lambda \in \calD
    	\text{.}
\end{equation}
Using the fact that~$\norm{z}^2 \leq \norm{z}_1^2$ for all~$z$ and that~$\Phi^\star \subset \calD$, we immediately obtain that~$\norm{\lambda - \lambda^\star}^2 \leq 2 B_0^2$ for all~$\lambda \in \calD$. Using the towering property and the fact that~$D^\star \geq d(\lambda_{k})$ and~$\epsilon \leq \eta G^2$ yields
\begin{equation}\label{E:lambda_d}
    \E [\|\lambda_{k+1} - \lambda^\star\|^2 \mid \lambda_k \in \calD] \leq
    	2 B_0^2 + 2 \eta \epsilon + \eta^2 G^2 \leq B^2
    	\text{.}
\end{equation}

To conclude, we write
\begin{align*}
    \E [\|\lambda_{k+1} - \lambda^\star\|^2] &=
    \E [\|\lambda_{k+1} - \lambda^\star\|^2 \mid \lambda_k \in \calD]
    	\Pr [\lambda_{k} \in \calD]
    \\
    {}&+ \E [\|\lambda_{k+1} - \lambda^\star\|^2 \mid \lambda_k \notin \calD]
    	\Pr [\lambda_{k} \notin \calD]
    	\text{.}
\end{align*}
Using~\eqref{E:lambda_not_d} and~\eqref{E:lambda_d} then yields
\begin{align*}
    \E [\|\lambda_{k+1} - \lambda^\star\|^2] &\leq B_1^2 \Pr [\lambda_{k} \in \calD]
    + \E [\|\lambda_{k} - \lambda^\star\|^2 \mid \lambda_{k} \notin \calD]
    	\Pr [\lambda_{k} \notin \calD]
    \\
    {}&\leq \max(B^2, \E [\|\lambda_{k} - \lambda^\star\|^2 \mid \lambda_{k} \notin \calD])
    	\text{.}
\end{align*}
Since both~\eqref{E:lambda_not_d} and~\eqref{E:lambda_d} holds independently of~$\lambda_{k+1}$, we can also write
\begin{equation*}
    \E [\|\lambda_{k} - \lambda^\star\|^2 \mid \lambda_k \notin \calD] \leq
    	\max(B^2, \E [\|\lambda_{k-1} - \lambda^\star\|^2 \mid \lambda_{k-1} \notin \calD])
\end{equation*}
Applying these relations recursively, we obtain that
\begin{equation*}
    \E [\|\lambda_{k+1} - \lambda^\star\|^2] \leq
    	\max(B^2, \|\lambda_{0} - \lambda^\star\|^2) = \max(B_1^2, \|\lambda^\star\|^2)
    	\text{.}
\end{equation*}
Noticing from~\eqref{E:lambda_d_bound} that since~$\lambda^\star \in \calD$ we have~$\norm{\lambda^\star}^2 \leq B_0^2 < B^2$ then concludes the proof.
\end{proof}

%%%%%%%%%%%%%%%%%%%%%%%%%%%%%%%%%%%%%%%%%%%%%%%%%%%%%%%%%%%%%%%%%%%%%%%%
%%%%%%%%%%%%%%%%%%%%%%%%%%%%%%%%%%%%%%%%%%%%%%%%%%%%%%%%%%%%%%%%%%%%%%%%
%%% COMPLEMENTARY SLACKNESS
%%%%%%%%%%%%%%%%%%%%%%%%%%%%%%%%%%%%%%%%%%%%%%%%%%%%%%%%%%%%%%%%%%%%%%%%
%%%%%%%%%%%%%%%%%%%%%%%%%%%%%%%%%%%%%%%%%%%%%%%%%%%%%%%%%%%%%%%%%%%%%%%%
\begin{proof}[Proof of Lemma~\ref{T:complementary_slackness}]
To bound~\eqref{E:complementary_slackness}, we once again use the non-expansiveness of the projection to obtain
\begin{equation*}
	\Vert \lambda_{k+1} \Vert^2 \leq \Vert \lambda_k \Vert^2 + \eta^2 \Vert g(\xi_k) \Vert^2
			+ 2 \eta \lambda_k^\top g(\xi_k)
			\text{.}
\end{equation*}
Taking the expectation and using the fact that~$\E_{\mu}[\norm{g}^2] \leq G^2$, we get
\begin{equation*}
	\E [ \Vert \lambda_{k+1} \Vert^2 ] \leq
		\E [\Vert \lambda_k \Vert^2] + \eta^2 G^2
			+ 2 \eta \E[ \lambda_k^\top g(\xi_k) ]
			\text{.}
\end{equation*}
Applying this relation recursively from~$K$ and using the fact that~$\lambda_0 = 0$~(deterministic) yields
\begin{equation*}
	\E [\norm{\lambda_{K}}^2] \leq K\eta^2 G^2
		+ 2 \eta \sum_{k = 0}^{K-1} \E [\lambda_k^\top g(\xi_k)]
		\text{.}
\end{equation*}
Since~$\E [\norm{\lambda_{K}}^2] \geq 0$, we can divide by~$2 \eta K$ to obtain the desired result.
\end{proof}

\newpage
\section{Applications}
	\label{X:applications}
% !TEX root=2024_camera_ready_neurips.tex

In this section, we provide further details on the example applications described in~\Cref{S:csmp} as well as additional results from the experiments in~\Cref{S:experiments}. In these experimen$ $ts, we start all chains at zero~(unless stated otherwise) and use different step-sizes for each of the updates in steps~\ref{E:pdlmc_x}--\ref{E:pdlmc_nu} from \Cref{L:pdlmc}. We refer to them as~$\eta_x$, $\eta_\lambda$, and~$\eta_\nu$. In contrast, we do not use diminishing step-sizes.

\subsection{Sampling from convex sets}
\label{X:support}

\begin{table}[b]
\centering
\caption{Mean and variance estimates}
	\label{F:moments}
\begin{tabular}{ccccc}
\hline
 & True mean & Proj.~LMC & Mirror~LMC & PD-LMC \\
\hline
1D truncated Gaussian & 1.510 & 1.508 & 1.470 & 1.488 \\
2D truncated Gaussian  & $[0.368, 0.368]$ & $[0.611, 0.610]$ &
	$[0.312,0.418]$ & $[0.446, 0.444]$ \\
\hline
\end{tabular}
\end{table}

We are interested in sampling from target distribution
\begin{equation}\label{E:constrained_pi}
    \pi^o(x) \propto e^{-f(x)} \indicator(x \in \calC)
    	\text{,}
\end{equation}
for some closed, convex set~$\calC \subset \setR^d$. Several methods have been developed to tackle this problem, based on projections~\cite{Bubeck18s, Lamperski21p}, mirror maps~\cite{Hsieh18m, Ahn21e, Kook22s}, and barriers~\cite{Sharrock23l, Noble23u}. Here, we consider a constrained sampling approach based on~\eqref{P:constrained_sampling} instead.

To do so, note that sampling from~\eqref{E:constrained_pi} is equivalent to sampling from
\begin{prob*}
	\mu^\star \in \argmin_{\mu \in \calP_2(\calC)}& &&\KL(\mu \| \pi)
\end{prob*}
for the unconstrained~$\pi \propto e^{-f} \in \spprobmeas$. Note that this is exactly~\eqref{P:support_sampling}. Now let~$\calC$ be described by the intersection of the~$0$-sublevel sets of convex functions~$\{s_i\}_{i=1,\dots,I}$, i.e.,
\begin{equation*}
    \calC = \bigcap_{i = 1}^I \{x : s_i(x) \leq 0\}
    	\text{.}
\end{equation*}
Such a description is always possible by, e.g., considering the distance function $d(x,\mathcal{C}) = \inf_{y \in \mathcal{C}} \| x - y \|_2$, which is convex~(since~$\mathcal{C}$ is convex) and for which~$\{x : d(x,\mathcal{C}) \leq 0\} = \mathcal{C}$~(since~$\mathcal{C}$ is closed). Immediately, we see that solving~\eqref{P:support_sampling} is equivalent to solving
\begin{prob}\label{P:support_sampling_eq}
    \min_{\mu \in \spprobmeas}& &&\KL(\mu \| \pi)
    \\
    \text{subject to}& &&\E_{x\sim \mu} \!\big[ [ s_i(x) ]_+ \big] \leq 0
    	\text{,}\ \ i = 1,\dots,I
    \text{,}
\end{prob}
where~$[z]_{+} = \max(0,z)$. Note that~\eqref{P:support_sampling_eq} has the same form as~\eqref{P:constrained_sampling}.

To see why this is the case, consider without loss of generality that~$i = 1$. Since $[s_i(x)]_{+} \geq 0$ for all $x$ by definition, it immediately holds that
\begin{equation*}
    \mathcal{C} = \{x : s(x) \leq 0\} = \{x : [s(x)]_{+} \leq 0\} = \{x : [s(x)]_{+} = 0\}
    	\text{.}
\end{equation*}
By the monotonicity of Lebesgue integration, we obtain that the feasibility set of~\eqref{P:support_sampling_eq} is
\begin{equation}\label{E:support_sampling_feas}
\begin{aligned}
    \mathcal{F} &= \Big\{
    	\mu \in \spprobmeas : \mathbb{E}_{x\sim\mu}\big[ [s(x)]_{+} \big] \leq 0
    \Big\}
    \\
    {}&= \Big\{
    	\mu \in \spprobmeas : \mathbb{E}_{x\sim\mu}\big[ [s(x)]_{+} \big] = 0
    \Big\}
    \\
    {}&= \Big\{
       	\mu \in \spprobmeas : [s(x)]_{+} = 0 \text{,}\ \ \mu\text{-a.e.}
    \Big\}
    	\text{.}
\end{aligned}
\end{equation}
In other words, the feasibility set of~\eqref{P:support_sampling_eq} is in fact~$\mathcal{F} = \{ \mu \in \spprobmeas : \mu(\mathcal{C}) = 1 \} = \calP_2(\calC)$.

\begin{figure}[t]
	\begin{minipage}[t]{0.48\columnwidth}
		\vspace{0pt}
		\centering
		\includegraphics[width=\linewidth]{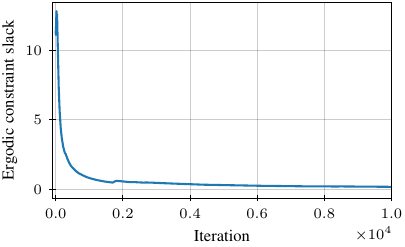}

		{\small (a)}
	\end{minipage}
	\hfill
	\begin{minipage}[t]{0.48\columnwidth}
		\vspace{0pt}
		\centering
		\includegraphics[width=\linewidth]{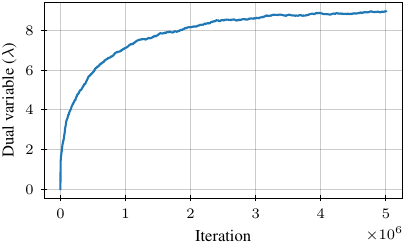}

		{\small (b)}
	\end{minipage}
	\caption{One-dimensional truncated Gaussian sampling: (a)~Ergodic average of the constraint function~(slack) and (b)~Evolution of the dual variable~$\lambda$.}
	\label{F:1d_gaussian_slacks}
\end{figure}

\begin{figure}[t]
	\begin{minipage}[t]{0.48\columnwidth}
		\vspace{0pt}
		\centering
		\includegraphics[width=\linewidth]{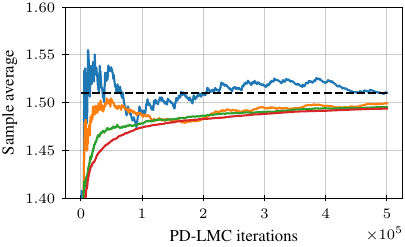}

		{\small (a)}
	\end{minipage}
	\hfill
	\begin{minipage}[t]{0.48\columnwidth}
		\vspace{0pt}
		\centering
		\includegraphics[width=\linewidth]{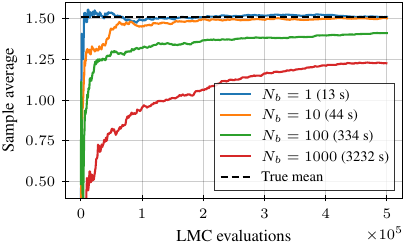}

		{\small (b)}
	\end{minipage}

	\caption{The effect of the mini-batch size~$N_b$ on \pdlmc for sampling from a 1D truncated Gaussian: Estimated mean vs.\ (a)~iteration and (b)~LMC evaluations.}
	\label{F:1d_gaussian_vs_N}
\end{figure}

To illustrate the use of~\eqref{P:support_sampling_eq}, consider the one-dimensional truncated Gaussian sampling problem from~\Cref{S:experiments}. Namely, we wish to sample from a standard Gaussian distribution~$\calN(0,1)$ truncated to~$\calC = [1,3]$. In the language of~\eqref{P:support_sampling_eq}, we take~$f(x) = x^2/2$~(i.e., $\pi \propto e^{-x^2/2}$) and~$s_i(x) = (x-1)(x-3)$. In order to satisfy the assumptions of our convergence guarantees~(particularly~\ref{A:strict_feasibility}), we leave some slack in the constraints by considering~$\E_{\mu} [ [ s_i(x) ]_+ ] \leq 0.005$. This also helps with the numerical stability of the algorithm. Fig.~\ref{F:1d_truncated_gaussian} shows histograms of the samples obtained using~\pdlmc~(\Cref{L:pdlmc} with~$\eta_x = \eta_\lambda = 10^{-3}$), the projected LMC~(Proj.\ LMC, $\eta = 10^{-3}$) from~\cite{Bubeck18s}, and the mirror LMC~($\eta = 10^{-3}$) from~\cite{Ahn20e}. In all cases, we take~$5 \times 10^6$ samples and keep only the second half.

Observe that, due to the projection step, Proj.\ LMC generates an excess of samples close to the boundaries. In fact, it generates over three times more than required. This leads to an underestimation of the distribution mean and variance~(Table~\ref{F:moments}). A similar effect is observed for mirror LMC. In contrast, \pdlmc provides a more accurate estimate. Nevertheless, \pdlmc imposes constraints on the distribution~$\mu$ rather than its samples. Indeed, note from~\eqref{E:support_sampling_feas} that its feasibility set is such that samples belong to~$\calC$ \emph{almost surely}, which still allows for a (potentially infinite) number of realizations outside of~$\calC$. Yet, though \pdlmc is not an \emph{interior-point method}, Theorems~\ref{T:main_convex}--\ref{T:main_lsi} show that excursions of iterates outside of~$\calC$ become less frequent as the algorithm progresses. We can confirm this is indeed the case in Fig.~\ref{F:1d_gaussian_slacks}a, which shows the ergodic average of~$[s(x)]_{+}$ along the samples of \pdlmc. Note that it almost vanishes by iteration~$10^{4}$ even though the dual variable~$\lambda$ only begins to stabilize later~(Fig.~\ref{F:1d_gaussian_slacks}b). This is not surprising given that it is guaranteed by Prop.~\ref{T:ergodic_feasibility}. In fact, only roughly~$2\%$ of the samples displayed in Fig.~\ref{F:1d_truncated_gaussian} are not in~$\calC$.

Before proceeding, we examine whether the convergence of \pdlmc could be improved by averaging more than one LMC samples when updating the dual variables, i.e., using mini-batches in steps~\ref{E:pdlmc_lambda}--\ref{E:pdlmc_nu} of \Cref{L:pdlmc}. Mini-batches will reduce the variance of the dual updates, although at the cost of additional LMC steps per iteration. To compensate for this fact, Fig.~\ref{F:1d_gaussian_vs_N}b displays the evolution of the ergodic average of \pdlmc samples as a function of the number of LMC evaluations rather than the number of iterations~(as in Fig.~\ref{F:1d_gaussian_vs_N}a). Notice that, in this application, increasing the number of LMC samples~$N_b$ does not lead to faster convergence. This illustrates that, though mini-batches could be useful in some applications~(particularly when the constraints are \emph{not} convex, as in Section~\ref{S:lsi_convergence}), it is not immediate that their benefits always outweigh the increased computational cost. Oftentimes, using a single LMC sample is more than enough. It is worth noting that using~\pdlmc with a large mini-batch~$N_b$ was suggested in~\cite{Liu21s} to approximate the expectation needed by their continuous-time algorithm. As we see here, this is neither necessary nor always beneficial.

\begin{figure}[t]
\begin{minipage}[t]{0.47\columnwidth}
\vspace{0pt}
\centering
\includegraphics[width=\linewidth]{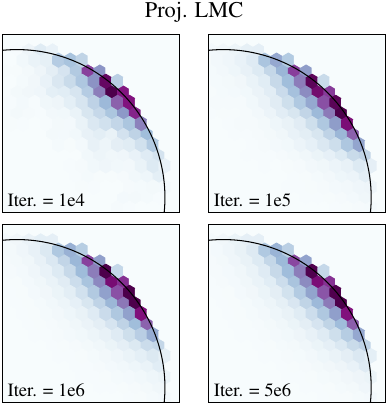}

{\small (a)}
\end{minipage}
\hfill
\begin{minipage}[t]{0.47\columnwidth}
\vspace{0pt}
\centering
\includegraphics[width=\linewidth]{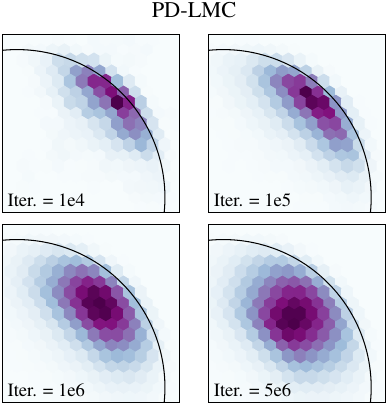}

{\small (b)}
\end{minipage}
\caption{Density estimate of two-dimensional truncated Gaussian using samples from (a)~Proj.~LMC and (b)~\pdlmc.}
	\label{F:2d_gaussian_hist}
\end{figure}

We now turn to a more challenging, two-dimensional applications. We seek to sample from a Gaussian located at~$[2, 2]$ with covariance~$\diag([1,1])$ restricted to an~$\ell_2$-norm unit ball~(Fig.~\ref{F:1d_truncated_gaussian}). Specifically, we use~$f(x) = \norm{x}^2/2$~(i.e., $\pi \propto e^{-\norm{x}^2/2}$) and~$s_i(x) = \norm{x}^2 - 1$. Once again, we leave some slack to the algorithm by taking the constraint in~\eqref{P:support_sampling_eq} to be~$\E_{\mu} [ [ s_i(x) ]_+ ] \leq 0.001$. For reference, we also display samples from the real distribution obtained using rejection sampling.

This is indeed a challenging problem. The boundary of~$\calC$ is 2 standard deviations away from the mean of the target distribution, which means that samples from the target~$\pi$ are extremely scarce this region. Indeed, using the untruncated Gaussian as a proposal for rejection sampling yields an acceptance rate of approximately~$1\%$. The strong push of the potential~$f$ towards the exterior of~$\calC$ leads Proj.~LMC~($\eta = 10^{-3}$; the last~$10^6$ samples are used after running $5 \times 10^6$~iterations) to be now even more concentrated around its boundary. In fact, almost~$25\%$ of its samples are in an annular region of radius~$[0.999,1)$, where only~$0.14\%$ of the samples should be according to rejection sampling. Indeed, note from Fig.~\ref{F:2d_gaussian_hist}a, that even as iterations advance, the samples of Proj.~LMC continue concentrate close to the boundary.

In contrast, \pdlmc~($\eta_x = 10^{-3}$ and~$\eta_\lambda = 2 \times 10^{-1}$) only place~$1.8\%$ of its samples outside of~$\calC$, mostly during the initial phase of the algorithm~(Fig.~\ref{F:2d_gaussian_hist}a). Indeed, the average of the constraint function along samples from \pdlmc essentially vanishes around iteration~$5 \times 10^4$. Achieving this requires larger values of~$\lambda$~(on the order of~$250$, Fig.~\ref{F:2d_gaussian_slacks}b) compared to the one-dimensional case~(Fig.~\ref{F:1d_gaussian_slacks}b). This reflects the difficulty of constraining samples to~$\calC$ in this instance, a statement formalized in the perturbation results of Prop.~\ref{T:strong_duality}(iv). Due to the more amenable numerical properties of the barrier function, mirror LMC~($\eta = 10^{-3}$) performs well without concentrating samples on the boundary~($0.15\%$ of the samples on the annular region of radius~$[0.999,1)$).

\begin{figure}[t]
	\begin{minipage}[t]{0.48\columnwidth}
		\vspace{0pt}
		\centering
		\includegraphics[width=\linewidth]{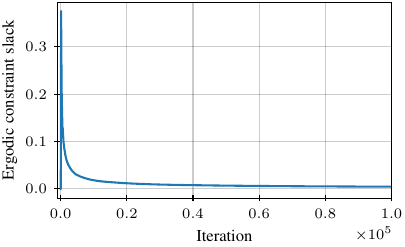}

		{\small (a)}
	\end{minipage}
	\hfill
	\begin{minipage}[t]{0.48\columnwidth}
		\vspace{0pt}
		\centering
		\includegraphics[width=\linewidth]{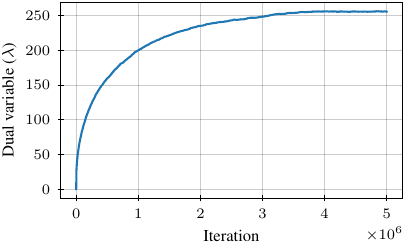}

		{\small (b)}
	\end{minipage}
	\caption{Two-dimensional truncated Gaussian sampling: (a)~Ergodic average of the constraint function~(slack) and (b)~Evolution of the dual variable~$\lambda$.}
	\label{F:2d_gaussian_slacks}
\end{figure}

\newpage

\subsection{Rate-constrained Bayesian models}
\label{X:fairness}

While rate constraints have become popular in ML due to their central role in fairness~(see, e.g., \cite{Kearns18p}), they find applications in robust control~\cite{Schwarm99c, Li00r, Borrelli17p} and to express other requirements on the confusion matrix, such as precision, recall, and false negatives~\cite{Cotter19o}. For illustration, we consider here the problem of fairness in Bayesian classification.

Let~$q(x;\theta) = \Pr[y = 1 \vert x,\theta]$ denote the probability of a positive outcome~($y = 1$) given the observed features~$x \in \calX$ and the parameters~$\theta$ distributed according to the posterior~$\pi$. This posterior is determined, e.g., by some arbitrary Bayesian model based on observations~$\{(x_n,y_n)\}_{n=1,\dots,N}$. Hence, $\E_{\theta \sim \pi} [q(x;\theta)]$ denotes the likelihood of a positive outcome for~$x$. Consider now a protected group, represented by a measurable subset~$\calG \subset \calX$, for which we wish to enforce statistical parity. In other words, we would like the prevalence of positive outcomes to be roughly the same as that of the whole population. Thus, we now want to sample not from the posterior~$\pi$, but from a close-by distribution of parameters~$\theta$ that ensures this parity. Explicitly, for some tolerance~$\delta > 0$, we want to sample from
\begin{prob}\label{P:fairness_average}
	\mu^\star \in \min_{\mu \in \spprobmeas}& &&\KL(\mu \| \pi)
	\\
	\subjectto& &&\E_{x, \theta \sim \mu}
		\!\big[ q(x ; \theta) \mid \calG \big] \geq
	\E_{x, \theta \sim \mu}
		\!\big[ q(x ; \theta) \big] - \delta
	\text{.}
\end{prob}
Naturally, we can account for more than one protected by incorporating additional constraints.

\begin{figure}[t]
\begin{minipage}[t]{0.48\columnwidth}
\vspace{0pt}
\centering
\includegraphics[width=\linewidth]{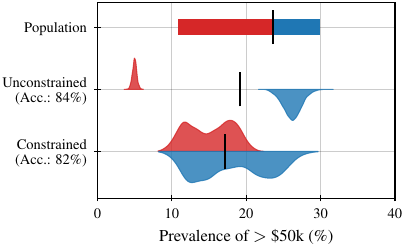}

{\small (a)}
\end{minipage}
\hfill
\begin{minipage}[t]{0.48\columnwidth}
\vspace{0pt}
\centering
\includegraphics[width=\linewidth]{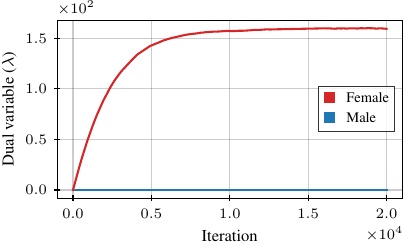}

{\small (b)}
\end{minipage}
\caption{Fair Bayesian logistic regression on the Adult dataset: (a)~prevalence of positive outputs and (b)~dual variables.}
	\label{F:fairness}
\end{figure}

In our experiments, we take~$\pi$ to be the posterior of a Bayesian logistic regression model for the Adult dataset from~\cite{Dua17u}~(details on data pre-processing can be found in~\cite{Chamon20p}). The~$N = 32561$ data points in the training set are composed of~$d = 62$ socio-economical features~($x \in \setR^{d}$, including the intercept) and the goal is to predict whether the individual makes more than~US\$~$50000$ per year~($y \in \{0,1\}$). The posterior is obtained by combining a binomial log-likelihood with independent zero-mean Gaussian (log)priors~($\sigma^2 = 3$) on each parameter of the model, i.e., we consider the potential
\begin{equation}\label{E:fairness_posterior}
    f(\beta) = \sum_{n=1}^N \log (1 + e^{-(2y_n - 1) x_n^\top \theta})
    	+ \sum_{i=0}^d \frac{\theta_i^2}{2 \sigma^2}
    	\text{.}
\end{equation}

We begin by using the LMC algorithm from~\eqref{E:lmc}~(i.e., we impose no constraints) to collect samples of the coefficients~$\theta$ from this posterior~($\eta = 10^{-4}$; the last~$10^4$ samples are used after running $2 \times 10^4$~iterations). We find that, while the probability of positive outputs is~$19.1\%$ across the whole test set, it is~$26.2\%$ among males and~$0.05\%$ among females. Looking at the distribution of this probability over the unconstrained posterior~$\pi$~(Fig.~\ref{F:fairness}a), we see that this behavior goes beyond the mean. The model effectively amplifies the inequality already present in the test set, where the prevalence of positive outputs is~$30.6\%$ among males and~$10.9\%$ among females.

To overcome this disparity, we consider \emph{gender} to be the protected class in~\eqref{P:fairness_average}, constraining both~$\calG_\text{female}$ and~$\calG_\text{male}$. We formulate the constraint of~\eqref{P:fairness_average} using an empirical distribution induced from the data. Explicitly, we consider constraints
\begin{align*}
	\dfrac{1}{\abs{\calG_\text{female}}}
		\sum_{n \in \calG_\text{female}}\E_{\theta \sim \mu}
		\!\big[ q(x_n ; \theta) \big] &\geq
	\dfrac{1}{N}\sum_{n = 1}^N \E_{\theta \sim \mu}
		\!\big[ q(x_n ; \theta) \big] - \delta
	\\
	\dfrac{1}{\abs{\calG_\text{male}}}
		\sum_{n \in \calG_\text{male}}\E_{\theta \sim \mu}
		\!\big[ q(x_n ; \theta) \big] &\geq
	\dfrac{1}{N}\sum_{n = 1}^N \E_{\theta \sim \mu}
		\!\big[ q(x_n ; \theta) \big] - \delta
\end{align*}
where~$\calG_\text{female},\calG_\text{male} \subseteq \{1,\dots,N\}$ partition the data set. For these experiments, we take~$\delta = 0.01$. Using PD-LMC~($\eta_x = 10^{-4}$, $\eta_\lambda = 5 \times 10^{-3}$), we then obtain a new set of samples from the logistic regression parameters~$\theta$ that lead to a prevalence of positive outcomes~(in the test set) of~$17.1\%$ over the whole population, $18.1\%$ for males, and~$15.1\%$ for females. In fact, we notice a substantial overlap between the distributions of this probability across the constrained posterior~$\mu^\star$ for male and female~(Fig.~\ref{F:fairness}a). Additionally, this substantial improvement over the previously observed disparity comes at only a minor reduction in accuracy. Though both distributions change considerably, notice from the value of the dual variables that these changes are completely guided by the \emph{female} group. Indeed, $\lambda_\text{male} = 0$ throughout the execution of~\pdlmc~(Fig.~\ref{F:fairness}b).

Before proceeding, we once again examine the effect of using multiple LMC samples to update the dual variables, i.e., using mini-batches in steps~\ref{E:pdlmc_lambda}--\ref{E:pdlmc_nu} of \Cref{L:pdlmc}. Fig.~\ref{F:fairness_vs_N} shows the distribution of the prevalence of positive predictions~($> \$50$k) for different mini-batch sizes~$N_b$. In all cases, we collect~$2 \times 10^4$ samples, which means that we evaluate~$2N_b\times 10^4$ LMC updates~(step~\ref{E:pdlmc_x} in \Cref{L:pdlmc}). Same as in the 1D truncated Gaussian case, we notice no difference between the resulting distributions. This is to be expected given our results~(Theorem~\ref{T:main_convex}). The computation time, on the other hand, increases considerably with the mini-batch size. Once again, we note that~\pdlmc with a large mini-batch~$N_b$ was used in the experiments of~\cite{Liu21s} to overcome the challenge of computing an expectation in their dual variable updates. In turns out that this computationally intensive modification is not necessary.

\begin{figure}[t]
	\centering
	\includegraphics{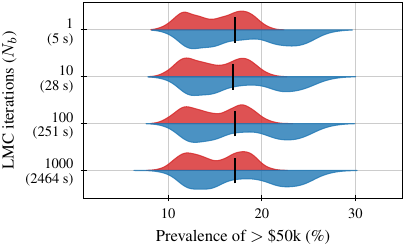}

	\caption{The effect of the mini-batch size~$N_b$ on \pdlmc in fair Bayesian classification.}
	\label{F:fairness_vs_N}
\end{figure}

\subsection{Counterfactual sampling}
\label{X:counterfactual}

Previous applications were primarily interested in \emph{sampling from~$\mu^\star$}, the constrained version of the target distribution~$\pi$. The goal of \emph{counterfactual sampling}, on the other hand, is to \emph{probe} the probabilistic model~$\pi$ by evaluating its compatibility with a set of moment conditions. It is therefore interested not only in~$\mu^\star$, but in how each condition affects the value~$P^\star = \KL(\mu^\star\|\pi)$. We next describe how constrained sampling can be used to tackle this problem.

Let~$\pi$ denote a reference probabilistic model, such as the posterior of the Bayesian logistic model in~\eqref{E:fairness_posterior}. Standard Bayesian hypothesis tests can be used to evaluate the validity of \emph{actual statements} such as ``is it true that~$\E_{x \sim \pi}[g(x)] \leq 0$? or~$\E_{x \sim \pi}[h(x)] = 0$?'' Hence, we could check ``is $\pi$ more likely to yield a positive output for a male than a female individual?''~(from the distributions under \emph{Unconstrained} in Fig.~\ref{F:fairness}a, this is probably the case). In contrast, counterfactual sampling is concerned with \emph{counterfactual statements} such as ``how would the world have been if~$\E[g(x)] \leq 0$?'' In the case of fairness, ``how would the model have been if it predicted positive outcomes more equitably?''

\begin{figure}[t]
\begin{minipage}[t]{0.48\columnwidth}
\vspace{0pt}
\centering
\includegraphics{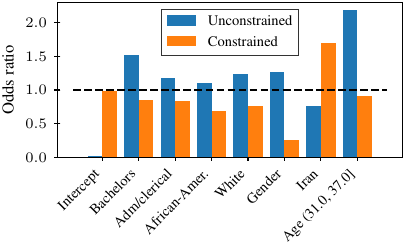}

{\small (a)}
\end{minipage}
\hfill
\begin{minipage}[t]{0.48\columnwidth}
\vspace{0pt}
\centering
\includegraphics[width=\linewidth]{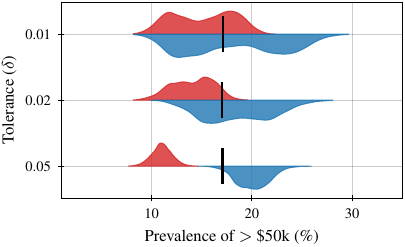}

{\small (b)}
\end{minipage}
\caption{Counterfactual sampling in fair Bayesian logistic regression: (a)~Selected (mean) coefficients and (b)~different tolerances.}
	\label{F:fairness_cf}
\end{figure}

Constrained sampling evaluates these counterfactual statements in two ways. First, by providing realizations of this alternative, counterfactual world~($\mu^\star$). For instance, we can inspect the difference between realizations of~$\pi$, obtained using the traditional LMC in~\eqref{E:lmc}, and~$\mu^\star$, obtained using \pdlmc~(\Cref{L:pdlmc}). In Fig.~\ref{F:fairness_cf}a, we show the mean of some coefficients of the Bayesian logistic models from Section~\ref{X:fairness}. Notice that it is not enough to normalize the Intercept and reduce the advantage given to males~(\emph{Female} is encoded as $\text{Male} = 0$). This alternative model also compensates for other correlated features, such as education~(\emph{Bachelor}), profession~(\emph{Adm/clerical}), and age.

Second, constrained sampling evaluates the ``compatibility'' of each counterfactual condition~(constraint) world with the reference model. While \Cref{L:pdlmc} does not evaluate~$P^\star$ explicitly, it provides measures of its sensitivity to perturbations of the constraints: the Lagrange multipliers~$(\lambda^\star,\nu^\star)$. Indeed, recall from Prop.~\ref{T:strong_duality} that
\begin{equation*}
    \mu^\star \propto \pi \times \left( \prod_{i = 1}^I e^{-\lambda_i^\star g_i} \right)
    	\times \left( \prod_{j = 1}^J e^{- \nu_j^\star h_j} \right)
    	\text{.}
\end{equation*}
Hence, $(\lambda^\star,\nu^\star)$ describe the magnitude of \emph{tilts} needed for the reference model~$\pi$ to satisfy the conditions~$\E[g(x)] \leq 0$ or~$\E[h(x)] = 0$. This relation is made explicit in Prop.~\ref{T:strong_duality}(iv).

Concretely, observe that the dual variable relative to the constraint on the male subgroup is always zero~(Fig.~\ref{F:fairness}b). This implies that~$\pi$ is fully compatible with the condition
\begin{equation*}
    \dfrac{1}{\abs{\calG_\text{male}}}\sum_{n \in \calG_\text{male}}\E_{\theta \sim \mu}
   		\!\big[ q(x_n ; \theta) \big] \geq
   	\dfrac{1}{N}\sum_{n = 1}^N \E_{\theta \sim \mu}
   		\!\big[ q(x_n ; \theta) \big] - \delta
   		\text{,}
\end{equation*}
i.e., the statement ``the model predicts positive outcomes for males on average at least as much as for the whole population.'' In contrast, accommodating statistical parity for females requires considerable deviations from the reference model~$\pi$~($\lambda_\text{female}^\star \approx 160$). Without recalculating~$\mu^\star$, we therefore know that even small changes in the tolerance~$\delta$ for the female constraint would substantially change the distribution of outcomes. This statement is confirmed in Fig.~\ref{F:fairness_cf}b. Notice that this is only possible due to the primal-dual nature of \pdlmc.

\subsubsection{Stock market model}

\begin{figure}[t]
\centering
\includegraphics{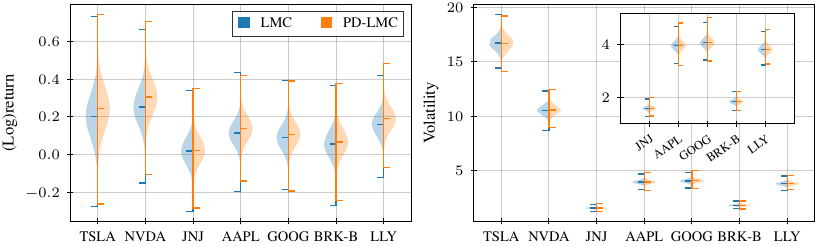}
\caption{Counterfactual sampling of the stock market under a $20\%$ average return increase on each stock: mean~($\rho$) and variance~[$\diag(\Sigma)$] distributions.}
	\label{F:stock_return_violin}
\end{figure}

\begin{figure}[t]

\begin{minipage}[b]{0.48\columnwidth}
	\centering
	\includegraphics[width=\columnwidth]{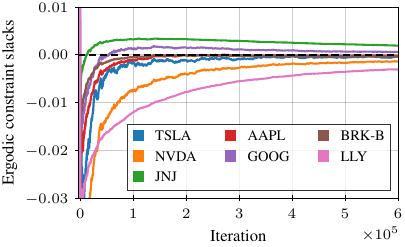}

	\hspace{11mm}{\small (a)}
\end{minipage}
\hfill
\begin{minipage}[b]{0.48\columnwidth}
	\centering
	\includegraphics[width=\columnwidth]{all_stocks_return_nu}

	\hspace{11mm}{\small (b)}
\end{minipage}

\caption{Counterfactual sampling of the stock market under a $20\%$ average return increase on each stock: (a)~ergodic average of constraint functions~(slacks) and (b)~dual variables.}
	\label{F:stock_return_lambda}
\end{figure}

Counterfactual analyses based on the dual variables become more powerful as the number of constraints grow. To see this is the case, consider the \emph{Bayesian stock market} model introduced in \Cref{S:csmp}. Here, $\pi$ denotes the posterior model for the (log-)returns of~$7$ assets~(\texttt{TSLA}, \texttt{NVDA}, \texttt{JNJ}, \texttt{AAPL}, \texttt{GOOG}, \texttt{BRK-B}, and~\texttt{LLY}). The dataset is composed of~$5$ years of adjusted closing prices for a total of~$1260$ points per asset. The posterior is obtained by combining a Gaussian likelihood~$\calN(\rho,\Sigma)$ with Gaussian prior on the mean~$\rho$~[$\calN(0,3 I)$] and an inverse Wishart prior on the covariance~$\Sigma$~(with parameters~$\Psi = I$ and~$\nu = 12$). Using the LMC algorithm~($\eta = 10^{-3}$; the last~$3 \times 10^5$ samples are used after running $6 \times 10^5$~iterations), we collect samples from this posterior and estimate the mean and variance of the (log-)return for each stock~(Table~\ref{F:stocks}). In this case, $\Sigma$ is initialized to~$10 \times I$.

We might now be interested in understanding what the market would look like if all stocks were to incur a~$20\%$ increase in their average (log-)returns. To do so, we use~\pdlmc~($\eta_x = 10^{-3}$ and~$\eta_\nu = 6 \times 10^{-3}$) to solve the following constrained sampling problem
\begin{prob}\label{P:stock_return}
 \minimize_{\mu \in \spprobmeas}& &&\KL(\mu \| \pi)
 \\
 \text{subject to}& &&\E_{(\rho,\Sigma) \sim \mu} \!\big[ \rho_i \big] = 1.2 \bar{\rho}_i
 	\text{,} \quad i = 1,\dots,7
 	\text{,}
\end{prob}
where~$\bar{\rho}_i$ is the mean (log-)return of the $i$-th stock shown in Table~\ref{F:stocks}. The distribution of~$\rho$ and the diagonal of~$\Sigma$ are compared to those from the unconstrained model in Fig.~\ref{F:stock_return_violin}. Notice that, though we only impose constraints on the average returns~$\rho$, we also see small changes in the stock volatilities.

\begin{figure}[t]
\centering
\includegraphics{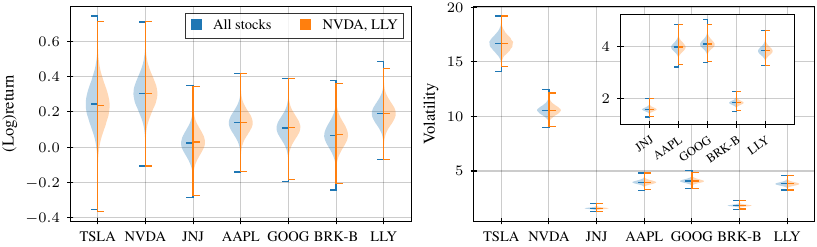}
\caption{Distribution of mean~($\rho$) and variance~[$\diag(\Sigma)$] of the stock market constrained to a $20\%$ average return increase on all stocks vs.\ only \texttt{LLY} and~\texttt{NVDA}.}
	\label{F:stock_return_violin_vs}
\end{figure}

\begin{table}[t]
\centering
\caption{Mean $\pm$ standard deviation of mean (log-)returns~($\rho$).}
\label{F:stocks}
\begin{tabular}{l|c|c|c}
\hline
& Reference model~($\pi$)
& $+20\%$ (all stocks)
& $+20\%$ (\texttt{LLY} and \texttt{NVDA})
\\\hline
\texttt{TSLA}  & $0.20 \pm 0.12$ & $0.24 \pm 0.12$ & $0.23 \pm 0.12$ \\
\texttt{NVDA}  & $0.25 \pm 0.09$ & $0.31 \pm 0.09$ & $0.31 \pm 0.09$ \\
\texttt{JNJ}   & $0.02 \pm 0.07$ & $0.02 \pm 0.07$ & $0.03 \pm 0.07$ \\
\texttt{AAPL}  & $0.11 \pm 0.06$ & $0.14 \pm 0.06$ & $0.14 \pm 0.06$ \\
\texttt{GOOG}  & $0.09 \pm 0.06$ & $0.11 \pm 0.06$ & $0.11 \pm 0.06$ \\
\texttt{BRK-B} & $0.06 \pm 0.07$ & $0.07 \pm 0.07$ & $0.07 \pm 0.07$ \\
\texttt{LLY}   & $0.16 \pm 0.06$ & $0.19 \pm 0.06$ & $0.19 \pm 0.06$ \\\hline
\end{tabular}
\end{table}

Inspecting the dual variables~(Fig.~\ref{F:stock_return_lambda}b), we notice that three dual variables are essentially zero~(\texttt{TSLA}, \texttt{AAPL}, and~\texttt{BRK-B}). This means that their increased returns are completely dictated by those of other stocks. Said differently, their returns increasing~$20\%$ is consistent with the reference model~$\pi$ \emph{conditioned} on the other returns increasing. Proceeding, two stocks have negative dual variables~(\texttt{LLY} and~\texttt{NVDA}). This implies that bringing their constraints \emph{down to~$\bar{\rho}_i$} would yield a \emph{decrease} in~$P^\star$~(distance to the reference model~$\pi$). This is in contrast to~\texttt{JNJ} and~\texttt{GOOG}, whose positive~$\lambda$'s imply that we should should \emph{increase} their returns to reduce~$P^\star$. Indeed, by inspecting the ergodic slacks~(Fig.~\ref{F:stock_return_lambda}a) we see that all stocks approach zero~(i.e., feasibility), but that~\texttt{JNJ} and~\texttt{GOOG} do so from above. This behavior is expected according to Prop.~\ref{T:ergodic_feasibility}.

These observations show two things. First, that an increase in the average returns of~\texttt{LLY} and~\texttt{NVDA} is enough to drive up the returns of all other stocks. In fact, it leads to essentially the same distribution as if we had required the increase to affect all stocks~(Fig.~\ref{F:stock_return_violin_vs}). Second, that the increase we would see in \texttt{JNJ} (and to a lesser extent~\texttt{GOOG}) would actually be larger than~$20\%$. Once again, we reach these conclusion without any additional computation. Their accuracy can be corroborated by the results in Table~\ref{F:stocks}.

\newpage
\section{Example application of Prop.~\ref{T:strong_duality}}
	\label{X:gaussian_duality}
% !TEX root=2024_camera_ready_neurips.tex

In this section we illustrate the result in Prop.~\ref{T:strong_duality}, i.e., we show that given solutions~$(\lambda^\star,\nu^\star)$ of~\eqref{P:dual}, the constrained sampling problem~\eqref{P:constrained_sampling} reduces to sampling from~$\mu_{\lambda^\star\nu^\star} \propto e^{-U(\cdot,\lambda^\star,\nu^\star)}$.

Indeed, consider a standard Gaussian target, i.e., $\pi \propto e^{-\|x\|^2/2}$, and the linear moment constraint~$\mathbb{E}[x] = b$, for~$b \in \mathbb{R}^d$. This can be cast as~\eqref{P:constrained_sampling} with~$f(x) = \|x\|^2/2$ and~$h(x) = b - x$~(no inequality constraints, i.e., $I = 0$). Clearly, the solution of~\eqref{P:constrained_sampling} in this case is~$\mu^\star = \mathcal{N}(b,I)$, i.e., a Gaussian distribution with mean~$b$. What Prop~\ref{T:strong_duality} claims is that rather than directly solving~\eqref{P:constrained_sampling}, we can solve~\eqref{P:dual} to obtain a Lagrange multiplier~$\nu^\star$ such that~$\mu^\star = \mu_{\nu^\star}$ for~$\mu_{\nu}$ defined as in~\eqref{E:lagrangian}.

In this setting, we can see this is the case by doing the computations explicitly. Indeed, we have
\begin{equation*}
	\mu_{\nu}(x) \propto
		\pi(x) e^{-\nu^\top h(x)}
		= \exp\bigg[ -\dfrac{\|x\|^2}{2} -\nu^\top (b-x) \bigg].
\end{equation*}
Completing the squares, we then obtain
\begin{equation}\label{E:gaussian_mu_nu}
	\mu_{\nu}(x) \propto \exp\bigg[ -\dfrac{\|x - \nu\|^2}{2}
		+ \dfrac{\|\nu\|^2}{2} - \nu^\top b \bigg].
\end{equation}
To compute the Lagrange multiplier~$\nu^\star$, notice from the definition of the dual function in~\eqref{E:dual_function} that the dual problem~\eqref{P:dual} is in fact a ratio of normalizing factors. Explicitly,
\begin{equation*}
	\nu^\star = \argmax_{\nu \in \setR^d}\ %
		\log\bigg( \dfrac{\int \pi(x)dx}{\int \mu_\nu(x) dx} \bigg)
		= \argmax_{\nu \in \setR^d}\ \log \Bigg[
			\dfrac{\int \exp\Big( -\frac{\|x\|^2}{2} \Big) dx
		}{
			\exp(\|\nu\|^2/2 -\nu^\top b)
			\int \exp\Big( -\frac{\|x - \nu\|^2}{2} \Big) dx
		}
		\Bigg].
\end{equation*}
Immediately, we obtain
\begin{equation}\label{E:gaussian_nu_star}
	\nu^\star = \argmax_{\nu \in \setR^d}\ -\|\nu\|^2/2 +\nu^\top b = b.
\end{equation}
Note that the dual problem is a concave program, as is always the case~\cite{Bertsekas09c}. To conclude, we can combine~\eqref{E:gaussian_mu_nu} and~\eqref{E:gaussian_nu_star} to get
\begin{equation*}
	\mu_{\nu^\star}(x) \Bigm\vert_{\nu^\star = b} \propto
		\exp\bigg( -\dfrac{\|x - b\|^2}{2} - \dfrac{\|b\|^2}{2} \bigg)
	\Rightarrow
	\mu_{\nu^\star} = \mathcal{N}(b,I) = \mu^\star.
\end{equation*}

The main advantage of using Algorithms~\ref{L:pdlmc} and~\ref{L:dlmc} is that we do not need to determine the Lagrange multipliers~$(\lambda^\star,\nu^\star)$ to then sample from~$\mu_{\lambda^\star\nu^\star} = \mu^\star$. Indeed, Theorems~\ref{T:main_convex} and~\ref{T:main_lsi} show that these stochastic primal-dual methods do both things simultaneously, without explicitly evaluating any expectations.

\newpage
\section*{NeurIPS Paper Checklist}

\begin{enumerate}

\item {\bf Claims}
    \item[] Question: Do the main claims made in the abstract and introduction accurately reflect the paper's contributions and scope?
    \item[] Answer: \answerYes{}
    %\answerTODO{} % Replace by \answerYes{}, \answerNo{}, or \answerNA{}.
    \item[] Justification: the abstract and introduction list our contributions, i.e. the theoretical and experimental study of Primal-Dual Langevin Monte Carlo.
   % \justificationTODO{}
    \item[] Guidelines:
    \begin{itemize}
        \item The answer NA means that the abstract and introduction do not include the claims made in the paper.
        \item The abstract and/or introduction should clearly state the claims made, including the contributions made in the paper and important assumptions and limitations. A No or NA answer to this question will not be perceived well by the reviewers.
        \item The claims made should match theoretical and experimental results, and reflect how much the results can be expected to generalize to other settings.
        \item It is fine to include aspirational goals as motivation as long as it is clear that these goals are not attained by the paper.
    \end{itemize}

\item {\bf Limitations}
    \item[] Question: Does the paper discuss the limitations of the work performed by the authors?
    \item[] Answer: \answerYes{}% Replace by \answerYes{}, \answerNo{}, or \answerNA{}.
    \item[] Justification: we discuss the limitations of our theoretical results in the conclusion.
    %\justificationTODO{}
    \item[] Guidelines:
    \begin{itemize}
        \item The answer NA means that the paper has no limitation while the answer No means that the paper has limitations, but those are not discussed in the paper.
        \item The authors are encouraged to create a separate "Limitations" section in their paper.
        \item The paper should point out any strong assumptions and how robust the results are to violations of these assumptions (e.g., independence assumptions, noiseless settings, model well-specification, asymptotic approximations only holding locally). The authors should reflect on how these assumptions might be violated in practice and what the implications would be.
        \item The authors should reflect on the scope of the claims made, e.g., if the approach was only tested on a few datasets or with a few runs. In general, empirical results often depend on implicit assumptions, which should be articulated.
        \item The authors should reflect on the factors that influence the performance of the approach. For example, a facial recognition algorithm may perform poorly when image resolution is low or images are taken in low lighting. Or a speech-to-text system might not be used reliably to provide closed captions for online lectures because it fails to handle technical jargon.
        \item The authors should discuss the computational efficiency of the proposed algorithms and how they scale with dataset size.
        \item If applicable, the authors should discuss possible limitations of their approach to address problems of privacy and fairness.
        \item While the authors might fear that complete honesty about limitations might be used by reviewers as grounds for rejection, a worse outcome might be that reviewers discover limitations that aren't acknowledged in the paper. The authors should use their best judgment and recognize that individual actions in favor of transparency play an important role in developing norms that preserve the integrity of the community. Reviewers will be specifically instructed to not penalize honesty concerning limitations.
    \end{itemize}

\item {\bf Theory Assumptions and Proofs}
    \item[] Question: For each theoretical result, does the paper provide the full set of assumptions and a complete (and correct) proof?
    \item[] Answer: \answerYes{}%\answerTODO{} % Replace by \answerYes{}, \answerNo{}, or \answerNA{}.
    \item[] Justification: For each theoretical result, we worked with justified assumptions, making all the dependencies of the problem clear. We provide clear and detailed proofs in the appendix.
    %\justificationTODO{}
    \item[] Guidelines:
    \begin{itemize}
        \item The answer NA means that the paper does not include theoretical results.
        \item All the theorems, formulas, and proofs in the paper should be numbered and cross-referenced.
        \item All assumptions should be clearly stated or referenced in the statement of any theorems.
        \item The proofs can either appear in the main paper or the supplemental material, but if they appear in the supplemental material, the authors are encouraged to provide a short proof sketch to provide intuition.
        \item Inversely, any informal proof provided in the core of the paper should be complemented by formal proofs provided in appendix or supplemental material.
        \item Theorems and Lemmas that the proof relies upon should be properly referenced.
    \end{itemize}

    \item {\bf Experimental Result Reproducibility}
    \item[] Question: Does the paper fully disclose all the information needed to reproduce the main experimental results of the paper to the extent that it affects the main claims and/or conclusions of the paper (regardless of whether the code and data are provided or not)?
    \item[] Answer: \answerYes{}%\answerTODO{} % Replace by \answerYes{}, \answerNo{}, or \answerNA{}.
    \item[] Justification: For each experimental result, we provide the detailed setting.
    %\justificationTODO{}
    \item[] Guidelines:
    \begin{itemize}
        \item The answer NA means that the paper does not include experiments.
        \item If the paper includes experiments, a No answer to this question will not be perceived well by the reviewers: Making the paper reproducible is important, regardless of whether the code and data are provided or not.
        \item If the contribution is a dataset and/or model, the authors should describe the steps taken to make their results reproducible or verifiable.
        \item Depending on the contribution, reproducibility can be accomplished in various ways. For example, if the contribution is a novel architecture, describing the architecture fully might suffice, or if the contribution is a specific model and empirical evaluation, it may be necessary to either make it possible for others to replicate the model with the same dataset, or provide access to the model. In general. releasing code and data is often one good way to accomplish this, but reproducibility can also be provided via detailed instructions for how to replicate the results, access to a hosted model (e.g., in the case of a large language model), releasing of a model checkpoint, or other means that are appropriate to the research performed.
        \item While NeurIPS does not require releasing code, the conference does require all submissions to provide some reasonable avenue for reproducibility, which may depend on the nature of the contribution. For example
        \begin{enumerate}
            \item If the contribution is primarily a new algorithm, the paper should make it clear how to reproduce that algorithm.
            \item If the contribution is primarily a new model architecture, the paper should describe the architecture clearly and fully.
            \item If the contribution is a new model (e.g., a large language model), then there should either be a way to access this model for reproducing the results or a way to reproduce the model (e.g., with an open-source dataset or instructions for how to construct the dataset).
            \item We recognize that reproducibility may be tricky in some cases, in which case authors are welcome to describe the particular way they provide for reproducibility. In the case of closed-source models, it may be that access to the model is limited in some way (e.g., to registered users), but it should be possible for other researchers to have some path to reproducing or verifying the results.
        \end{enumerate}
    \end{itemize}

\item {\bf Open access to data and code}
    \item[] Question: Does the paper provide open access to the data and code, with sufficient instructions to faithfully reproduce the main experimental results, as described in supplemental material?
    \item[] Answer: \answerYes{}%\answerTODO{} % Replace by \answerYes{}, \answerNo{}, or \answerNA{}.
    \item[] Justification: We will provide a link to a public github repository with a Python code. %\justificationTODO{}
    \item[] Guidelines:
    \begin{itemize}
        \item The answer NA means that paper does not include experiments requiring code.
        \item Please see the NeurIPS code and data submission guidelines (\url{https://nips.cc/public/guides/CodeSubmissionPolicy}) for more details.
        \item While we encourage the release of code and data, we understand that this might not be possible, so “No” is an acceptable answer. Papers cannot be rejected simply for not including code, unless this is central to the contribution (e.g., for a new open-source benchmark).
        \item The instructions should contain the exact command and environment needed to run to reproduce the results. See the NeurIPS code and data submission guidelines (\url{https://nips.cc/public/guides/CodeSubmissionPolicy}) for more details.
        \item The authors should provide instructions on data access and preparation, including how to access the raw data, preprocessed data, intermediate data, and generated data, etc.
        \item The authors should provide scripts to reproduce all experimental results for the new proposed method and baselines. If only a subset of experiments are reproducible, they should state which ones are omitted from the script and why.
        \item At submission time, to preserve anonymity, the authors should release anonymized versions (if applicable).
        \item Providing as much information as possible in supplemental material (appended to the paper) is recommended, but including URLs to data and code is permitted.
    \end{itemize}

\item {\bf Experimental Setting/Details}
    \item[] Question: Does the paper specify all the training and test details (e.g., data splits, hyperparameters, how they were chosen, type of optimizer, etc.) necessary to understand the results?
    \item[] Answer: \answerYes{}%\answerTODO{} % Replace by \answerYes{}, \answerNo{}, or \answerNA{}.
    \item[] Justification: the algorithms and settings of the experiments are rather simple, and detailed in the submission. % we specify all details for all our experiments, report the variability of our experiments over a set of different runs. %\justificationTODO{}
    \item[] Guidelines:
    \begin{itemize}
        \item The answer NA means that the paper does not include experiments.
        \item The experimental setting should be presented in the core of the paper to a level of detail that is necessary to appreciate the results and make sense of them.
        \item The full details can be provided either with the code, in appendix, or as supplemental material.
    \end{itemize}

\item {\bf Experiment Statistical Significance}
    \item[] Question: Does the paper report error bars suitably and correctly defined or other appropriate information about the statistical significance of the experiments?
    \item[] Answer: \answerYes{} % Replace by \answerYes{}, \answerNo{}, or \answerNA{}.
    \item[] Justification: We give all precisions relative to the the evaluation of our models, either on simulated data or real data, including precising train/set proportions for instance.
    \item[] Guidelines:
    \begin{itemize}
        \item The answer NA means that the paper does not include experiments.
        \item The authors should answer "Yes" if the results are accompanied by error bars, confidence intervals, or statistical significance tests, at least for the experiments that support the main claims of the paper.
        \item The factors of variability that the error bars are capturing should be clearly stated (for example, train/test split, initialization, random drawing of some parameter, or overall run with given experimental conditions).
        \item The method for calculating the error bars should be explained (closed form formula, call to a library function, bootstrap, etc.)
        \item The assumptions made should be given (e.g., Normally distributed errors).
        \item It should be clear whether the error bar is the standard deviation or the standard error of the mean.
        \item It is OK to report 1-sigma error bars, but one should state it. The authors should preferably report a 2-sigma error bar than state that they have a 96\% CI, if the hypothesis of Normality of errors is not verified.
        \item For asymmetric distributions, the authors should be careful not to show in tables or figures symmetric error bars that would yield results that are out of range (e.g. negative error rates).
        \item If error bars are reported in tables or plots, The authors should explain in the text how they were calculated and reference the corresponding figures or tables in the text.
    \end{itemize}

\item {\bf Experiments Compute Resources}
    \item[] Question: For each experiment, does the paper provide sufficient information on the computer resources (type of compute workers, memory, time of execution) needed to reproduce the experiments?
    \item[] Answer: \answerYes{}%\answerTODO{} % Replace by \answerYes{}, \answerNo{}, or \answerNA{}.
    \item[] Justification: Our experiments run on a standard laptop in a few minutes, and are illustrative.
    %\justificationTODO{}
    \item[] Guidelines:
    \begin{itemize}
        \item The answer NA means that the paper does not include experiments.
        \item The paper should indicate the type of compute workers CPU or GPU, internal cluster, or cloud provider, including relevant memory and storage.
        \item The paper should provide the amount of compute required for each of the individual experimental runs as well as estimate the total compute.
        \item The paper should disclose whether the full research project required more compute than the experiments reported in the paper (e.g., preliminary or failed experiments that didn't make it into the paper).
    \end{itemize}

\item {\bf Code Of Ethics}
    \item[] Question: Does the research conducted in the paper conform, in every respect, with the NeurIPS Code of Ethics \url{https://neurips.cc/public/EthicsGuidelines}?
    \item[] Answer: \answerYes{}%\answerTODO{} % Replace by \answerYes{}, \answerNo{}, or \answerNA{}.
    \item[] Justification: In our opinion, this paper does not address societal impact directly, and consider the generic problem of optimization over measures and sampling.
    %\justificationTODO{}
    \item[] Guidelines:
    \begin{itemize}
        \item The answer NA means that the authors have not reviewed the NeurIPS Code of Ethics.
        \item If the authors answer No, they should explain the special circumstances that require a deviation from the Code of Ethics.
        \item The authors should make sure to preserve anonymity (e.g., if there is a special consideration due to laws or regulations in their jurisdiction).
    \end{itemize}

\item {\bf Broader Impacts}
    \item[] Question: Does the paper discuss both potential positive societal impacts and negative societal impacts of the work performed?
    \item[] Answer:  \answerNA{}%\answerTODO{} % Replace by \answerYes{}, \answerNo{}, or \answerNA{}.
    \item[] Justification: In our opinion the paper does not have direct positive or negative social impact. %\justificationTODO{}
    \item[] Guidelines:
    \begin{itemize}
        \item The answer NA means that there is no societal impact of the work performed.
        \item If the authors answer NA or No, they should explain why their work has no societal impact or why the paper does not address societal impact.
        \item Examples of negative societal impacts include potential malicious or unintended uses (e.g., disinformation, generating fake profiles, surveillance), fairness considerations (e.g., deployment of technologies that could make decisions that unfairly impact specific groups), privacy considerations, and security considerations.
        \item The conference expects that many papers will be foundational research and not tied to particular applications, let alone deployments. However, if there is a direct path to any negative applications, the authors should point it out. For example, it is legitimate to point out that an improvement in the quality of generative models could be used to generate deepfakes for disinformation. On the other hand, it is not needed to point out that a generic algorithm for optimizing neural networks could enable people to train models that generate Deepfakes faster.
        \item The authors should consider possible harms that could arise when the technology is being used as intended and functioning correctly, harms that could arise when the technology is being used as intended but gives incorrect results, and harms following from (intentional or unintentional) misuse of the technology.
        \item If there are negative societal impacts, the authors could also discuss possible mitigation strategies (e.g., gated release of models, providing defenses in addition to attacks, mechanisms for monitoring misuse, mechanisms to monitor how a system learns from feedback over time, improving the efficiency and accessibility of ML).
    \end{itemize}

\item {\bf Safeguards}
    \item[] Question: Does the paper describe safeguards that have been put in place for responsible release of data or models that have a high risk for misuse (e.g., pretrained language models, image generators, or scraped datasets)?
    \item[] Answer:  \answerNA{}%\answerTODO{} % Replace by \answerYes{}, \answerNo{}, or \answerNA{}.
    \item[] Justification: Our paper does not present such risks. %\justificationTODO{}
    \item[] Guidelines:
    \begin{itemize}
        \item The answer NA means that the paper poses no such risks.
        \item Released models that have a high risk for misuse or dual-use should be released with necessary safeguards to allow for controlled use of the model, for example by requiring that users adhere to usage guidelines or restrictions to access the model or implementing safety filters.
        \item Datasets that have been scraped from the Internet could pose safety risks. The authors should describe how they avoided releasing unsafe images.
        \item We recognize that providing effective safeguards is challenging, and many papers do not require this, but we encourage authors to take this into account and make a best faith effort.
    \end{itemize}

\item {\bf Licenses for existing assets}
    \item[] Question: Are the creators or original owners of assets (e.g., code, data, models), used in the paper, properly credited and are the license and terms of use explicitly mentioned and properly respected?
    \item[] Answer: \answerNA{}%\answerTODO{} % Replace by \answerYes{}, \answerNo{}, or \answerNA{}.
    \item[] Justification: our paper does not use existing assets
    %\justificationTODO{}
    \item[] Guidelines:
    \begin{itemize}
        \item The answer NA means that the paper does not use existing assets.
        \item The authors should cite the original paper that produced the code package or dataset.
        \item The authors should state which version of the asset is used and, if possible, include a URL.
        \item The name of the license (e.g., CC-BY 4.0) should be included for each asset.
        \item For scraped data from a particular source (e.g., website), the copyright and terms of service of that source should be provided.
        \item If assets are released, the license, copyright information, and terms of use in the package should be provided. For popular datasets, \url{paperswithcode.com/datasets} has curated licenses for some datasets. Their licensing guide can help determine the license of a dataset.
        \item For existing datasets that are re-packaged, both the original license and the license of the derived asset (if it has changed) should be provided.
        \item If this information is not available online, the authors are encouraged to reach out to the asset's creators.
    \end{itemize}

\item {\bf New Assets}
    \item[] Question: Are new assets introduced in the paper well documented and is the documentation provided alongside the assets?
    \item[] Answer: \answerYes{}%\answerTODO{} % Replace by \answerYes{}, \answerNo{}, or \answerNA{}.
    \item[]  Justification: we only have experiments in Python that will be made public. %\justificationTODO{}
    \item[] Guidelines:
    \begin{itemize}
        \item The answer NA means that the paper does not release new assets.
        \item Researchers should communicate the details of the dataset/code/model as part of their submissions via structured templates. This includes details about training, license, limitations, etc.
        \item The paper should discuss whether and how consent was obtained from people whose asset is used.
        \item At submission time, remember to anonymize your assets (if applicable). You can either create an anonymized URL or include an anonymized zip file.
    \end{itemize}

\item {\bf Crowdsourcing and Research with Human Subjects}
    \item[] Question: For crowdsourcing experiments and research with human subjects, does the paper include the full text of instructions given to participants and screenshots, if applicable, as well as details about compensation (if any)?
    \item[] Answer: \answerNA{}%\answerTODO{} % Replace by \answerYes{}, \answerNo{}, or \answerNA{}.
    \item[] Justification: our experiments do not involve crowdsourcing. %\justificationTODO{}
    \item[] Guidelines:
    \begin{itemize}
        \item The answer NA means that the paper does not involve crowdsourcing nor research with human subjects.
        \item Including this information in the supplemental material is fine, but if the main contribution of the paper involves human subjects, then as much detail as possible should be included in the main paper.
        \item According to the NeurIPS Code of Ethics, workers involved in data collection, curation, or other labor should be paid at least the minimum wage in the country of the data collector.
    \end{itemize}

\item {\bf Institutional Review Board (IRB) Approvals or Equivalent for Research with Human Subjects}
    \item[] Question: Does the paper describe potential risks incurred by study participants, whether such risks were disclosed to the subjects, and whether Institutional Review Board (IRB) approvals (or an equivalent approval/review based on the requirements of your country or institution) were obtained?
    \item[] Answer:  \answerNA{}%\answerTODO{} % Replace by \answerYes{}, \answerNo{}, or \answerNA{}.
    \item[] Justification: our study do not involve risk for participants. %\justificationTODO{}
    \item[] Guidelines:
    \begin{itemize}
        \item The answer NA means that the paper does not involve crowdsourcing nor research with human subjects.
        \item Depending on the country in which research is conducted, IRB approval (or equivalent) may be required for any human subjects research. If you obtained IRB approval, you should clearly state this in the paper.
        \item We recognize that the procedures for this may vary significantly between institutions and locations, and we expect authors to adhere to the NeurIPS Code of Ethics and the guidelines for their institution.
        \item For initial submissions, do not include any information that would break anonymity (if applicable), such as the institution conducting the review.
    \end{itemize}

\end{enumerate}

\end{document}